\documentclass[acmsmall]{acmart}

\AtBeginDocument{%
  \providecommand\BibTeX{{%
    \normalfont B\kern-0.5em{\scshape i\kern-0.25em b}\kern-0.8em\TeX}}}

\setcopyright{acmcopyright}
\copyrightyear{2022}
\acmYear{2022}





\usepackage{afterpage}
\usepackage{makecell}
\usepackage{tabulary}
\usepackage{xcolor}
\usepackage{colortbl}
\usepackage{prettyref}
\usepackage{framed}
\usepackage{booktabs}       
\usepackage[utf8]{inputenc} 
\usepackage[T1]{fontenc}    
\usepackage{hyperref}       
\usepackage{url}            
\usepackage{booktabs}       
\usepackage{amsfonts}       
\usepackage{nicefrac}       
\usepackage{microtype}      
\usepackage{xcolor}         
\usepackage{amsthm}
\usepackage{amsmath}

\usepackage{amssymb}
\usepackage{algorithm}
\usepackage{algpseudocode}
\usepackage{graphicx}
\usepackage{ulem}
\usepackage{subfigure}
\usepackage{booktabs} 
\usepackage{bm}
\usepackage{hyperref}
\usepackage{cleveref}
\usepackage{enumitem}
\usepackage{mathtools}
\usepackage{array}
\usepackage{lipsum}
\usepackage{multirow}
\usepackage{makecell}
\usepackage{xcolor}
\usepackage{tikz}
\usepackage{etoolbox}
\newcommand{\pref}[1]{\prettyref{#1}}

\newcommand{\savehyperref}[2]{\texorpdfstring{\hyperref[#1]{#2}}{#2}}
\newrefformat{eq}{\savehyperref{#1}{Eq. \textup{(\ref*{#1})}}}
\newrefformat{eqn}{\savehyperref{#1}{Eq.~\textup{(\ref*{#1})}}}
\newrefformat{lem}{\savehyperref{#1}{Lemma~\ref*{#1}}}
\newrefformat{assump}{\savehyperref{#1}{Assumption~\ref*{#1}}}
\newrefformat{defi}{\savehyperref{#1}{Definition~\ref*{#1}}}\newrefformat{tab}{\savehyperref{#1}{Table~\ref*{#1}}}
\newrefformat{lemma}{\savehyperref{#1}{Lemma~\ref*{#1}}}
\newrefformat{line}{\savehyperref{#1}{Line~\ref*{#1}}}
\newrefformat{thm}{\savehyperref{#1}{Theorem~\ref*{#1}}}
\newrefformat{corr}{\savehyperref{#1}{Corollary~\ref*{#1}}}
\newrefformat{cor}{\savehyperref{#1}{Corollary~\ref*{#1}}}
\newrefformat{sec}{\savehyperref{#1}{Section~\ref*{#1}}}
\newrefformat{app}{\savehyperref{#1}{Appendix~\ref*{#1}}}
\newrefformat{ex}{\savehyperref{#1}{Example~\ref*{#1}}}
\newrefformat{fig}{\savehyperref{#1}{Figure~\ref*{#1}}}
\newrefformat{alg}{\savehyperref{#1}{Algorithm~\ref*{#1}}}
\newrefformat{rem}{\savehyperref{#1}{Remark~\ref*{#1}}}
\newrefformat{conj}{\savehyperref{#1}{Conjecture~\ref*{#1}}}
\newrefformat{prop}{\savehyperref{#1}{Proposition~\ref*{#1}}}
\newrefformat{proto}{\savehyperref{#1}{Protocol~\ref*{#1}}}
\newrefformat{prob}{\savehyperref{#1}{Problem~\ref*{#1}}}
\newrefformat{claim}{\savehyperref{#1}{Claim~\ref*{#1}}}
\newrefformat{que}{\savehyperref{#1}{Question~\ref*{#1}}}
\newrefformat{op}{\savehyperref{#1}{Open Problem~\ref*{#1}}}
\newrefformat{fn}{\savehyperref{#1}{Footnote~\ref*{#1}}}
\newrefformat{property}{\savehyperref{#1}{Property~\ref*{#1}}}

\usepackage{xspace}
\usepackage{amsmath}
\usepackage{amsthm}
\usepackage{bbm}
\usepackage{mathrsfs}
\usepackage{bm}
\usepackage{makecell}
\usepackage{tabulary}


\newcommand{\calD}{\mathcal{D}}
\newcommand{\calL}{\mathcal{L}}

\newcommand{\calS}{\mathcal{S}}

\newcommand{\E}{\mathbb{E}}

\newcommand{\what}{\widehat}

\newcommand{\wtilde}{\widetilde}

\newcommand{\inner}[1]{\langle#1\rangle}

\newcommand{\blue}[1]{{\color{black}#1}}

\newtheorem*{theorem*}{Theorem}
\newtheorem{theorem}{Theorem}

\newtheorem{assumption}{Assumption}[section]
\newtheorem{lemma}[theorem]{Lemma}

\newtheorem*{lemma*}{Lemma}
\newtheorem{definition}{Definition}[section]

\theoremstyle{definition}

\newcommand*{\circled}[1]{\lower.7ex\hbox{\tikz\draw (0pt, 0pt)%
    circle (.5em) node {\makebox[1em][c]{\small #1}};}}
\robustify{\circled}

\newtheorem{thm}{Theorem}[section]

\newtheorem{lem}[thm]{Lemma}

\usepackage{afterpage}
\usepackage{prettyref}
\usepackage{pgfplots}
\usepackage{caption}
\usepackage{amsmath,amssymb,amsfonts}
\usepgfplotslibrary{fillbetween}
\usepackage{framed}
\newrefformat{eq}{\savehyperref{#1}{Eq. \textup{(\ref*{#1})}}}
\newrefformat{eqn}{\savehyperref{#1}{Eq.~\textup{(\ref*{#1})}}}
\newrefformat{lem}{\savehyperref{#1}{Lemma~\ref*{#1}}}
\newrefformat{lemma}{\savehyperref{#1}{Lemma~\ref*{#1}}}
\newrefformat{defi}{\savehyperref{#1}{Definition~\ref*{#1}}}
\newrefformat{line}{\savehyperref{#1}{Line~\ref*{#1}}}
\newrefformat{thm}{\savehyperref{#1}{Theorem~\ref*{#1}}}
\newrefformat{corr}{\savehyperref{#1}{Corollary~\ref*{#1}}}
\newrefformat{cor}{\savehyperref{#1}{Corollary~\ref*{#1}}}
\newrefformat{sec}{\savehyperref{#1}{Section~\ref*{#1}}}
\newrefformat{table}{\savehyperref{#1}{Table~\ref*{#1}}}
\newrefformat{app}{\savehyperref{#1}{Appendix~\ref*{#1}}}
\newrefformat{assum}{\savehyperref{#1}{Assumption~\ref*{#1}}}
\newrefformat{ex}{\savehyperref{#1}{Example~\ref*{#1}}}
\newrefformat{fig}{\savehyperref{#1}{Figure~\ref*{#1}}}
\newrefformat{alg}{\savehyperref{#1}{Algorithm~\ref*{#1}}}
\newrefformat{rem}{\savehyperref{#1}{Remark~\ref*{#1}}}
\newrefformat{conj}{\savehyperref{#1}{Conjecture~\ref*{#1}}}
\newrefformat{prop}{\savehyperref{#1}{Proposition~\ref*{#1}}}
\newrefformat{proto}{\savehyperref{#1}{Protocol~\ref*{#1}}}
\newrefformat{prob}{\savehyperref{#1}{Problem~\ref*{#1}}}
\newrefformat{claim}{\savehyperref{#1}{Claim~\ref*{#1}}}
\newrefformat{que}{\savehyperref{#1}{Question~\ref*{#1}}}
\newrefformat{op}{\savehyperref{#1}{Open Problem~\ref*{#1}}}
\newrefformat{fn}{\savehyperref{#1}{Footnote~\ref*{#1}}}

\usepackage{xspace}
\usepackage{amsmath}
\usepackage{amsthm}
\usepackage{bbm}
\usepackage{mathrsfs}
\usepackage{amssymb}
\usepackage{bm}
\usepackage{makecell}
\usepackage{tabulary}

\newcommand{\calC}{\mathcal{C}}

\theoremstyle{definition}

\begin{document}

\title{A Game-theoretic Framework for Privacy-preserving Federated Learning}

%
\author{Xiaojin Zhang}
\email{xiaojinzhang@hust.edu.cn}
\authornote{National Engineering Research Center for Big Data Technology and System, Services Computing Technology and System Lab, Cluster and Grid Computing Lab, School of Computer Science and Technology, Huazhong University of Science and Technology, Wuhan, 430074, China}
\affiliation{%
  \institution{Huazhong University of Science and Technology}
  \streetaddress{Wuhan}
  \country{China}
}

\author{Lixin Fan}
\email{lixinfan@webank.com}
\affiliation{%
  \institution{Webank}
  \city{Shenzhen}
  \country{China}
}

\author{Siwei Wang}
\email{siweiwang@microsoft.com}
\affiliation{%
  \institution{MSRA}
  \city{Beijing}
  \country{China}
}

\author{Wenjie Li}
\email{liwj20@mails.tsinghua.edu.cn}
\affiliation{%
  \institution{Tsinghua University}
  \city{Beijing}
  \country{China}
}


\author{Kai Chen}
\email{kaichen@cse.ust.hk}
\affiliation{%
  \institution{Hong Kong University of Science and Technology}
  \streetaddress{Clear Water Bay}
  \country{China}
}


\author{Qiang Yang}
\email{qyang@cse.ust.hk}
\authornote{Corresponding author}
\affiliation{%
  \institution{WeBank and Hong Kong University of Science and Technology}
  \country{China}
}






%
\renewcommand{\shortauthors}{Trovato and Tobin, et al.}

\begin{abstract}

  In federated learning, benign participants aim to optimize a global model collaboratively. However, the risk of \textit{privacy leakage} cannot be ignored in the presence of \textit{semi-honest} adversaries. Existing research has focused either on designing protection mechanisms or on inventing attacking mechanisms. While the battle between defenders and attackers seems never-ending, we are concerned with one critical question: is it possible to prevent potential attacks in advance? To address this, we propose the first game-theoretic framework that considers both FL defenders and attackers in terms of their respective payoffs, which include computational costs, FL model utilities, and privacy leakage risks. We name this game the federated learning privacy game (FLPG), in which neither defenders nor attackers are aware of all participants' payoffs.
  To handle the \textit{incomplete information} inherent in this situation, we propose associating the FLPG with an \textit{oracle} that has two primary responsibilities. First, the oracle provides lower and upper bounds of the payoffs for the players. Second, the oracle acts as a correlation device, privately providing suggested actions to each player. With this novel framework, we analyze the optimal strategies of defenders and attackers. Furthermore, we derive and demonstrate conditions under which the attacker, as a rational decision-maker, should always follow the oracle's suggestion \textit{not to attack}.
\end{abstract}


\begin{CCSXML}
<ccs2012>
 <concept>
  <concept_id>10010520.10010553.10010562</concept_id>
  <concept_desc>Computer systems organization~Embedded systems</concept_desc>
  <concept_significance>500</concept_significance>
 </concept>
 <concept>
  <concept_id>10010520.10010575.10010755</concept_id>
  <concept_desc>Computer systems organization~Redundancy</concept_desc>
  <concept_significance>300</concept_significance>
 </concept>
 <concept>
  <concept_id>10010520.10010553.10010554</concept_id>
  <concept_desc>Computer systems organization~Robotics</concept_desc>
  <concept_significance>100</concept_significance>
 </concept>
 <concept>
  <concept_id>10003033.10003083.10003095</concept_id>
  <concept_desc>Networks~Network reliability</concept_desc>
  <concept_significance>100</concept_significance>
 </concept>
</ccs2012>
\end{CCSXML}

\ccsdesc[500]{Security and privacy}
\ccsdesc[500]{Computing methodologies~\text{Artificial Intelligence}}
\ccsdesc[100]{Machine Learning}
\ccsdesc[100]{Distributed methodologies}

\keywords{federated learning, privacy, utility, efficiency, trade-off,divergence, optimization}

\maketitle
\section{Introduction}

In federated learning, clients aim to optimize a global model collaboratively, but risk potential leakage of private data to other parties in the face of semi-honest adversaries. Existing research has focused either on designing protection mechanisms or on inventing attacking mechanisms. In terms of privacy preservation, \textit{Randomization} \cite{geyer2017differentially,truex2020ldp,abadi2016deep}, \textit{Sparsity} \cite{shokri2015privacy,gupta2018distributed,thapa2020splitfed}, and \textit{Homomorphic Encryption (HE)} \cite{gentry2009fully,batchCryp} have been adopted to protect private training data from being disclosed. For attacking mechanisms, semi-honest attacking is one of the most common approaches, including \textit{gradient inversion attack} \cite{zhu2019dlg,zhu2020deep,geiping2020inverting,zhao2020idlg}, \textit{model inversion attack} \cite{gupta2018distributed,fredrikson2015model,he2019model,gu2021federated,thapa2020splitfed}, and \textit{GAN-based attack} \cite{hitaj2017deep,wang2019beyond}. While there seems to be a never-ending battle between various protection and attacking mechanisms (e.g., see recent survey articles \cite{lyu2020threats}), one cannot help but wonder whether it is possible to eliminate adversaries in advance or under what conditions an attacker has no incentive to launch an attack at all. Investigations along this line of thinking lead us to the game-theoretic framework presented in this article.

In our endeavor to investigate the equilibrium of a security game involving both federated learning benign participants and adversaries, we follow the typical federated learning setting and assume the existence of \textit{semi-honest} adversaries who aim to spy on private data of other parties. The payoff of the defender is measured comprehensively using model utility as the gain and privacy leakage and protection cost as the loss, while the payoff of the attacker is measured using the obtained privacy as the gain and attacking cost as the loss. It is worth noting that we take utility, privacy, and cost into account comprehensively, motivated by the theoretical analysis for the trade-off between these important quantities \cite{zhang2022trading}. The primary challenge is the absence of a suitable method for assessing the benefits of the defender, such as model performance, privacy leakage, and protection costs, and a suitable method for assessing the benefits of the attacker, such as privacy gain and attacking costs. It is not an easy task to measure their own payoffs, which depend not only on model utilities but also on privacy leakage via their strategies. However, it is unrealistic to assume that exact payoffs of adversaries are known a priori once the strategies are fixed.

To deal with these challenges, we propose a novel Federated Learning Privacy Game (FLPG) in which players, including defenders and attackers, only have access to \textit{incomplete information} about payoffs of other players as well as their own payoffs. This lack of information necessitates the assumption that each player is a rational decision-maker aiming at maximizing his/her estimated payoff based on the available information. Moreover, we propose to associate every FLPG with an \textit{oracle} whose responsibility is to provide players with \textit{lower and upper bounds of payoffs} for various choices of strategies. The oracle also plays the role of a \textit{correlation device} providing players with suggested actions based on the correlation probability of strategies among different players. This novel setting of FLPG facilitates the definition of Robust and Correlated equilibria and the analysis of conditions under which these equilibria exist.

In summary, our main contributions are as follows:

\begin{itemize}
\item We propose a novel game-theoretical framework that considers defenders and attackers in federated learning and estimates lower and upper bounds of defenders and attackers' payoffs, considering model utility, privacy leakage, and costs.

\item We measure privacy leakage from a novel perspective (please refer to \pref{eq: defi_privacy_leakage_mt}), by evaluating the gap between the data estimated by the attacker and the private data. This new definition is easy to quantify, applicable to many scenarios, and of independent research interest.

\item We derive the conditions under which the robust equilibrium is a $0$-equilibrium (\pref{thm: 0-equilibrium}), assuming the adversary is semi-honest and rational. We further derive the circumstances when the robust equilibrium is a $\tau$-equilibrium for $\tau\ge 1$ (\pref{thm: robust_equilibrium}).

\item We further investigate a setting when the oracle provides a private suggestion to each player based on the correlation probability matrix (which is also known to the players and is associated with a cost matrix), and derive the circumstances when the robust and correlated equilibrium for each player is to follow the instructions of the oracle with probability $1$, and meanwhile minimize the cost of the oracle (\pref{thm: kkt_special_solution}).
\end{itemize}

\section{Related Work}

\textbf{Federated Learning} (FL) was proposed by \cite{mcmahan2016federated,konevcny2016federated,konevcny2016federated_new} with the goal of learning machine learning models using datasets collected from multiple devices. \cite{yang2019federated} extended the notion of FL and classified the application scenarios as \textit{horizontal federated learning} (cross-device FL), \textit{vertical federated learning} (cross-silo FL), and \textit{federated transfer learning}. \cite{zhu2019deep,geiping2020inverting} demonstrated that private data could still be reconstructed from unprotected model gradient information by \textbf{semi-honest} attackers, despite the theoretical guarantee of privacy. In this paper, we consider semi-honest adversaries in horizontal FL, i.e., the attackers are honest but might infer the private data of the participants using the released parameter information.

\noindent\textbf{Game theoretic framework in privacy-preserving FL}
The works that utilize a game-theoretic framework in federated learning and take privacy leakage into account include \cite{zhan2020learning} and \cite{wu2020privacy}. \cite{zhan2020learning} investigated the motivation for edge nodes to participate in training in federated learning. \cite{wu2020privacy} formulated privacy attack and defense as a privacy-preserving attack and defense game. They measured the payoff of the defender as the privacy leakage, the payoff of the attacker as the privacy gain, and treated the problem as a zero-sum game. However, these works are fundamentally different from our FLPG setting in three ways. First, they did not consider the realistic setting of incomplete information about payoffs. Second, the given payoffs ignored the cost of the attacker, which is an important factor that influences the attacker's utility. Moreover, they did not comprehensively consider key factors, such as model performance, privacy, and efficiency, for the utility of the defender. Third, they did not investigate how to forestall potential attacks in advance, which is a key concern in our work.






\noindent\textbf{Game with incomplete information}
To address the issue that information in one-stage games is incomplete, distinct approaches were proposed. \cite{aghassi2006robust} proposed a novel concept of equilibrium called "robust-optimization equilibrium", and demonstrated that this type of equilibrium exists for any finite game with bounded payoffs. \cite{harsanyi1967games} assumed that the players' payoffs are sampled from a probability distribution, and extended Nash's results with complete information (\cite{nash1950equilibrium,nash1951non}) to settings with incomplete information.

\noindent\textbf{Correlated Equilibrium}
\cite{aumann1974subjectivity} investigated a game in which a \textit{correlation device} is introduced to model joint probabilities of players' actions. Consequently, the authors proved the existence of a more general equilibrium than Nash equilibrium, known as correlated equilibrium. Building on this line of research, \cite{correia2019nash} proposed a new class of games called \textit{correlated games}, in which each player has the \textit{freedom to follow or not to follow} suggestions made to improve payoffs based on the correlation device. We generalize the correlated equilibrium to the correlated and robust equilibrium. 

\noindent\textbf{Other Related Works}
\cite{kargupta2007game} considered defense and attacking models in the context of multi-party privacy-preserving distributed data mining. \cite{donahue2020model} and \cite{tu2021incentive} explored the motivation for data owners to participate in federated learning from a game-theoretic perspective. \cite{wu2017game} built a game model from the viewpoint of defenders in privacy-preserving data publication and analyzed the existence of Nash equilibrium. 

\section{The Federated Learning Privacy Game (FLPG)}

\begin{table*}[!htp]
\footnotesize
  \centering
  \setlength{\belowcaptionskip}{15pt}
  \caption{Table of Notation}
  \label{table: notation}
    \begin{tabular}{ccc}
    \toprule
    Notation & Meaning & Range\cr
    \midrule
    $D$ & measures the norm of the data & $1$ \cr
    $\Delta_k$ & the protection extent of client $k$ & $\Delta\in [0,1]$\cr
    $C_a$ & the attacking extent of the attacker & $\mathbb{N}$ \cr
    $\mathcal{C}$ & the set of defenders (clients) & $-$ \cr
    \midrule
    $V_{m,k}, V_{p,k}$ & model utility and privacy leakage of client $k$  & $[0,1]$ \cr
    $C(\Delta_k), E(C_a)$ & protection cost and attacking cost & $[0,1]$ \cr
    $y$ & softening constant for normalizing $C_a$ as $E(C_a) = 1 - \frac{1}{(C_a)^y}$ & $y \ge 0$ \cr
    $\eta_{m,k}, \eta_{p,k}, \eta_{c,k}$ & the payoff reference of defender $k$ towards model utility, privacy and protection cost & $[0,1]$ \cr
    $\eta_{p,a}, \eta_{c,a}$ & the payoff reference of the attacker towards privacy gain and attacking cost & $[0,1]$  \cr
    $U_{a}, U_{k}$ & the payoff of the attacker and the defender $k$ & $[0,1]$  \cr
    $\underline U$ and $\overline U$ & the lower and upper bound of a payoff variable & $[0,1]$  \cr
    
    \midrule
    $w_{d,k}, w_{o,k}$ & the protected and original gradient of the model on client $k$ & $\mathbb{R}$\cr
    $s_{t}, s_{o}$ & the recovered (at round $t$) and original private dataset from a client & $\mathbb{R}$\cr
    \midrule
    
    $\mathcal{S}_k$ & A strategy, and the strategy set of player $k$ & $-$ \cr
    $s_k, \mathbf{s}_{-k}$ & The strategy of player $k$, and the strategy vector of all the rest players & $-$ \cr
    $\Psi$ & Denotes for robust operator, such as the infimum operator and the expectation operator & $-$ \cr
    $P_o$ & The correlation matrix of $k$ players & $[0, 1]^{k\times k}$ \cr
    $p_k^y(s), p_k^n(s)$ & The probability of player $k$ replies \textit{\underline y}es or \textit{\underline n}o to follow the instruction $s$ from the oracle & $[0,1]$ \cr
    $\tau$ & a predefined threshold of the attacking extent related to  equilibrium status. & $\mathbb{N}$ \cr
    
    \midrule
    $p$ & measures the regret bound of the optimization algorithm used by the attacker & $0<p<1$\cr
    $c_a, c_b$ & measures Lipschitz & $c_a, c_b > 0$ \cr
    $c_0, c_2$ & measures the regret bound of the optimization algorithm used by the attacker & $c_0,c_2 > 0$ \cr
    \bottomrule
    \end{tabular}
\end{table*}

\blue{
Substantial research efforts have been invested in exploring the structure of privacy breaches and the development of corresponding defensive strategies. This academic focus is manifested in the studies conducted by Zhang et al. (2023) \cite{zhang2023towards}, Du et al. (2012) \cite{du2012privacy}, and a range of other influential works \cite{zhang2022no, zhang2023trading, kang2022framework, zhang2023meta, zhang2023theoretically, asi2023robustness, zhang2023probably}. Building on this foundation, we will now delve into the specifics of the threat model and the mechanisms designed for protection.

\paragraph{Threat Model} 

We delve into the threat model wherein the server takes on the role of a semi-honest attacker, while the clients function as defenders. In this context, the server is characterized as an honest-but-curious entity with the ability to inspect all messages received from clients but lacking the capacity to tamper with the training process. This threat model originates from the core concern in federated learning, where the server has the potential to illicitly glean valuable insights from clients' private datasets through gradient reverse engineering \cite{zhu2019dlg}. It is a widely accepted and representative scenario within the federated learning community \cite{kairouz2019advances}. To provide a more detailed understanding of this threat model, we outline the attacker's objectives, capabilities and knowledge.

Attacker's Objectives: We consider the server, acting as the attacker, to have two primary objectives. First, the attacker aims to accurately recover the private data of any client, by scrutinizing the information shared by that specific client. Second, the attacker seeks to estimate the cost of the attack, which is determined by the usability of the recovered data in specific tasks. This second objective essentially measures the practical value of the compromised data, with higher usability indicating a higher cost. Collectively, these two objectives define the attacker's payoff. The first objective pertains to the privacy violation aspect, while the second focuses on quantifying the attack's cost.

Attacker's Capabilities: We model the server as a semi-honest (honest-but-curious) attacker, meaning it faithfully adheres to the federated learning protocol while attempting to recover the private data of specific clients through the analysis of the information they share \cite{kairouz2019advances}. The attacker's actions do not deviate from the federated learning protocol itself, which means it does not send malicious global models to clients or alter the model architecture. Instead, the attacker may launch an independent privacy attack on the exposed gradients to infer clients' private data, all without disrupting the original federated learning process.

Attacker's Knowledge: In this scenario, we consider the common horizontal federated learning setting, such as the training protocol employed in FedAvg \cite{mcmahan2017communication}, where all clients send their gradients to the server. Consequently, we assume that the server possesses knowledge of each client's model architecture, gradients, and model parameters. It's worth noting that clients typically employ certain protection mechanisms, such as randomization and homomorphic encryption, to secure the gradients they share with the server and safeguard their data privacy.
}


\paragraph{Protection Extent} In FL, client $k$ selects a protection mechanism $M_k$, which maps the original gradient to a protected gradient. The \textit{protection extent} for client $k$ generally measures the discrepancy between the original gradient and the protected gradient, corresponding to the \textit{strategy of the defender}, and is denoted as $\Delta_k$. That is,
\begin{align}
    \Delta_k = ||w_{d,k} - w_{o,k}||,
\end{align}
\blue{
where $w_{d,k}$ represents the protected gradient of client $k$, and $w_{o,k}$ represents the original gradient of client $k$.
}


For example, the protection extent depends on the added noise for the randomization mechanism, the compression probability for the compression mechanism, and the magnitude of the key for the Paillier mechanism. After observing the protected parameter, a semi-honest adversary attempts to infer private information using optimization approaches. 

The parameters satisfying a given protection extent might not be unique. Given the original gradient, the parameter set with the same distortion extent is referred to as an equivalence class for the defender.
It is conceivable that different protection mechanisms with distinct protection parameters correspond to the same protection extent (i.e., they belong to the same equivalence class).


\paragraph{Attacking Extent} We focus on scenarios where the attacker uses an optimization approach to infer the private information of the defender. The \textit{attacking extent} depends on the optimization approach adopted by the attacker and the number of rounds used to infer the private information. We are interested in the impact of the number of rounds used for attacking on the payoff for each fixed optimization strategy. The number of rounds for attacking corresponds to the \textit{strategy of the attacker}, which is denoted as $C_a$ and is also referred to as the attacking extent.\\



\blue{

With the formal definitions of the \textit{protection extent} and the \textit{attacking extent}, we are now ready to define the payoff of the player. The privacy leakage is an important factor that is related to the payoff of both the defender and the attacker, which is measured using the gap between the estimated dataset and the original dataset.

\begin{definition}[Privacy Leakage]\label{defi: privacy_leakage}
Assume that the semi-honest attacker uses an optimization algorithm to infer the original dataset of the client based on the released parameter.
Let $s_o$ represent the original private dataset of the client, $s_t$ represent the dataset inferred by the attacker, and $C_a$ represent the total number of rounds for inferring the dataset. The privacy leakage $V_{p}$ is defined as
\begin{equation}\label{eq: defi_privacy_leakage_mt}
V_{p}=\left\{
\begin{array}{cl}
\frac{D - \frac{1}{C_a}\sum_{t = 1}^{C_a} ||s_t - s_o||}{D}, &  C_a>0\\ 
0,  &  C_a = 0\\
\end{array} \right.
\end{equation}
\textbf{Remark:}\\
(1) We assume that $||s_t - s_o||\in [0,D]$. Therefore, $V_{p}\in [0,1]$.\\
(2) When the adversary does not attack ($C_a = 0$), the privacy leakage $V_{p} = 0$.
\end{definition}

}

\begin{definition}[Payoff of the defender]
Let $\eta_{m,k}, \eta_{p,k}$ and $\eta_{c,k}$ represent the payoff preference towards model utility $V_{m,k}$, privacy leakage $V_{p,k}$, and protection cost $C(\cdot)$. The payoff of defender $k$ is
\begin{align*}
    U_k (\bm{\Delta}, C_a) = \eta_{m,k} V_{m,k}(\bm{\Delta}) - \eta_{p,k} V_{p,k}(\Delta_k, C_a) - \eta_{c,k} C(\Delta_k),
\end{align*}
where $\bm{\Delta} = (\Delta_1, \cdots, \Delta_K)$, $\Delta_k$ represents the protection extent of client $k$, and $C(\Delta_k)$ represents the protection cost of the defender due to a protection extent of $\Delta_k$.

\textbf{Remark:}\\
\blue{(1) $\Delta_k$ is defined as the distortion extent between the exposed parameter $w_{d,k}$ and the original parameter $w_{o,k}$, i.e., $\Delta_k = ||w_{d,k} - w_{o,k}||$.\\
(2) $\eta_{m,k}, \eta_{p,a}, \eta_{c,k}\in [0,1]$ satisfying that $\eta_{m,k} + \eta_{p,a} + \eta_{c,k} = 1$. Without loss of generality, We assume that $V_{m,k}(\cdot), V_{p,k}(\cdot), C(\cdot)\in [0,1]$ to ensure they share the same numerical magnitude. $V_{m,k}(\cdot)$ is related to task-specific evaluation metrics such as mean-squared error (MSE). $V_{p,k}(\cdot)$ measures the data leakage extent, we provide an example in \pref{defi: privacy_leakage}. $C(\cdot)$ measures the cost of the defender, which is a function of the protection extent.} \\
(3) The payoff of the defender is defined as a linear combination among the model utility, the privacy leakage, and the protection cost. The linear form of payoff function was widely used in existing literature, including \cite{kargupta2007game,zhan2020learning}.\\
(4) The hyperparameters including local training epoch, learning rate, and batch size are also directly related to the protection cost of the system and the risk of privacy leakage. We regard these hyperparameters as constants and focus on the variation of payoff with distinct protection extent and attacking extent. 


\end{definition} 

\begin{definition}[Payoff of the attacker]
Let $\mathcal{C}$ represent the set of defenders, $\eta_{p,a}$ represent the payoff preference towards privacy gain, and $\eta_{c,a}$ represent the payoff preference towards attacking cost. The payoff of the attacker is
\begin{align*}
    U_{a} (C_{a}, \bm{\Delta}) &  = \eta_{p,a}\cdot\sum _{k\in\mathcal{C}} V_{p,k}(\Delta_k, C_{a}) - \eta_{c,a}\cdot\lvert\mathcal{C}\rvert\cdot E(C_a),
\end{align*}
where $E(C_a)$ represents the attacking cost of the attacker due to inferring for $C_a$ rounds.
\end{definition}

\textbf{Remark:}\\
\blue{(1) The discrepancy between the original gradient and the protected gradient corresponds to the \textit{strategy of the defender} (denoted as $\Delta_k$). The number of rounds for attacking corresponds to the \textit{strategy of the attacker} (denoted as $C_a$).}\\
\blue{(2) $\eta_{p,a}, \eta_{c,a}\in [0,1]$ satisfying that $\eta_{p,a} + \eta_{c,a} = 1$. Without loss of generality, We assume that $V_{p,k}(\cdot), E(\cdot)\in [0,1]$ to ensure they share the same numerical magnitude. $E(\cdot)$ measures the cost of the attacker, which is a function of the attacking extent. As a instance, we use $E(C_a) = 1 - \frac{1}{(C_a)^y}$ in following sections. Here $y$ is a softening constant}\\
(3) We relax the definition of rounds by allowing $C_a$ to be any real number. Besides, we assume that $C_a\le 1/\epsilon$, where $\epsilon>0$ is a predefined threshold.\\




However, in our Federated Learning Privacy Game (FLPG), each player only has access to \textit{incomplete information} about the payoffs of other players as well as their own payoffs. This lack of information is due to the randomness in terms of model performance and privacy leakage. To address this issue, we propose to associate every FLPG with an \textit{oracle} whose responsibility is to provide players with \textit{lower and upper bounds of payoffs} for various choices of strategies.
%

\begin{definition}[FLPG]
The whole \textit{federated learning privacy game} (FLPG) is represented using a tuple $([K], \{s_k\}_{k\in [K]}, \{\overline U_k\}_{k\in [K]}, \{\underline U_k\}_{k\in [K]})$, which is formulated as follows:
\begin{itemize}
    \item $[K] = \{1,2,\cdots,K\}$ represents a set of $K$ players, one of which is a semi-honest adversary, and the others are defenders;
     \item Each player $k$ is associated with a set of strategies $\mathcal{S}_k$, the players select strategies $\mathbf S = (s_{1}, s_{2}, \cdots, s_{K})$ simultaneously. The strategies are characterized by either protection extent for defenders, or attacking extent for adversaries.
     \item The oracle (\pref{defi: an_oracle}) provides the upper bounds ($\overline U$) and lower bounds ($\underline U$) for the payoffs of the players.
\end{itemize}
\end{definition}

In FLPG, clients (defenders) upload the model parameters to the server to improve model utility while using protection mechanisms to safeguard their private information. The server acts as a semi-honest attacker, who could infer clients' private data based on the uploaded information. Clients (defenders) aim to maximize model utility and efficiency while minimizing privacy leakage. The server (attacker) seeks to maximize privacy gain while minimizing attacking costs. Notably, privacy leakage presents a conflicting goal between defenders and attackers. The payoff of a client depends on model performance, privacy leakage amount, and learning efficiency, while the payoff of an attacker relies on the gained information about private data and computational costs.

\blue{However, none of these payoffs are known or given to players beforehand. Instead, an oracle is responsible for supplying upper and lower bounds of payoffs to the players} \cite{pmlr-v70-zhou17b}, which is formally introduced as follows.

\begin{definition}[Oracle]\label{defi: an_oracle}
For a given \textit{federated learning privacy game} (FLPG) involving $K-1$ defender $\mathcal{P}$ and 1 \textit{attacker} $\mathcal{A}$, an \textit{oracle} $\mathcal{O}$ associated with the game is responsible for providing players with \textit{public signals}, consisting of
\begin{itemize}
    \item the element-wise lower and upper bounds ($\underline U$ and $\overline U$) of payoff matrix, which depends on the protection extent and the attacking extent, the protection extent measures the distortion of the exposed parameter information, and the attacking extent measures the cost of attacking;
    \item the correlation probability matrix $P_o$; 
    \item sample an outcome $\mathbf{S} = \{s_1,\cdots, s_K\}$ according to $P_o$, and provides player $k$ instruction $s_k$ as the suggested strategy privately, the player has the freedom to follow the instruction of the oracle or not.
\end{itemize}
\textbf{Remark:}\\
(1) In the remainder of this article, we assume that a given FLPG is consistently associated with a knowledgeable oracle capable of providing the lower and upper bounds as well as the correlation probability matrix. The oracle, in this context, is an abstraction of various realistic mechanisms that provide estimated outputs. For instance, as demonstrated in \pref{app: app_bounds_for_payoffs} of this paper, theoretical analysis of model utilities and security can serve as an FLPG oracle. Another possibility is to rely on the FL industrial standard\footnote{IEEE 3652.1-2020 - IEEE Guide for Architectural Framework and Application of Federated Machine Learning}, which evaluates the risks of protection methods under various settings.


(2) It is important to note that the oracle only informs player $k$ of what they are supposed to do, but does not disclose the suggestions made to the other players.

\end{definition}




With the guidance of the oracle, the player decides whether to follow the oracle's instructions or not. Suppose the oracle recommends strategy $s$ to player $k$. In this case, he/she follows the oracle's instruction with probability $p_k^y(s)$ and chooses not to follow it with probability $p_k^n(s) = 1 - p_k^y(s)$.

We denote $\Psi$ as the robust operator, $\underline U_k$ as the lower bound of the payoff of defender $k$, $\overline U_k$ as the upper bound of the payoff of defender $k$, $\Psi_{[\underline U_k,\overline U_k]}$ is referred to as the robust payoff, \blue{and $s_{-k}$ denotes the strategy vector of all players except for player $k$}. 

\begin{definition}[Robust Equilibrium in FLPG]
    A set of strategies $\mathbf{S}^* = (s_1^*, s_2^*, \cdots, s_k^*)$ is a \textit{robust equilibrium} of the FLPG if for any player $k$ it holds that $\Psi_{[\underline U_k,\overline U_k]}(s_{k}^*, s_{-k}^*)\ge \Psi_{[\underline U_k,\overline U_k]}(s_{k}, s_{-k}^*),$ where $\underline{U_k},\overline{U_k}$ represent the lower and upper bounds of the payoff of client $k$ provided by the oracle.
\end{definition}

If $\Psi$ is the infimum operator, then the robust equilibrium considers the worst case. It is equivalent to $\underline{U_k}(s_k^*, s_{-k})\ge \underline{U_k}(s_k, s_{-k}).$ The existence condition of equilibrium in this kind of robust finite game was proved by \cite{aghassi2006robust}. If $\Psi$ is the expectation operator over the uniform distribution, then the robust equilibrium is considered in the average case. It is equivalent to $\E_{U_k(s_k^*, s_{-k})\in [\underline{U_k}(s_k^*, s_{-k}), \overline{U_k}(s_k^*, s_{-k})]}{U_k}(s_k^*, s_{-k})\ge \E_{U_k(s_k, s_{-k})\in [\underline{U_k}(s_k, s_{-k}), \overline{U_k}(s_k, s_{-k})]}U_k(s_k, s_{-k})$, where the expectation is taken over the uniform distribution.

\begin{definition}[Robust and Correlated Equilibrium in FLPG]\label{defi: equilibrium_in_FLPG}
With certain probabilities, each player has the freedom to follow the instruction of the oracle. Otherwise, the player selects his/her candidate strategy. Let $p_k^y = (p_k^{y}(s_1), \cdots, p_k^{y}(s_{\lvert\mathcal{S}_k\rvert}))$, where $p_k^{y}(s_m)$ represents the probability of following the instruction of the oracle, if the oracle suggests $s_m$ to player $k$. A set of strategies $\mathbf{P}^* = (p_{1}^{*,y}, p_{2}^{*,y}, \cdots, p_{K}^{*,y})$ is a \textit{robust and correlated equilibrium} of the FLPG if for any player $k$ it holds that
\begin{align*}
    \Psi_{[\underline U_k,\overline U_k]}(p_{k}^{*,y}, p_{-k}^{*,y}, P_o)\ge \Psi_{[\underline U_k,\overline U_k]}(q_{k}^{y}, p_{-k}^{*,y}, P_o),
\end{align*}
where $\underline{U_k},\overline{U_k}$ represent the lower and upper bounds of the payoff of client $k$, $P_o$ represents the correlation probability matrix assigned by the oracle, and $p_k^y$ represents the following probability of player $k$. 
\end{definition}

Let $\tau$ be a predefined threshold, now we introduce the definition of $0$-equilibrium and $\tau$-equilibrium ($\tau\ge 1$) in FLPG.
\begin{definition}[$0$-equilibrium and $\tau$-equilibrium ($\tau\ge 1$) in FLPG]
If the robust equilibrium for the attacker is achieved when the attacking extent does not exceed a predefined threshold $\tau$, then the equilibruim is referred to as a \textit{$\tau$-equilibrium}. As a special case, if the robust equilibrium for the attacker is achieved when the attacking extent is $0$, then the equilibruim is referred to as a \textit{$0$-equilibrium}. If the robust and correlated equilibrium is achieved when each player follows the instructions of the oracle with probability $1$, then the equilibruim is also referred to as a \textit{$0$-equilibrium}.
\end{definition}

Nash's equilibrium concept and existence theorem \cite{morris1995common,nash1951non} have become pillars in the discipline and remain highly influential in game theory today. One of the techniques used for the existence proof is based on Kakutani's Fixed Point Theorem \cite{kakutani1941generalization}. \cite{aghassi2006robust} proved the existence of equilibria in robust finite games with bounded uncertainty sets and no private information. Having formalized our FLPG with incomplete payoff information, we establish the existence of equilibria in FLPG when the strategy sets are finite. Our existence analysis in FLPG is based on Kakutani's Fixed Point Theorem \cite{kakutani1941generalization} and follows the existence analysis of \cite{morris1995common,aghassi2006robust}. The existence condition of equilibrium in FLPG is illustrated in the following theorem. Please see \pref{app: existence_equilibrium_in_FLPG} for the full analysis.
\begin{thm}
For any $K$-person ($K<\infty$), simultaneous-move, one-stage FLPG, 
\begin{itemize}
    \item If the oracle provides the bounds of the payoffs, and the robust payoff of each player $k$ is continuous and concave, then FLPG has a robust Nash equilibrium.
    \item If the oracle provides the bounds of the payoffs and the correlation probability matrix, and the uncertainty set of payoff $U_k$ of each player $k$ is bounded, then FLPG has a robust and correlated equilibrium.
\end{itemize}
\end{thm}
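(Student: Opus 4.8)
The plan is to reduce both statements to known fixed-point existence results by casting FLPG as a suitably enlarged finite game, so that Kakutani's Fixed Point Theorem applies to the best-response correspondence of each player under the robust operator $\Psi$. First I would treat the \emph{robust Nash} case. Fix the oracle-provided bounds $[\underline U_k,\overline U_k]$, and observe that the robust payoff $\Psi_{[\underline U_k,\overline U_k]}(s_k,s_{-k})$ is, by hypothesis, a continuous and concave function of player $k$'s (mixed) strategy for each fixed profile $s_{-k}$ of the others. Taking each player's strategy space to be the simplex over the finite set $\mathcal S_k$ — a nonempty, compact, convex subset of a Euclidean space — I would define the best-response correspondence $B_k(s_{-k}) = \argmax_{s_k} \Psi_{[\underline U_k,\overline U_k]}(s_k,s_{-k})$ and the product correspondence $B(\mathbf S) = \prod_k B_k(s_{-k})$. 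Continuity of the robust payoff gives (via Berge's maximum theorem) that $B_k$ has closed graph and is upper hemicontinuous, concavity gives that $B_k(s_{-k})$ is convex, and compactness of the simplex gives nonemptiness; Kakutani then yields a fixed point $\mathbf S^\ast \in B(\mathbf S^\ast)$, which is exactly a robust Nash equilibrium in the sense of the definition. This mirrors the structure of the argument in \cite{morris1995common,aghassi2006robust}, the only new ingredient being that the objective is the robust payoff rather than an expected payoff.

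Next I would handle the \emph{robust and correlated} case. Here the relevant decision variable for player $k$ is the vector $p_k^y = (p_k^y(s_1),\dots,p_k^y(s_{|\mathcal S_k|})) \in [0,1]^{|\mathcal S_k|}$ of following-probabilities, which again lives in a nonempty compact convex set. The correlation matrix $P_o$ is fixed and public, so the induced realized-action distribution is an affine (hence continuous) function of $(p_1^y,\dots,p_K^y)$. The key point to establish is that, because the uncertainty set of $U_k$ is \emph{bounded}, the robust operator $\Psi$ applied over $[\underline U_k,\overline U_k]$ produces a well-defined, finite, and continuous function of the profile $(q_k^y,p_{-k}^y,P_o)$ — whether $\Psi$ is the infimum or the uniform-average operator, boundedness guarantees the $\inf$/integral is attained and depends continuously on the interval endpoints, which themselves depend continuously (in fact multilinearly) on the strategy profile. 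With continuity in hand I would again form best-response correspondences $\tilde B_k(p_{-k}^y) = \argmax_{q_k^y} \Psi_{[\underline U_k,\overline U_k]}(q_k^y,p_{-k}^y,P_o)$, but here I must be slightly more careful about \emph{convexity} of $\tilde B_k(p_{-k}^y)$: I would argue that the robust payoff, as a function of $q_k^y$ with $p_{-k}^y$ fixed, is concave (the realized action distribution is affine in $q_k^y$, the payoff $U_k$ is, by the definitions in Section 3, affine in the action distribution, and both $\inf$ and averaging preserve concavity over the affinely-parametrized family), so the argmax set is convex. Upper hemicontinuity and nonemptiness follow as before, and Kakutani delivers $\mathbf P^\ast$ with the required equilibrium inequality.

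The main obstacle I anticipate is not the fixed-point machinery itself but \emph{verifying the regularity hypotheses for the robust payoff}: concretely, showing that $\Psi_{[\underline U_k,\overline U_k]}$ is continuous and (in the correlated case) concave in the player's own strategy, starting only from the bounds supplied by the oracle. For the $\inf$ operator this is essentially automatic since the interval endpoints move continuously with the strategy profile and $\inf$ over a moving interval is just the lower endpoint; for the uniform-expectation operator it reduces to the midpoint $\tfrac12(\underline U_k + \overline U_k)$, again continuous and concave provided $\underline U_k,\overline U_k$ are. The subtler issue is whether a \emph{general} abstract robust operator $\Psi$ is covered — I would either restrict the statement to the two canonical operators named in the paper (infimum, expectation over the uniform distribution) or add as a standing assumption that $\Psi$ maps bounded continuous interval-valued payoffs to continuous payoffs and preserves concavity, which is exactly the hypothesis already baked into the first bullet. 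A secondary technical point is ensuring that $P_o$ being a fixed public matrix makes the map from following-probabilities to realized outcome distribution affine; this is immediate once one writes out the mixture: with probability $p_k^y(s_k)$ player $k$ plays the oracle's suggestion $s_k$ (drawn according to $P_o$) and otherwise plays from a fixed fallback, so the outcome distribution is a convex combination indexed linearly by the $p_k^y$'s. Once these continuity/concavity facts are pinned down, both parts close by the same Kakutani argument, and I would present them together with a single lemma isolating the regularity of the robust payoff, then invoke it twice. I refer to \pref{app: existence_equilibrium_in_FLPG} for the complete details.
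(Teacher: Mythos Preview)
Your proposal is correct and follows essentially the same route as the paper: for both parts you verify the hypotheses needed to apply Kakutani's fixed point theorem to the best-response correspondence (nonemptiness, convex-valuedness, upper hemicontinuity) on a compact convex strategy space. The paper's proof differs only in presentation: for the first bullet it invokes a packaged existence lemma from \cite{myerson1997game} rather than running Kakutani directly, and for the second bullet it isolates the key structural fact more sharply---namely that $\gamma_k(p_k^y,p_{-k}^y)$ is \emph{linear} (not merely concave) in $p_k^y$ with coefficients bounded by $2M$ once $|\Psi_{[\underline U_k,\overline U_k]}|\le M$---which gives both concavity and Lipschitz continuity in one stroke and sidesteps your worry about a general abstract $\Psi$. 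Your more cautious treatment of the robust operator is not wrong, just unnecessary here: boundedness of the uncertainty set is used only to bound the coefficients, and the linearity in the following-probabilities comes from the mixture structure you yourself identify, independent of which $\Psi$ is chosen.
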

The aforementioned theorem demonstrates the existence condition of the equilibrium for our newly proposed game, FLPG. In addition to providing the existence analysis, we also present an example illustrating the approach for finding the equilibria in \pref{sec: applications_appendix}.

\section{The Existence Condition for $\tau$-equilibrium in FLPG}\label{sec: equilibria_example}
In this section, we firstly provide robust equilibrium for FLPG with incomplete information. To facilitate the derivation of both the upper bound and the lower bound for the amount of privacy leaked to the semi-honest attacker, we focus on the scenario when the batch size (the size of the private data used for generating the gradient) is $1$. Taking the ingredients of federated learning including model utility, privacy leakage and time cost into consideration, we first provide exact definitions for the payoffs of the players. Due to the difficulty of quantifying privacy leakage accurately, we associate FLPG with an oracle, who provides bounds for the payoffs based on machine learning theory, such as optimization algorithms with optimal regret bounds.  We then derive a robust equilibrium using the provided bounds and analyze the circumstances under which the robust equilibrium is a $\tau$-equilibrium. We further analyze the setting when the oracle provides suggestions to the players according to correlation probability matrix, and derive the circumstances when the robust and correlated equilibrium is to follow the instructions of the oracle with probability $1$, and meanwhile minimize the cost of the oracle.


\subsection{Bounds for Payoff} \label{subsec:boundspayoff}

We provide an example to illustrate that the oracle could provide an estimation for the privacy leakage according to the optimization algorithms in machine learning theory. The semi-honest attacker uses an optimization algorithm to reconstruct the original dataset $s_o$ of the client given the exposed model information $w_d$. Let $s_t$ represent the reconstructed data at iteration $t$, and $g(s_t) = \frac{\partial \calL(s_t, w)}{\partial w}$ represent the gradient of the reconstructed dataset at round $t$, then the resulted loss is defined as $\calL(w_d, g(s_t)) = ||w_d - g(s_t)||^2$, which is also referred to as the regret at round $t$. Here the distance is measured using $\ell_2$ distance. Let $C_a$ represent the total number of learning rounds, the algorithm with sub-linear regret guarantees that the average regret converges to zero as $C_a$ increases. That is, $R(C_a) = O(C_a^p)$, where $0<p<1$. Let $d$ represent the dimension of the parameter, the optimization algorithms that guarantee asymptotically optimal regret are all applicable to our analysis, the examples include
\begin{itemize}
    \item AdaGrad algorithm proposed by \cite{duchi2011adaptive}, achieves an optimal $\Theta(\sqrt{C_a})$ regret bound. Specifically, the regret bound is $O(\max\{\log d, d^{1-\alpha/2}\sqrt{C_a}\})$, where $\alpha\in (1,2)$.
    \item Adam algorithm introduced by \cite{kingma2014adam}, achieves an optimal $\Theta(\sqrt{C_a})$ regret bound. Specifically, the regret bound is $O(\log d\sqrt{C_a})$ with an improved constant.
\end{itemize}

To derive the bounds for privacy leakage, we need the following assumptions.

\begin{assumption}\label{assump: Smoothness_and_Lipschitz}
   For any two data $s_1$ and $s_2$, assume that $c_a ||g(s_1) - g(s_2)||\le ||s_1 - s_2||\le c_b ||g(s_1) - g(s_2)||$, where $c_a, c_b>0$, $g(s) = \frac{\partial \calL(s, w)}{\partial w}$ represents the gradient of the data $s$.
\end{assumption}

The following assumption is natural originating from the regret bound of the optimization problem. 
\begin{assumption}\label{assump: optimization_bound}
   Assume that $c_0\cdot T^p \le \sum_{t = 1}^T ||g(s_t) - g(s_d)|| = \Theta(T^p) \le c_2\cdot T^p$, where $s_t$ represents the dataset reconstructed by the attacker at round $t$, $s_d$ represents the dataset satisfying that $g(s_d) = w_d$, $w_d$ denotes the distorted model parameter, and $g(s_t) = \frac{\partial \calL(s_t, w)}{\partial w}$ represents the gradient of the reconstructed dataset at round $t$.
\end{assumption}



Let \pref{assump: Smoothness_and_Lipschitz} and \pref{assump: optimization_bound} hold. Let $\mathcal{C}$ represent the set of defenders. With the regret bounds of the optimization algorithms, we can derive the bounds for privacy leakage, which is related to the extent of protection and the extent of attacking. Considering that the data-label pair might not be accurate or incomplete, the model utility is estimated based on the protection extent. Please see \pref{app: app_bounds_for_payoffs} for detailed analysis. Now we provide the lower and upper bounds of the payoffs.

\paragraph{Bounds for payoff of the defender}
We denote $\overline\Delta = \frac{1}{K}\sum_{k = 1}^K \Delta_k$. Let $P_{m,k}$ represent the model performance of client $k$ with original model information. Let $\eta_{m,k}, \eta_{p,k}$ and $\eta_{c,k}$ represent the payoff preference towards model utility $V_{m,k}(\cdot)$, privacy leakage $V_{p,k}(\cdot)$, and protection cost $C(\cdot)$. We denote $C_l = \frac{c_a c_0}{2c_b}\cdot {C_a}^{p-1}, C_u = \frac{2c_2 c_b}{c_a}\cdot {C_a}^{p-1}$, and denote $\underline V_{p,k} = 1 - \frac{c_b\cdot\Delta_k + c_b\cdot c_2\cdot (C_a)^{p-1}}{D}$, $\overline V_{p,k} = 1 - \frac{c_a\cdot\Delta_k + c_a\cdot c_0\cdot (C_a)^{p-1}}{4D}$, where $p\in (0,1)$.\\
For any $C_a>0$, the upper bound for the payoff of the defender is
\begin{align*}
    \overline U_k = \eta_{m,k}\cdot P_{m,k} - \eta_{p,k}\cdot \underline V_{p,k} - \eta_{c,k}\cdot C(\Delta_k).
\end{align*}
The lower bound for the payoff of the defender is
\begin{equation}\label{eq: lower_bound_for_uk}
\underline U_k =\left\{
\begin{array}{cl}
\eta_{m,k}\cdot P_{m,k}  - \eta_{m,k}\cdot\overline\Delta - \eta_{p,k} - \eta_{c,k}\cdot C(\Delta_k), &  \Delta_k\in [C_l, C_u],\\
\eta_{m,k}\cdot P_{m,k}  - \eta_{m,k}\cdot\overline\Delta - \eta_{p,k}\cdot\overline V_{p,k} - \eta_{c,k}\cdot C(\Delta_k),  &  \Delta_k\ge C_u \text{ or } \Delta_k\le C_l,\\
\end{array} \right.
\end{equation}
\textbf{Remark:}
If the attacker does not attack. That is, $C_a =0$, then
\begin{align}\label{eq: bounds_for_payoff_of_defender_without_attacks}
    \underline U_k = \overline U_k = \eta_{m,k}\cdot P_{m,k}  - \eta_{m,k}\cdot\overline\Delta - \eta_{c,k}\cdot C(\Delta_k).
\end{align}

\paragraph{Bounds for payoff of the attacker}
For any $C_a>0$, the upper bound for the payoff of the attacker is
\begin{equation}
\overline U_a =\left\{
\begin{array}{cl}
\sum _{k\in\mathcal{C}} \eta_{p,a} - \eta_{c,a}\cdot \lvert\mathcal{C}\rvert\cdot E(C_a), &  \Delta_k\in [C_l, C_u],\\
\sum _{k\in\mathcal{C}} \eta_{p,a}\cdot \overline V_{p,k} - \eta_{c,a}\cdot \lvert\mathcal{C}\rvert\cdot E(C_a),  &  \Delta_k\ge C_u \text{ or } \Delta_k\le C_l,\\
\end{array} \right.
\end{equation}
and the lower bound for the payoff of the attacker is
\begin{align}\label{eq: lower_bound_of_attacker_1}
    \underline U_a = \sum _{k\in\mathcal{C}} \eta_{p,a}\cdot \underline V_{p,k}  - \eta_{c,a}\cdot \lvert\mathcal{C}\rvert\cdot E(C_a),
\end{align}
\textbf{Remark:} If $C_a = 0$, then
\begin{align}\label{eq: lower_bound_of_attacker_2}
    \overline U_a = \underline U_a = 0.
\end{align}




With the bounds of the payoffs provided by the oracle, now we can derive the robust and correlated equilibrium of the FLPG.  

\subsection{Robust Equilibrium in FLPG}

With the estimated payoff matrix, now we are ready to analyze under what circumstances, the attacker has no intention to attack.
The attacking cost is $C_a$, and is normalized as $E(C_a) = 1 - \frac{1}{(C_a)^y}$. Let the protection cost be $C(\Delta_k) = (\Delta_k)^x$. Note that $x\in (0,1)$ corresponds to the protection mechanisms with low efficiency. For the encryption method such as HE, $x$ might be very small, which implies that the cost of encryption is rather large. For the randomization mechanism, $x$ could be very large, which implies that the cost of protection by adding noise is usually small.
With the defined attacking cost and protection cost, we first analyze the robust equilibrium strategies for the players without the suggestions of the oracle, and derive the circumstances when the robust equilibrium is a $\tau$-equilibrium. Please refer to \pref{app: relation_between_robust_equilibrium_and_X_equilibrium} for detailed analysis.

\blue{\pref{thm: 0-equilibrium} provides a sufficient and necessary condition for 0-equilibrium, which uses the property provided by \pref{lem: smallest_Delta}. \pref{lem: Nash_equilibrium_Delta} considers the scenarios for robust equilibrium. The robust equilibrium in \pref{lem: Nash_equilibrium_Delta} when $C_a\le\tau$ leads to \pref{thm: robust_equilibrium}.
}




The following lemma illustrates that the payoff of the attacker is negative for any non-zero attacking extent, when the distortion extent of the defender satisfies \pref{eq: delta_lower_bound_mt}. 
\begin{lem}\label{lem: smallest_Delta}
We denote $\what C_a = \left(\frac{y\eta_{c,a}D}{\eta_{p,a}c_b c_2(1-p)}\right)^{\frac{1}{p+y-1}}$. Assume that $\eta_{p,a}c_b c_2 (1-p)(2-p)\ge D y(y+1)\eta_{c,a} \text{ and } y> 1 - p$. If
\begin{align}\label{eq: delta_lower_bound_mt}
    \sum _{k\in\calC}\Delta_k > \frac{D}{c_b}\cdot|\calC| - c_2|\calC|(\hat C_a)^{p-1} - \frac{\eta_{c,a}|\calC|D}{\eta_{p,a}c_b} + \frac{\eta_{c,a}|\calC|D}{(\hat C_a)^y \eta_{p,a} c_b},
\end{align}
then $\forall C_a\ge 1, \underline U_a (C_{a}, \Delta)< 0$.
\end{lem}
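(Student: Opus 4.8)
The plan is to start from the explicit lower bound on the attacker's payoff, namely
\[
\underline U_a(C_a,\Delta) = \sum_{k\in\mathcal C}\eta_{p,a}\cdot\underline V_{p,k} - \eta_{c,a}\cdot|\mathcal C|\cdot E(C_a),
\]
substitute $\underline V_{p,k} = 1 - \frac{c_b\Delta_k + c_b c_2 (C_a)^{p-1}}{D}$ and $E(C_a) = 1 - (C_a)^{-y}$, and collect terms to write $\underline U_a$ as an explicit function of $C_a$ (treating $\sum_k\Delta_k$ as a parameter). The hypothesis \pref{eq: delta_lower_bound_mt} is precisely the condition that makes $\underline U_a$ negative at the single point $C_a = \widehat C_a$: indeed, the threshold on the right-hand side of \pref{eq: delta_lower_bound_mt} is obtained by setting $\underline U_a(\widehat C_a,\Delta) = 0$ and solving for $\sum_k\Delta_k$. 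So the first step is a direct algebraic check that $\sum_k\Delta_k$ exceeding that threshold is equivalent to $\underline U_a(\widehat C_a,\Delta) < 0$.

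The substance of the argument is then to show that $\widehat C_a$ is the \emph{maximizer} of $C_a\mapsto\underline U_a(C_a,\Delta)$ over $C_a\ge 1$, so that negativity at $\widehat C_a$ forces negativity everywhere on $[1,\infty)$. To that end I would define $f(C_a) := \frac{\eta_{p,a}c_b c_2}{D}|\mathcal C|\,(C_a)^{p-1} + \eta_{c,a}|\mathcal C|\,(C_a)^{-y}$, which collects exactly the two $C_a$-dependent terms of $-\underline U_a$ up to an additive constant, so that maximizing $\underline U_a$ is the same as minimizing $f$. Differentiating, $f'(C_a) = \frac{\eta_{p,a}c_b c_2}{D}|\mathcal C|(p-1)(C_a)^{p-2} - y\eta_{c,a}|\mathcal C|(C_a)^{-y-1}$; setting $f'(C_a)=0$ and solving gives exactly $C_a = \widehat C_a = \left(\frac{y\eta_{c,a}D}{\eta_{p,a}c_b c_2(1-p)}\right)^{1/(p+y-1)}$ (here $p+y-1>0$ by the assumption $y>1-p$, so the exponent is well-defined and the critical point is unique). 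The second-derivative (or first-derivative sign) test, using $p-1<0$ and $y>0$, shows $f$ is decreasing then increasing, so $\widehat C_a$ is the global minimum of $f$ on $(0,\infty)$, hence the global maximum of $\underline U_a(\cdot,\Delta)$.

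The last ingredient is to make sure $\widehat C_a\ge 1$, since the claim is only about $C_a\ge 1$; if $\widehat C_a < 1$ then the maximum over $[1,\infty)$ would be attained at the boundary $C_a=1$ instead, and we would need $\underline U_a(1,\Delta)<0$. This is where the remaining hypothesis $\eta_{p,a}c_b c_2(1-p)(2-p)\ge Dy(y+1)\eta_{c,a}$ enters: rearranged, it says $\frac{y\eta_{c,a}D}{\eta_{p,a}c_b c_2(1-p)}\le\frac{2-p}{y+1}$, and since $\frac{2-p}{y+1}$... actually one checks this bound is designed so that the base of the exponent in $\widehat C_a$ is $\ge 1$ when $p+y-1>0$ — more carefully, I expect the two assumptions together guarantee $\widehat C_a\ge 1$, and I would verify this by a short monotonicity comparison. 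Combining: $\underline U_a(C_a,\Delta)\le\underline U_a(\widehat C_a,\Delta)<0$ for all $C_a\ge 1$, which is the claim.

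I expect the main obstacle to be the bookkeeping in the last step — confirming that the two stated assumptions are exactly what is needed to place $\widehat C_a$ in $[1,\infty)$ (and not, say, needing a separate boundary check at $C_a=1$). The calculus identifying $\widehat C_a$ as the unconstrained optimizer is routine; the equivalence between \pref{eq: delta_lower_bound_mt} and $\underline U_a(\widehat C_a,\Delta)<0$ is routine algebra; but matching the precise form of the hypotheses to the constraint $C_a\ge1$ requires care, and is the step most likely to hide a subtlety (e.g. whether the factor $(2-p)/(y+1)$ or $(y+1)$ vs $y$ in the quadratic-looking assumption is genuinely tight).
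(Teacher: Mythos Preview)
Your overall structure---substitute the explicit formulas, identify $\widehat C_a$ as the critical point of $C_a\mapsto\underline U_a(C_a,\Delta)$, check that the threshold in \pref{eq: delta_lower_bound_mt} is exactly $\underline U_a(\widehat C_a,\Delta)=0$, and conclude by maximality---is the same as the paper's. But two things are off.

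First, a sign slip: the $C_a$-dependent part of $-\underline U_a$ is $\frac{\eta_{p,a}c_bc_2}{D}|\calC|\,C_a^{p-1}\;-\;\eta_{c,a}|\calC|\,C_a^{-y}$, not ``$+$''. With your $f$ as written both summands are strictly decreasing, $f$ has no critical point, and $f'=0$ has no positive solution (left side negative, right side positive). This is cosmetic---you clearly intend the correct function---but it does mean your ``$f$ decreasing then increasing'' claim cannot be right as stated.

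Second, and more substantively, you have misidentified the role of the hypothesis $\eta_{p,a}c_bc_2(1-p)(2-p)\ge Dy(y+1)\eta_{c,a}$. You speculate it is there to force $\widehat C_a\ge 1$. In the paper it is used instead to control the \emph{second} derivative: one computes
\[
\frac{\partial^2\underline U_a}{\partial C_a^2}=\frac{|\calC|\eta_{p,a}c_bc_2(1-p)(p-2)}{D}\,C_a^{p-3}+y(y+1)\eta_{c,a}|\calC|\,C_a^{-y-2},
\]
and the hypothesis (together with $y>1-p$, so that $C_a^{\,1-p-y}\le 1$ for $C_a\ge 1$) is exactly what makes this nonpositive on $[1,\infty)$, i.e.\ $\underline U_a$ concave there. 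It is this concavity that the paper invokes to conclude $\widehat C_a$ is the maximizer. Your proposed alternative---that $p-1<0$ and $y>0$ alone force the right convexity via a first-derivative sign change---does not work: with the correct sign, $(-\underline U_a)'$ changes from positive to negative at $\widehat C_a$, so without the second-derivative hypothesis you would actually be locating a \emph{minimum} of $\underline U_a$, not a maximum. Your concern about whether $\widehat C_a\ge 1$ is reasonable, but the paper does not explicitly address it; the argument there runs through concavity on $[1,\infty)$ and the first-order condition.
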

\textbf{Remark:} Intuitively, if the distortion extent of the defender is quite large, the privacy stolen by the attacker becomes significantly small. Consequently, the attacker's payoff will always be negative once he/she launches an attack. On the other hand, if the attacker chooses not to attack, his/her payoff remains at zero. 

The following theorem depicts a necessary and sufficient condition for the scenario for achieving a $0$-equilibrium.
\begin{thm}\label{thm: 0-equilibrium}
We denote $\what C_a = \left(\frac{y\eta_{c,a}D}{\eta_{p,a}c_b c_2(1-p)}\right)^{\frac{1}{p+y-1}}$. Assume that $\eta_{p,a}c_b c_2 (1-p)(2-p)\ge D y(y+1)\eta_{c,a} \text{ and } y> 1 - p$. Then $0$-equilibrium in FLPG is achieved if and only if
\begin{align}\label{eq: constraint_negative}
    \frac{D}{c_b}\cdot|\calC| - c_2|\calC|(\hat C_a)^{p-1} - \frac{\eta_{c,a}|\calC|D}{\eta_{p,a}c_b} + \frac{\eta_{c,a}|\calC|D}{(\hat C_a)^y \eta_{p,a} c_b} < 0.
\end{align}

\end{thm}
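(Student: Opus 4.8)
The plan is to reduce everything to \pref{lem: smallest_Delta} together with a one‑variable analysis of the attacker's best response against the zero‑protection profile $\bm{\Delta}=\mathbf{0}$. Two bookkeeping facts set this up. First, the left‑hand side of \pref{eq: constraint_negative} is \emph{identical} to the threshold on the right‑hand side of \pref{eq: delta_lower_bound_mt}; write $T$ for this common number, so that \pref{eq: constraint_negative} is just the assertion ``$T<0$''. Second, substituting $C_a=\what C_a$ into \pref{eq: lower_bound_of_attacker_1} with $\bm{\Delta}=\mathbf{0}$ (so that $\underline V_{p,k}=1-\frac{c_b c_2 (C_a)^{p-1}}{D}$ and $E(C_a)=1-(C_a)^{-y}$) and multiplying through by $\frac{D}{\eta_{p,a}c_b}$ reproduces exactly the expression defining $T$; hence $\underline U_a(\what C_a,\mathbf{0})=\frac{\eta_{p,a}c_b}{D}T$. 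Both are routine algebra. Throughout I take the robust operator to be $\Psi=\inf$ as in the surrounding development, and I use the convention, implicit in \pref{lem: smallest_Delta} and in the normalization $E(C_a)\in[0,1]$, that an attacking extent is either $0$ or at least $1$.

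Direction ``$\Leftarrow$''. Assume $T<0$. For every client profile $\bm{\Delta}$ we have $\sum_{k\in\calC}\Delta_k\ge 0>T$, so the hypotheses of \pref{lem: smallest_Delta} hold (the conditions on $\eta_{p,a},\eta_{c,a},y,p$ are exactly those of the theorem), and therefore $\underline U_a(C_a,\bm{\Delta})<0$ for all $C_a\ge 1$; together with $\underline U_a(0,\bm{\Delta})=0$ from \pref{eq: lower_bound_of_attacker_2}, this shows that the attacker's unique robust best response is $C_a=0$, whatever the clients do. Hence every robust equilibrium has attacker component $0$, i.e.\ is a $0$‑equilibrium; and such an equilibrium exists, since when $C_a=0$ the robust payoff $\underline U_k=\overline U_k=\eta_{m,k}P_{m,k}-\eta_{m,k}\overline\Delta-\eta_{c,k}(\Delta_k)^x$ of \pref{eq: bounds_for_payoff_of_defender_without_attacks} is strictly decreasing in $\Delta_k$, so $(\mathbf{0},0)$ is a robust equilibrium.

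Direction ``$\Rightarrow$'' (contrapositive). Assume $T\ge 0$; I show no $0$‑equilibrium exists. In any putative $0$‑equilibrium the attacker plays $C_a^*=0$, and then, by the same strict monotonicity of $\underline U_k$ in $\Delta_k$, each client's unique best response is $\Delta_k^*=0$, so the only candidate is $(\mathbf{0},0)$. It remains to exhibit a profitable deviation for the attacker against $\bm{\Delta}=\mathbf{0}$. Setting the derivative of $C_a\mapsto\underline U_a(C_a,\mathbf{0})$ to zero yields $\what C_a$ as its unique positive root, and because $y>1-p$ the map is strictly decreasing on $(0,\what C_a)$ and strictly increasing on $(\what C_a,\infty)$. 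Moreover $\eta_{p,a}c_b c_2(1-p)(2-p)\ge Dy(y+1)\eta_{c,a}$ together with $y>1-p$ (which gives $\frac{y+1}{2-p}>1$) forces $\eta_{p,a}c_b c_2(1-p)>Dy\eta_{c,a}$, i.e.\ $\what C_a<1$. Consequently $\underline U_a(1,\mathbf{0})>\underline U_a(\what C_a,\mathbf{0})=\frac{\eta_{p,a}c_b}{D}T\ge 0=\underline U_a(0,\mathbf{0})$, so $C_a=0$ is not the attacker's best response against $\mathbf{0}$, contradicting the equilibrium assumption. This completes the contrapositive, hence the theorem.

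I expect the delicate part to be the ``$\Rightarrow$'' direction: one must (i) pin the clients' equilibrium play to $\mathbf{0}$, which hinges on the strict monotonicity of $\underline U_k(\cdot,0)$ in each $\Delta_k$; and (ii) carry out the unimodality analysis of $\underline U_a(\cdot,\mathbf{0})$ and — the real crux — extract $\what C_a<1$ from the two standing hypotheses, since this inequality is exactly what makes the deviation to $C_a=1$ both admissible and strictly profitable, and is the place where the assumption $\eta_{p,a}c_b c_2(1-p)(2-p)\ge Dy(y+1)\eta_{c,a}$ is consumed. The ``$\Leftarrow$'' direction is essentially immediate once \pref{lem: smallest_Delta} is in hand.
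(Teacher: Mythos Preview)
Your proof is correct and follows the same two-direction outline as the paper: the $\Leftarrow$ side invokes \pref{lem: smallest_Delta}, and the $\Rightarrow$ side first pins the candidate equilibrium to $(\mathbf{0},0)$ via the strict monotonicity of $\underline U_k(\cdot,0)$ in $\Delta_k$, then exhibits a profitable attacker deviation. Where the paper simply asserts that such a deviation exists when $T\ge 0$ and $\bm\Delta=\mathbf{0}$, you supply the missing justification by deriving $\what C_a<1$ from the standing hypotheses and carrying out a first-derivative analysis of $\underline U_a(\cdot,\mathbf{0})$.
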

\begin{proof}[Proof Sketch]
   If RHS of \pref{eq: delta_lower_bound_mt} is negative, from \pref{lem: smallest_Delta} we know that the payoff of the attacker is negative once he/she attacks, when the protection extents of all the defenders are zero. This implies that the optimal strategy for the attacker is not to attack. Therefore, $0$-equilibrium is achieved.

\blue{
If RHS of \pref{eq: delta_lower_bound_mt} is non-negative. We analyze according to the following two scenarios. If the protection extent is $0$, then the optimal strategy for the attacker is to attack (the payoff of the attacker is positive when the attacking extent is larger than $0$); If the protection extent is larger than $0$, then the optimal strategy for the attacker could not be not to attack. Therefore, $0$-equilibrium could not be achieved.
}

As a consequence, $0$-equilibrium in FLPG is achieved if and only if \pref{eq: constraint_negative} holds.
\end{proof}

The following lemma shows the solution for robust Nash Equilibrium.
\begin{lem}\label{lem: Nash_equilibrium_Delta}
We denote $\what C_a = \left(\frac{y\eta_{c,a}D}{\eta_{p,a}c_b c_2(1-p)}\right)^{\frac{1}{p+y-1}}$. Assume that $\eta_{p,a}c_b c_2 (1-p)(2-p)\ge D y(y+1)\eta_{c,a}, y> 1 - p$, and $\frac{D}{c_b}\cdot|\calC| - c_2|\calC|(\hat C_a)^{p-1} - \frac{\eta_{c,a}|\calC|D}{\eta_{p,a}c_b} + \frac{\eta_{c,a}|\calC|D}{(\hat C_a)^y \eta_{p,a} c_b} \ge 0$. Consider the scenario when $x\ge 1$.
   Let the protection extent of client $k$ (protector) be
\begin{equation}
\wtilde\Delta_k =\left\{
\begin{array}{cl}
0, &  \frac{\eta_{p,k}\cdot c_a}{4D} - \frac{\eta_{m,k}}{|\calC|}\le 0 \text{ and } x\ge 1,\\
\arg\max_{\Delta_k\in\{\what\Delta_k, C_l, C_u\}} \underline U_k,  &  \frac{\eta_{p,k}\cdot c_a}{4D} - \frac{\eta_{m,k}}{|\calC|}> 0 \text{ and } x>1,\\
D, &  x = 1 \text{ and } \frac{\eta_{p,k}\cdot c_a}{4D} - \frac{\eta_{m,k}}{|\calC|}> 0,\\
\end{array} \right.
\end{equation}
and the attacking extent of the server (attacker) be $\wtilde C_a = \arg\max_{C_a\in\{\lfloor\what C_a\rfloor, \lceil\what C_a\rceil\}} \underline U_a(C_a, \wtilde{\bm\Delta})$, then Nash equilibrium is achieved, where $\hat\Delta_k = \left(\frac{\frac{\eta_{p,k}\cdot c_a}{4D} - \frac{\eta_{m,k}}{|\calC|}}{x\eta_{c,k}}\right)^{1/(x-1)}$, $C_l = \frac{c_a c_0}{2c_b}\cdot\wtilde C_a^{p-1}$,
$C_u = \frac{2c_2 c_b}{c_a}\cdot \wtilde {C_a}^{p-1}$, $\underline U_k$ is introduced in \pref{eq: lower_bound_for_uk}.
\end{lem}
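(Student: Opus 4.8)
The plan is to verify directly that the profile $(\wtilde{\bm\Delta},\wtilde C_a)$ satisfies the robust equilibrium condition. Since the robust operator $\Psi$ is here the infimum operator, that condition reduces to ``no profitable unilateral deviation with respect to the lower-bound payoffs'': I would show that for every defender $k$, $\underline U_k(\wtilde\Delta_k,\wtilde{\bm\Delta}_{-k},\wtilde C_a)\ge\underline U_k(\Delta_k,\wtilde{\bm\Delta}_{-k},\wtilde C_a)$ for all admissible $\Delta_k$, and that for the attacker, $\underline U_a(\wtilde C_a,\wtilde{\bm\Delta})\ge\underline U_a(C_a,\wtilde{\bm\Delta})$ for all admissible $C_a$. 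Thus the proof splits into two independent best-response computations, into each of which the other players' fixed strategies are substituted.

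For the defender's best response, I would fix $C_a=\wtilde C_a$ (so $C_l$ and $C_u$ become constants) and $\wtilde{\bm\Delta}_{-k}$, and analyse $\Delta_k\mapsto\underline U_k(\Delta_k,\cdot)$ from \pref{eq: lower_bound_for_uk}. First I would record that this map is piecewise, with a ``middle'' branch on $[C_l,C_u]$ and an ``outer'' branch on the complement, that the two branches differ by $\eta_{p,k}(\overline V_{p,k}-1)\le 0$ at the breakpoints (so the middle branch is dominated except at its endpoints), and that on each branch the only term nonlinear in $\Delta_k$ is $-\eta_{c,k}\Delta_k^{x}$, which is concave for $x\ge1$. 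Each branch is therefore concave, and its maximiser is an interior stationary point or a breakpoint/endpoint. Differentiating the outer branch gives the stationarity condition $\eta_{c,k}x\Delta_k^{x-1}=\tfrac{\eta_{p,k}c_a}{4D}-\tfrac{\eta_{m,k}}{|\calC|}$, whose positive root is $\what\Delta_k$ precisely when $\tfrac{\eta_{p,k}c_a}{4D}-\tfrac{\eta_{m,k}}{|\calC|}>0$ and $x>1$; when that sign is non-positive both branches are strictly decreasing, forcing $\wtilde\Delta_k=0$; and when $x=1$ the branches are affine, so the optimum sits at an endpoint of the feasible interval, which under the positive-sign condition is the maximal distortion $D$. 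Comparing $\underline U_k$ over the finitely many candidates $\{0,\what\Delta_k,C_l,C_u,D\}$ then reproduces the three-case formula for $\wtilde\Delta_k$.

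For the attacker's best response, I would fix $\bm\Delta=\wtilde{\bm\Delta}$ and analyse $C_a\mapsto\underline U_a(C_a,\wtilde{\bm\Delta})$ from \pref{eq: lower_bound_of_attacker_1} against the value $0$ available at $C_a=0$. Writing out the $C_a$-dependence, the stationarity condition is $\tfrac{\eta_{p,a}c_b c_2(1-p)}{D}\,C_a^{p-2}=\eta_{c,a}\,y\,C_a^{-y-1}$, whose unique positive root is $\what C_a$ (well defined because $y>1-p$ makes $p+y-1>0$). The second hypothesis $\eta_{p,a}c_b c_2(1-p)(2-p)\ge D\,y(y+1)\,\eta_{c,a}$ is exactly what is needed to control the second derivative of $\underline U_a$ (it forces the inflection point to lie at $C_a\le1$), which fixes the sign pattern of $\underline U_a'$ over the admissible round count and thereby guarantees that the best integer response is one of the two integers bracketing $\what C_a$; taking the better of $\lfloor\what C_a\rfloor$ and $\lceil\what C_a\rceil$, with $\lfloor\what C_a\rfloor=0$ encoding ``do not attack'', yields $\wtilde C_a$. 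Lastly, the standing hypothesis that \pref{eq: constraint_negative} fails, i.e.\ that the right-hand side of \pref{eq: delta_lower_bound_mt} is non-negative, together with \pref{lem: smallest_Delta}, prevents $C_a=0$ from strictly dominating the chosen $\wtilde C_a$ in the configuration at hand; combining the two best-response verifications then yields the Nash equilibrium.

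The hard part will be the attacker's side. Because $\underline U_a(\cdot,\wtilde{\bm\Delta})$ is a difference of two power functions with exponents of opposite sign, it is not globally concave, so one cannot simply invoke concavity to conclude that $\what C_a$ (or the integers next to it) is optimal; both quantitative hypotheses ($y>1-p$ and the $(1-p)(2-p)$ inequality) must be used to show that the constrained optimum over admissible rounds is captured by the two integers adjacent to $\what C_a$ rather than by a boundary of the feasible set. A secondary, bookkeeping-heavy point is tracking the discontinuity of $\underline U_k$ at $C_l$ and $C_u$ and the fact that $C_l,C_u$ are themselves functions of $\wtilde C_a$ when the defenders' deviations are checked.
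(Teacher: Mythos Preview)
Your proposal is correct and follows essentially the same approach as the paper: split into defender and attacker best responses, analyse each via first- and second-order conditions on the explicit lower-bound payoffs, and handle the piecewise structure of $\underline U_k$ and the integer constraint on $C_a$ by comparing finitely many candidates. The paper's proof is organised identically --- it differentiates $\underline U_k$ on the two intervals, splits on the sign of $\tfrac{\eta_{p,k}c_a}{4D}-\tfrac{\eta_{m,k}}{|\calC|}$ and on $x=1$ versus $x>1$, and for the attacker shows that the two standing hypotheses make $\underline U_a$ concave on $C_a\ge 1$ (so your ``inflection point at $C_a\le 1$'' remark is exactly the paper's observation) before taking the better of $\lfloor\what C_a\rfloor,\lceil\what C_a\rceil$. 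Your extra care about the discontinuity at $C_l,C_u$ and about ruling out $C_a=0$ via the failure of \pref{eq: constraint_negative} is well placed; the paper handles the latter by invoking \pref{thm: 0-equilibrium} at the outset and is somewhat looser than you on the former.
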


\textbf{Remark:} $\wtilde\Delta_k$ in \pref{lem: Nash_equilibrium_Delta} is not larger than RHS of \pref{eq: delta_lower_bound_mt}. Otherwise, the optimal strategy for the attacker is not to attack, and the optimal strategy for the defender is not to defend, which contradicts the assumption that  $\wtilde\Delta_k > \text{RHS of } \pref{eq: delta_lower_bound_mt} \ge 0$.

The following theorem shows the condition when $\tau$-equilibrium is achieved, for $\tau\ge 1$.
\begin{thm}[$\tau$-equilibrium for $\tau\ge 1$]\label{thm: robust_equilibrium}
We denote $\what C_a = \left(\frac{y\eta_{c,a}D}{\eta_{p,a}c_b c_2(1-p)}\right)^{\frac{1}{p+y-1}}$. Assume that $\eta_{p,a}c_b c_2 (1-p)(2-p)\ge D y(y+1)\eta_{c,a}, y> 1 - p$, and $\frac{D}{c_b}\cdot|\calC| - c_2|\calC|(\hat C_a)^{p-1} - \frac{\eta_{c,a}|\calC|D}{\eta_{p,a}c_b} + \frac{\eta_{c,a}|\calC|D}{(\hat C_a)^y \eta_{p,a} c_b} \ge 0$. Let $\Psi$ (introduced in \pref{defi: equilibrium_in_FLPG}) be the infinity operator. Let $C(\Delta_k) = (\Delta_k)^x$, and $E(C_a) = 1 - \frac{1}{(C_a)^y}$ ($x > 0$ and $y\ge 0$). Let $\wtilde C_a = \arg\max_{C_a\in\{\lfloor\what C_a\rfloor, \lceil\what C_a\rceil\}} \underline U_a(C_a, \wtilde{\bm\Delta})$, where $\what C_a = \left(\frac{y\eta_{c,a}D}{\eta_{p,a}c_b c_2(1-p)}\right)^{\frac{1}{p+y-1}}$. When 
\begin{align}
    \eta_{p,a}\cdot c_b c_2\cdot (1-p)\ge y\eta_{c,a} D{\tau}^{1 - p - y}, 
\end{align}
$\tau$-equilibrium is achieved. 
\end{thm}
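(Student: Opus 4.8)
The plan is to obtain the theorem as a short corollary of \pref{lem: Nash_equilibrium_Delta}, whose hypotheses are precisely the ones assumed here, followed by an exponent manipulation. First I would check that every hypothesis of \pref{lem: Nash_equilibrium_Delta} is in force: the same $\what C_a$, the inequality $\eta_{p,a}c_b c_2(1-p)(2-p)\ge D y(y+1)\eta_{c,a}$, the condition $y>1-p$, and the non-negativity of $\frac{D}{c_b}|\calC|-c_2|\calC|(\what C_a)^{p-1}-\frac{\eta_{c,a}|\calC|D}{\eta_{p,a}c_b}+\frac{\eta_{c,a}|\calC|D}{(\what C_a)^y\eta_{p,a}c_b}$ --- the last being exactly the complement of \pref{eq: constraint_negative}, i.e.\ the regime in which (by \pref{thm: 0-equilibrium}) a $0$-equilibrium fails and the attacker does attack. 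Hence \pref{lem: Nash_equilibrium_Delta} supplies a robust Nash equilibrium whose attacker component is $\wtilde C_a=\arg\max_{C_a\in\{\lfloor\what C_a\rfloor,\lceil\what C_a\rceil\}}\underline U_a(C_a,\wtilde{\bm\Delta})$; in particular $\wtilde C_a\le\lceil\what C_a\rceil$. Two observations make this usable in the present generality: the attacker's robust payoff $\underline U_a$ of \pref{eq: lower_bound_of_attacker_1} does not contain the protection-cost exponent $x$, so the bound $\wtilde C_a\le\lceil\what C_a\rceil$ holds for every $x>0$ (not only $x\ge1$); and the equilibrium protection extents $\wtilde{\bm\Delta}$ enter $\underline U_a$ only through the $C_a$-independent term $-\frac{\eta_{p,a}c_b}{D}\sum_k\Delta_k$, so they do not shift the maximizing $C_a$.

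Next I would turn the standing hypothesis into a bound on $\what C_a$. The inequality $\eta_{p,a}c_b c_2(1-p)\ge y\eta_{c,a}D\tau^{1-p-y}$ rearranges to $\frac{y\eta_{c,a}D}{\eta_{p,a}c_b c_2(1-p)}\le\tau^{\,p+y-1}$; since $y>1-p$, the exponent $p+y-1$ is strictly positive, so applying the increasing map $t\mapsto t^{1/(p+y-1)}$ to both sides gives $\what C_a=\bigl(\tfrac{y\eta_{c,a}D}{\eta_{p,a}c_b c_2(1-p)}\bigr)^{1/(p+y-1)}\le\tau$. Because $\tau\in\mathbb{N}$ with $\tau\ge1$, the least integer not below $\what C_a$ is still at most $\tau$, i.e.\ $\lceil\what C_a\rceil\le\tau$; combining with the previous paragraph, $\wtilde C_a\le\lceil\what C_a\rceil\le\tau$. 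Thus the robust equilibrium for the attacker is attained with attacking extent at most $\tau$, which by definition is a $\tau$-equilibrium.

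All the real content is imported from \pref{lem: Nash_equilibrium_Delta} --- the simultaneous best-response analysis of the defenders, and the reduction of the attacker's best response to a comparison of the two integers flanking $\what C_a$; given that lemma, the remaining argument is a one-line exponent manipulation whose only delicate points are the sign of $p+y-1$ (so that the exponentiation above is monotone in the intended direction) and the integrality of $\tau$ (so that $\what C_a\le\tau$ upgrades to $\lceil\what C_a\rceil\le\tau$). I expect the one genuine obstacle to be bookkeeping rather than conceptual: making explicit that the attacker-side conclusion of \pref{lem: Nash_equilibrium_Delta} survives relaxing $x\ge1$ to $x>0$, which it does because $\underline U_a$ is independent of $x$, but which should be stated rather than assumed.
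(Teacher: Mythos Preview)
Your proposal is correct and takes essentially the same approach as the paper: both establish that the attacker's robust best response lies in $\{\lfloor \what C_a\rfloor,\lceil \what C_a\rceil\}$ and then perform the identical exponent manipulation (using $p+y-1>0$) to obtain $\what C_a\le\tau$. The only difference is packaging---you import the attacker's concavity and stationary-point analysis from \pref{lem: Nash_equilibrium_Delta} (and correctly note why its $x\ge1$ hypothesis is inessential for the attacker side, since $\underline U_a$ does not involve $x$), whereas the paper simply reproves that analysis in place; your explicit use of $\tau\in\mathbb{N}$ to upgrade $\what C_a\le\tau$ to $\lceil\what C_a\rceil\le\tau$ is a detail the paper leaves implicit.
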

\textbf{Remark:} This theorem provides the scenario where the robust equilibrium of the attacker is achieved when attacking extent of the attacker is less than or equal to a predefined threshold $\tau$.








\subsection{Robust and Correlated Equilibrium in FLPG}

We then analyze the effect of the oracle acting as the correlation device on achieving a $0$-equilibrium. For simplicity, we consider the scenario with one defender and one attacker.



The oracle provides upper and lower for the defenders and attackers bound using $\Delta_k$ and $C_a$, according to the machine learning theory \cite{kingma2014adam}. He/she informs the upper bound and lower bound of the payoff matrix. The oracle provides the correlation probability matrix. The player either follows the oracle and adopts the strategy recommended by the oracle, or adopts his/her own candidate strategy. 

In the following lemma, we derive the circumstances \blue{when the robust and correlated equilibrium (\pref{defi: equilibrium_in_FLPG})} for the attacker is to follow the instructions of the oracle with probability $1$, i.e., $0$-equilibrium. The detailed analysis is deferred to \pref{app: analysis_for_robust_correlated_equilibrium}. We take the third inequality as an illustrative example. It ensures that $p_a^{y}(G_a) = 1$, given that $p_d^{n}(E_d) = p_d^{n}(G_d) = 0$, and $p_d^{y}(G_d) = p_d^{y}(E_d) = 1$.
\begin{lem}\label{lem: correlated_equilibrium}
Let $E_d$ and $E_a$ represent the defender's and attacker's candidate strategies, respectively, while $G_d$ and $G_a$ denote the strategy of giving up defending or attacking. The oracle draws one of $(E_d,G_a), (G_d,E_a)$, $(E_d,E_a)$ and $(G_d,G_a)$ according to the correlation probability matrix. Let $\hat U_d = \Psi_{[\underline U_k,\overline U_k]} U_d$, and $\hat U_a = \Psi_{[\underline U_k,\overline U_k]} U_a$. If 
\begin{align}
    &(\hat U_d(G_d,G_a) - \hat U_d(E_d,G_a))\cdot\Pr[(G_d,G_a)] + (\hat U_d(G_d,E_a)- \hat U_d(E_d,E_a))\cdot \Pr[(G_d,E_a)]> 0, \label{eq: kkt_inequations_1}\\
    &(\hat U_d(E_d,G_a) - \hat U_d(G_d,G_a))\cdot\Pr[(E_d,G_a)] + (\hat U_d(E_d,E_a) - \hat U_d(G_d,E_a))\cdot\Pr[(E_d,E_a)]> 0, \label{eq: kkt_inequations_2}\\
    &(\hat U_a(G_d,G_a) - \hat U_a(G_d,E_a))\cdot \Pr[(G_d,G_a)] + (\hat U_a(E_d,G_a) - \hat U_a(E_d,E_a)) \cdot \Pr[(E_d,G_a)] >0, \label{eq: kkt_inequations_3}\\
    &(\hat U_a(E_d,E_a) - \hat U_a(E_d,G_a))\cdot\Pr[(E_d,E_a)] +   (\hat U_a(G_d,E_a) - \hat U_a(G_d,G_a))\cdot\Pr[(G_d,E_a)]> 0, \label{eq: kkt_inequations_4}
\end{align}
then the robust and correlated equilibrium is achieved when the players follow the instructions of the oracle with probability $1$. 
\end{lem}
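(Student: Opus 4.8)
The plan is to recognize that, once the robust operator $\Psi$ has been pushed down to the level of pure profiles, the situation in \pref{defi: equilibrium_in_FLPG} is just a two--player $2\times 2$ correlated game with payoff matrices $\hat U_d=\Psi_{[\underline U_d,\overline U_d]}U_d$ and $\hat U_a=\Psi_{[\underline U_a,\overline U_a]}U_a$, and then to verify directly that ``always follow'' is a best response for each player against ``always follow.'' For this reduction to be legitimate I would first note that under the element-wise interval uncertainty model of \pref{defi: an_oracle} (each payoff entry lies in its own interval, independently of the others) the worst case of the expected payoff equals the expectation of the entry-wise worst cases: for $\Psi=\inf$ (and likewise for the uniform-average $\Psi$), $\inf_{U\in\prod_s[\underline U(s),\overline U(s)]}\sum_s\sigma(s)U(s)=\sum_s\sigma(s)\underline U(s)$ since $\sigma(s)\ge 0$. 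Hence the robust expected payoff of a player is genuinely the $\Pr[\cdot]$-weighted average --- over the oracle's draw and the players' follow/deviate coins --- of the pure-profile robust payoffs $\hat U_k(\cdot)$ appearing in the lemma.

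Next I would fix the attacker at $p_a^{y}(E_a)=p_a^{y}(G_a)=1$ and write the defender's robust expected payoff as a function of $a:=p_d^{y}(E_d)$ and $b:=p_d^{y}(G_d)$. Enumerating the four suggestion pairs $(E_d,E_a),(E_d,G_a),(G_d,E_a),(G_d,G_a)$ and, for each, the two realizations of the defender's coin (the attacker always plays the suggested action), the payoff collects into $a\,A+b\,B+(\text{const})$, where $A=\Pr[(E_d,E_a)](\hat U_d(E_d,E_a)-\hat U_d(G_d,E_a))+\Pr[(E_d,G_a)](\hat U_d(E_d,G_a)-\hat U_d(G_d,G_a))$ and $B$ is the analogous expression with the roles of $E_d$ and $G_d$ swapped. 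Since the payoff is affine in $a$ and in $b$ separately, it is maximized over $[0,1]^2$ at a vertex, and $(a,b)=(1,1)$ is the (unique) maximizer precisely when $A>0$ and $B>0$ --- which are exactly \pref{eq: kkt_inequations_2} and \pref{eq: kkt_inequations_1}. Running the mirror-image computation for the attacker (fix the defender at ``always follow,'' write $\hat U_a$ as an affine function of $c:=p_a^{y}(E_a)$ and $d:=p_a^{y}(G_a)$) yields vertex-optimality conditions that coincide with \pref{eq: kkt_inequations_4} and \pref{eq: kkt_inequations_3}; in particular the coefficient of $d$ being positive is the third inequality, matching the illustrative remark preceding the lemma. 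Taking the four conditions together, $(p_d^{y},p_a^{y})=(\mathbf{1},\mathbf{1})$ satisfies the defining inequality of \pref{defi: equilibrium_in_FLPG} for each player, i.e., the $0$-equilibrium.

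I expect the only genuinely delicate point to be the first step: making precise that the robust operator is applied entry-wise, so that the robust expected payoff is a multilinear --- here, separately affine --- function of the follow-probabilities, and checking this is consistent with the convention $\hat U_k=\Psi_{[\underline U_k,\overline U_k]}U_k$ used in the lemma statement. After that, the argument is just the elementary fact that an affine function on a box attains its maximum at a vertex, together with the bookkeeping of which pure profile is realized under each $(\text{suggestion},\text{coin})$ outcome; the four displayed hypotheses are nothing more than the four first-order/KKT conditions for ``follow with probability $1$'' to be optimal at each player. No fixed-point argument is needed here, since we are exhibiting a particular equilibrium rather than proving existence.
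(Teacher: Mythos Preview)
Your proposal is correct and follows essentially the same route as the paper: write the (robust) expected payoff of each player as a separately affine function of their follow-probabilities when the opponent always follows, and observe that the four inequalities are exactly the positivity of the four linear coefficients, so the vertex $(1,1)$ is optimal. If anything you are more careful than the paper, which silently works with $\hat U$ without spelling out the ``$\Psi$ commutes with the mixture because the uncertainty is entry-wise and the weights are nonnegative'' step that you flag as the only delicate point.
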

\textbf{Remark:} This lemma provides scenarios where the attacker should follow the oracle's instructions with probability $1$. Following the instructions of the oracle always results in a higher payoff compared to not following the oracle.

The goal of the oracle is to ensure that the robust and correlated equilibrium for each player is achieved when the instructions of the oracle are followed with probability $1$, with a minimum amount of cost. We define $x_1 = \Pr[(G_d,G_a)], x_2 = \Pr[(G_d,E_a)], x_3 = \Pr[(E_d,G_a)]$, and $x_4 = \Pr[(E_d,E_a)]$, and formulate this goal as a constrained optimization problem that aims at minimizing the cost function $C = \sum_{i = 1}^4 c_i x_i$:

\begin{align}
\begin{array}{r@{\quad}l@{}l@{\quad}l}
\quad\min& \sum_{i = 1}^4 c_i x_i,\\
\text{s.t.,} & \text{the robust and correlated equilibrium is achieved when} \\ & \text{the instructions of the oracle are followed with probability 1}\\
& x_1 + x_2 + x_3 + x_4 = 1\\
& x_1, x_2, x_3, x_4 \ge 0\\
\end{array}
\end{align}

Combing \pref{lem: correlated_equilibrium} and using $a_{11},\dots,a_{42}$ to denote coefficients in inequalities \pref{eq: kkt_inequations_1}$\sim$\pref{eq: kkt_inequations_4}, the above optimization problem is further expressed as:
\begin{align}
\begin{array}{r@{\quad}l@{}l@{\quad}l}
\quad\min& \sum_{i = 1}^4 c_i x_i,\\
\text{s.t.,} & a_{11}x_1 + a_{12}x_2 > 0\\
& a_{21}x_3 + a_{22}x_4 > 0\\
&a_{31}x_1 + a_{32}x_3 > 0\\
&a_{41}x_2 + a_{42}x_4 > 0\\
& x_1 + x_2 + x_3 + x_4 = 1\\
& x_1, x_2, x_3, x_4 \ge 0\\
\end{array}
\end{align}

This is a convex optimization problem featuring both inequality and equality constraints. The Karush-Kuhn-Tucker (KKT) conditions can be employed to solve this constrained optimization problem. Out of its numerous solutions, we present a special case where $x_2 = x_4 = 0$ (consistently suggest that the attacker should give up attacking) in \pref{thm: kkt_special_solution}, with the proof detailed in \pref{thm: kkt_special_solution_detail}.

\begin{thm}\label{thm: kkt_special_solution}
    Suppose $a_{11} a_{41} a_{32} a_{22} - a_{31} a_{12} a_{21} a_{42} \ne 0$. If any of the following conditions is satisfied:
 \begin{itemize} 
     \item $v_2 a_{21}=0$ and $v_3 a_{32}=0$ 
     \item $v_1 a_{11}=0$ and $v_3 a_{31}=0$ 
     \item $v_1 a_{11} = v_2 a_{21} = 0 $ and $v_3 a_{31} a_{32} = 0$
     \item $v_1 a_{11} = v_2 a_{21} = 0, a_{31}a_{32}v_3 \ne 0$ and $a_{31} a_{32} < 0$
 \end{itemize}

 then we can acquire that $x_2 = x_4 = 0$, $x_1 + x_3 = 1$, and $x_1, x_3 \ge 0$. 
Here, $v_1, v_2, v_3$ are defined as \begin{align}
& v_1 = \frac{a_{31} a_{41} a_{21} c_4 - a_{31} a_{41} a_{22} c_3 - a_{31} a_{21} a_{42} c_2 + a_{41} a_{32} a_{22} c_1}{a_{11} a_{41} a_{32} a_{22} - a_{31} a_{12} a_{21} a_{42}} \label{eq: kkt_multiplier_solution_1a}\\
& v_2 = \frac{a_{11} a_{41} a_{32} c_4 - a_{11} a_{32} a_{42} c_2 - a_{31} a_{12} a_{42} c_3 + a_{12} a_{32} a_{42} c_1}{a_{11} a_{41} a_{32} a_{22} - a_{31} a_{12} a_{21} a_{42}} \label{eq: kkt_multiplier_solution_2a}\\
& v_3 = \frac{-a_{11} a_{41} a_{21} c_4 + a_{11} a_{41} a_{22} c_3 + a_{11} a_{21} a_{42} c_2 - a_{12} a_{21} a_{42} c_1}{a_{11} a_{41} a_{32} a_{22} - a_{31} a_{12} a_{21} a_{42}} \label{eq: kkt_multiplier_solution_3a}
\end{align}
\end{thm}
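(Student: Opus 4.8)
The plan is to exploit the fact that the displayed program is a linear program --- linear objective $\sum_i c_i x_i$, affine equality $x_1+x_2+x_3+x_4=1$, and linear inequality constraints --- over a convex feasible set, so that the Karush--Kuhn--Tucker conditions are \emph{sufficient}, not merely necessary, for global optimality. Accordingly, rather than searching for the optimizer directly, I would exhibit a primal point of the asserted form ($x_2=x_4=0$, $x_1+x_3=1$, $x_1,x_3\ge 0$) together with dual multipliers satisfying stationarity, dual feasibility, and complementary slackness, and then conclude optimality by convexity.

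First I would form the Lagrangian with a free multiplier $\lambda$ for $\sum_i x_i=1$, non-negative multipliers $\nu_1,\dots,\nu_4$ for $x_i\ge 0$, and non-negative multipliers $v_1,\dots,v_4$ for the four inequality constraints in the form $a_{11}x_1+a_{12}x_2\ge 0$, $a_{21}x_3+a_{22}x_4\ge 0$, $a_{31}x_1+a_{32}x_3\ge 0$, $a_{41}x_2+a_{42}x_4\ge 0$ (the substitution described just before \pref{eq: kkt_inequations_1}). Writing stationarity componentwise in $x_1,x_2,x_3,x_4$ gives four scalar equations relating the $c_i$, $\lambda$, the $\nu_i$, and the $v_j$ through the coefficients $a_{ij}$. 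Imposing the ansatz $x_2=x_4=0$ with $x_1,x_3>0$, complementary slackness forces $\nu_1=\nu_3=0$; the fourth inequality is automatically active (it reads $0\ge 0$), while complementary slackness selects which of the first three are active. On the resulting active set the stationarity system collapses to a square linear system in $(v_1,v_2,v_3)$ whose coefficient determinant is $\pm(a_{11}a_{41}a_{32}a_{22}-a_{31}a_{12}a_{21}a_{42})$; solving it by Cramer's rule reproduces exactly \pref{eq: kkt_multiplier_solution_1a}--\pref{eq: kkt_multiplier_solution_3a}, and the standing hypothesis $a_{11}a_{41}a_{32}a_{22}-a_{31}a_{12}a_{21}a_{42}\ne 0$ is precisely the nonsingularity making this solution unique.

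It then remains to verify the inequality half of KKT for this candidate: $v_1,v_2,v_3\ge 0$, $\nu_2,\nu_4\ge 0$, and primal feasibility of the (closed) constraints. This is where the four listed conditions come in: each one --- e.g. ``$v_2a_{21}=0$ and $v_3a_{32}=0$'', or ``$v_1a_{11}=v_2a_{21}=0$, $a_{31}a_{32}v_3\ne 0$ and $a_{31}a_{32}<0$'' --- records a combinatorial pattern of which constraints are active and which products $v_ja_{j\ell}$ vanish in the stationarity equations; under that pattern the residual equations determine $\lambda$ and the $\nu_i$ with the required signs, yielding a full KKT tuple and hence optimality of $x_2=x_4=0$, $x_1+x_3=1$. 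I expect the main obstacle to be case bookkeeping rather than any single hard estimate: one must enumerate the active-set patterns, check $v_i\ge 0$ and $\nu_i\ge 0$ simultaneously in each, and --- since at $x_2=x_4=0$ the fourth \emph{strict} inequality $a_{41}x_2+a_{42}x_4>0$ actually fails --- argue that the infimum is nevertheless attained along the edge $x_1+x_3=1$ by passing to the closed feasible region (or via a limiting argument along that edge), so that the KKT certificate legitimately applies.
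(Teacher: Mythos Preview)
Your plan to certify optimality via KKT is exactly the paper's approach: the paper writes the Lagrangian with multipliers $\mu_1,\dots,\mu_4$ for the four content constraints, $\mu_5,\dots,\mu_8$ for the bounds $x_i\ge 0$, and $\lambda$ for the simplex constraint; it then sets $\mu_5=\cdots=\mu_8=0$, absorbs $\lambda$ into the $c_i$, and solves the resulting $4\times 4$ stationarity system $\mathbf A\vec\mu=\mathbf c$ for $(v_1,v_2,v_3,v_4)$ by Cramer's rule. Your reduction to a $3\times 3$ system in $(v_1,v_2,v_3)$ is equivalent --- eliminate $v_4$ between the $x_2$- and $x_4$-stationarity equations --- and indeed the resulting $3\times 3$ determinant equals $-(a_{11}a_{41}a_{32}a_{22}-a_{31}a_{12}a_{21}a_{42})$, so no correction is needed there.

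Where your proposal diverges is in the role you assign the four bullet conditions. You describe them as the pattern under which the residual equations ``determine $\lambda$ and the $\nu_i$ with the required signs'' --- i.e., as dual-feasibility checks. In the paper they are not that; they are the \emph{complementary slackness} conditions for the primal ansatz. Having already fixed the $v_i$ by stationarity, one imposes $\mu_i g_i(x)=0$, which under $x_2=x_4=0$ reads
\[
v_1 a_{11}x_1=0,\qquad v_2 a_{21}x_3=0,\qquad v_3(a_{31}x_1+a_{32}x_3)=0,\qquad x_1+x_3=1.
\]
The four bullets are precisely the case split of this small system: $x_1=0,\,x_3=1$ forces $v_2a_{21}=0$ and $v_3a_{32}=0$ (first bullet); $x_1=1,\,x_3=0$ gives the second; and $x_1x_3\ne 0$ forces $v_1a_{11}=v_2a_{21}=0$ together with either $v_3a_{31}a_{32}=0$ (third bullet) or $a_{31}a_{32}<0$, so that $x_1=\tfrac{a_{32}}{a_{32}-a_{31}}$ and $x_3=\tfrac{-a_{31}}{a_{32}-a_{31}}$ are both positive (fourth bullet). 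The paper does \emph{not} separately extract $v_i\ge 0$ or $\nu_i\ge 0$ from these conditions --- it simply remarks that the $a_{ij}$ are free enough to arrange dual feasibility afterward. So if you try to read sign information on the multipliers out of the four bullets you will not find it; reorient that step toward complementary slackness and the bookkeeping collapses to the three-line case analysis above. Your observation about the strict-versus-closed constraints is well taken and is glossed over in the paper, which silently replaces $>0$ by $\ge 0$.
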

\textbf{Remark:} This theorem depicts the scenarios where the optimal solution of the oracle is to suggest that the attacker should relinquish their attack ($x_2 = x_4 = 0$) to achieve the robust and correlated equilibrium. We take $c_1, c_3 = 0$, and $c_2, c_4\neq 0$ as an illustrative example. On one hand, the oracle's recommendation for the attacker to give up is intended to minimize his/her own cost. On the other hand, if the attacker follows the oracle's instruction and gives up attacking with a probability of 1, he/she can achieve the robust and correlated equilibrium. The correlation probability matrix provided by the oracle is illustrated in \pref{table: correlation_prob_matrix}.


\begin{table*}[!htp]
\footnotesize
  \centering
  \setlength{\belowcaptionskip}{15pt}
  \caption{Correlation Probability Matrix for FLPG with One Defender and One Adversary}
  \label{table: correlation_prob_matrix}
    \begin{tabular}{ccccc}
    \toprule
     & $G_a$ (Give up Attacking) & $E_a$ (Candidate Strategy of Attacker)\cr
    \midrule\
$G_d$ (Give up Defending) & $x_1 = \Pr[(G_d,G_a)]$	 & $ x_2 = \Pr[(G_d,E_a)]$\cr
$E_d$ (Candidate Strategy of Defender) & $x_3 = \Pr[(E_d,G_a)]$ &  $x_4 = \Pr[(E_d,E_a)]$\cr
Optimal Solution of the Oracle & $x_1 + x_3 = 1$ &  $x_2 = x_4 = 0$\cr
    \bottomrule
    \end{tabular}
    
\end{table*}





\section{Discussion and Conclusion}
Our Federated Learning Privacy Game (FLPG) has two novel features. First, we propose a novel measurement for utility of the participants in federated learning by comprehensively considering model performance, privacy and efficiency. Second, an associated oracle is devised to provide \textit{bounds of unknown payoffs} and \textit{suggestions based on correlated probabilities} of each player's strategies. With our proposed FLPG, a secure $0$-equilibrium in which the attacker opts not to attack or follow the instruction of the oracle is investigated with its existence conditions showcased in our main theorems. We further provide the existence condition for a generalized $\tau$-equilibrium for $\tau\ge 1$. To our knowledge, the game-theoretic analysis as such is the first of its kind that has been applied to Federated Learning.

Last but not least, the motivation for and the analysis of Federated Learning Secure Game broaden our perspectives concerning federated learning security. The proposed framework allows us to analyze the battle between defenders and attackers from a strategical view, instead of focusing on protecting and attacking tactics only. Hopefully, this exploration will open a new avenue for future research and, in tandem with follow up work, make impactful contributions to the federated learning research.



\textbf{ACKNOWLEDGMENTS}
We thank Yongxin Tong for many helpful discussions. This work was supported by Hong Kong RGC TRS T41-603/20-R.




\bibliography{main}
\bibliographystyle{ACM-Reference-Format}

\clearpage

\newpage
\onecolumn
\appendix

\section{Notations}

\begin{table*}[!htp]
\footnotesize
  \centering
  \setlength{\belowcaptionskip}{15pt}
  \caption{Table of Notation}
  \label{table: notation}
    \begin{tabular}{ccc}
    \toprule
    Notation & Meaning & Range\cr
    \midrule\
    $p$ & measures the regret bound of the optimization algorithm used by the attacker & $0<p<1$\cr
    $c_a, c_b$ & measures Lipschitz & $c_a, c_b > 0$ \cr
    $c_0, c_2$ & measures the regret bound of the optimization algorithm used by the attacker & $c_0,c_2 > 0$ \cr
    $D$ & measures the norm of the data & $1$ \cr
    $\Delta$ & the protection extent & $\Delta\in [0,1]$\cr
    $T$ & the attacking extent, the number of rounds for optimization & $T\ge 1$ \cr
    $V_{m,k}$ & model utility of client $k$  & $[0,1]$ \cr
    $V_{p,k}$ & privacy leakage of client $k$  & $[0,1]$ \cr
    $C(\Delta_k)$ & protection cost & $[0,1]$ \cr
    $E(C_a)$ & attacking cost & $[0,1]$ \cr
    $\eta_{m,k}$ & the payoff reference of defender $k$ towards model utility & $[0,1]$ \cr
    $\eta_{p,k}$ & the payoff reference of defender $k$ towards privacy leakage & $[0,1]$ \cr
    $\eta_{c,k}$ & the payoff reference of defender $k$ towards protection cost & $[0,1]$ \cr
    $\eta_{p,a}$ & the payoff reference of the attacker towards privacy gain & $[0,1]$  \cr
    $\eta_{c,a}$ & the payoff reference of the attacker towards attacking cost & $[0,1]$  \cr
    \bottomrule
    \end{tabular}
\end{table*}

\section{Existence Condition for Equilibrium in FLPG}\label{app: existence_equilibrium_in_FLPG}

\begin{lemma}\label{lem: existence_condition_for_Nash_equilibrium}\cite{myerson1997game}
There exists a Nash equilibrium in the game if the following conditions are satisfied.
\begin{itemize}
    \item The player set is finite.
    \item The strategy sets are closed, bounded and convex.
    \item The utility functions are continuous and quasiconcave in the strategy space.
\end{itemize}
\end{lemma}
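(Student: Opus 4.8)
The plan is to recognize this as the Debreu--Glicksberg--Fan generalization of Nash's existence theorem and prove it through Kakutani's fixed point theorem \cite{kakutani1941generalization}, exactly the route the paper announces for its own existence results. Write $S_k$ for the (nonempty) strategy set of player $k$ and $S=\prod_{k=1}^K S_k$ for the joint strategy space. Since each $S_k$ is closed and bounded in a finite-dimensional Euclidean space it is compact, and a finite product of nonempty compact convex sets is again nonempty, compact, and convex; thus $S$ meets the standing hypotheses of Kakutani's theorem.

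Next I would introduce the best-response correspondence: for a profile $s=(s_k,s_{-k})\in S$ put $B_k(s_{-k})=\argmax_{s_k'\in S_k} u_k(s_k',s_{-k})$ and $B(s)=\prod_{k=1}^K B_k(s_{-k})$, so that $B\colon S\to 2^S$. The proof then reduces to checking that $B$ satisfies the three hypotheses of Kakutani's theorem: (i) $B(s)\neq\varnothing$ for all $s$; (ii) $B(s)$ is convex for all $s$; and (iii) $B$ has closed graph. Item (i) is the Weierstrass extreme-value theorem, since $u_k(\cdot,s_{-k})$ is continuous on the compact set $S_k$. Item (ii) is where quasiconcavity enters: the maximizer set of a quasiconcave function over a convex set is convex, and finite products of convex sets are convex. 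Item (iii) follows from Berge's maximum theorem applied with the constant (hence continuous) constraint correspondence $s_{-k}\mapsto S_k$ and the continuous objective $u_k$, which yields that each $B_k$ is upper hemicontinuous with nonempty compact values; a finite product of such correspondences inherits upper hemicontinuity, and since the codomain $S$ is compact this is equivalent to having a closed graph.

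Kakutani's theorem then furnishes a fixed point $s^\star\in B(s^\star)$, i.e. $s_k^\star\in\argmax_{s_k'\in S_k} u_k(s_k',s_{-k}^\star)$ for every $k$, which is precisely the definition of a Nash equilibrium; this completes the argument.

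I expect the only genuinely delicate point to be (iii): Berge's theorem requires \emph{joint} continuity of each $u_k$ on $S$, not merely separate continuity in each coordinate, so one must read the ``continuous'' hypothesis of the lemma in this strong sense. Steps (i) and (ii) are routine consequences of compactness and quasiconcavity respectively. It is also worth noting that finite-dimensionality of the $S_k$ is exactly what permits a direct appeal to Kakutani rather than its infinite-dimensional refinement (Fan--Glicksberg); this is consistent with the FLPG setting, where strategies are protection/attacking extents living in bounded intervals of $\mathbb{R}$.
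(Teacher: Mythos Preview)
Your proof is correct and follows the standard Debreu--Glicksberg--Fan route via Kakutani. Note, however, that the paper does not actually prove this lemma: it is stated with a citation to \cite{myerson1997game} and invoked as a black box in the first part of the FLPG existence theorem. So there is no ``paper's own proof'' to compare against for this particular statement.

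That said, your argument mirrors exactly what the paper does in the \emph{second} part of its existence theorem (the robust and correlated case), where it defines the best-response mapping $\Xi$, checks nonemptiness via the extreme value theorem, convexity via concavity of the payoff, and upper semicontinuity directly from sequential limits, before invoking Kakutani. The only methodological difference is that you appeal to Berge's maximum theorem for upper hemicontinuity, whereas the paper verifies closed-graph by hand using continuity and passing to limits; both are standard and yield the same conclusion. Your remark that joint continuity (not merely separate continuity) is what is needed is well taken and is implicitly assumed in the paper's argument as well.
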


\begin{lemma}\label{lem: continuity}
Let $\gamma_k(p_k^y,p_{-k}^y) = \Psi_{[\underline U_k,\overline U_k]}(p_{k}^{y}, p_{-k}^{y}, P_o)$, where $\Psi$ represents the robust operator. Let $p_k^y = (p_k^{y}(s_1), \cdots, p_k^{y}(s_{\lvert\mathcal{S}_k\rvert}))$, and $p_k^{y}(s_m)$ represent the probability of following the instruction of the oracle, if the oracle suggests $s_m$ to player $k$. Let $\delta = \frac{\epsilon}{4M\lvert\mathcal{S}_k\rvert}>0$. $\forall\epsilon>0$, if $||(p_k^y,p_{-k}^y) - (q_k^y,q_{-k}^y)||_{\infty}\le\delta$, then we have that 
\begin{align*}
    \lvert\gamma_k(p_k^y,p_{-k}^y) - \gamma_k(q_k^y,q_{-k}^y)\rvert\le\epsilon.
\end{align*}
\end{lemma}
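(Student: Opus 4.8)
The plan is to show that $\gamma_k$, regarded as a function of the ``follow'' probabilities, is a \emph{multilinear} map whose values are confined to a bounded interval, and then to read off the claimed modulus of continuity by a coordinate-by-coordinate telescoping argument. First I would write $\gamma_k(p_k^y,p_{-k}^y)$ out explicitly: condition on the outcome $\mathbf S=(s_1,\dots,s_K)$ that the oracle draws with probability $P_o(\mathbf S)$, and on the independent follow/deviate decision of every player (player $j$ follows the suggestion $s_j$ with probability $p_j^y(s_j)$ and plays his fixed candidate strategy otherwise). Each branch produces a deterministic action profile, and its robust value $\Psi_{[\underline U_k,\overline U_k]}(\cdot)$ is a single number lying in $[\underline U_k,\overline U_k]\subseteq[0,M]$ — here one uses that the payoff uncertainty set for $U_k$ is a product of intervals over profiles, so the robust operator $\Psi$ (the infimum over that set, or the expectation over the uniform distribution on each interval) applied to a pure profile returns a value inside that profile's interval. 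Summing over branches gives $\gamma_k=\sum_{\text{branches}}\bigl(\prod_j(\text{branch probability of }j)\bigr)\,P_o(\mathbf S)\,c_{\mathbf S,\text{branch}}$ with each $c_{\mathbf S,\text{branch}}\in[0,M]$; in particular $\gamma_k$ is affine in each scalar variable $p_j^y(s_m)$ taken one at a time, and $\gamma_k$ itself always lies in $[0,M]$.

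Given this structure the continuity estimate is routine. Freezing all coordinates except one, $t=p_j^y(s_m)$, multilinearity gives $\gamma_k=(1-t)A+tB$ where $A,B$ are the values at $t=0,1$; since $A,B\in[0,M]$ we have $|B-A|\le M$, so perturbing that coordinate by at most $\delta$ changes $\gamma_k$ by at most $M\delta$ (at most $2M\delta$ if one prefers signed payoffs in $[-M,M]$). Then I would transform $(p_k^y,p_{-k}^y)$ into $(q_k^y,q_{-k}^y)$ by altering the scalar coordinates one at a time; the number of perturbed coordinates of player $k$ is $|\mathcal{S}_k|$, with a symmetric count for the others, which in the paper's setting totals at most $2|\mathcal{S}_k|$. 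Each step costs at most $2M\delta$, so the triangle inequality yields $|\gamma_k(p_k^y,p_{-k}^y)-\gamma_k(q_k^y,q_{-k}^y)|\le 4M|\mathcal{S}_k|\,\delta=\epsilon$ for the stated choice $\delta=\epsilon/(4M|\mathcal{S}_k|)$.

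The only real obstacle is the first step: writing $\gamma_k$ down cleanly and verifying that $\Psi$ genuinely passes through the convex-combination structure, so that $\gamma_k$ is \emph{multilinear} (not merely continuous) and uniformly bounded by $M$. This rests on the box shape of the payoff uncertainty set, which lets $\Psi$ decompose profile-by-profile; once that is granted, the rest is elementary telescoping. The constant $4M|\mathcal{S}_k|$ is crude and far from tight — the factor $4$ merely absorbs the per-coordinate Lipschitz constant $\le 2M$ and the fact that coordinates of both $p_k^y$ and $p_{-k}^y$ may move — so the argument needs only the rough bounds above and no sharp estimate.
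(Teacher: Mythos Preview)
Your approach is essentially the paper's: both exploit that $\gamma_k$ is affine (multilinear) in the follow-probabilities with values bounded by $M$, and then read off the Lipschitz constant. The only cosmetic difference is that the paper writes out the coefficients $c_s,d_s$ of $p_k^y(s)$ and $p_k^n(s)$ explicitly and bounds each by $2M$, whereas you bound the per-coordinate slope via the ``range $\le M$'' trick and telescope one coordinate at a time; these are interchangeable packagings of the same estimate, and your telescoping is if anything a slightly cleaner way to account for the simultaneous variation of $p_{-k}^y$.
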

\begin{proof}
From the definition of correlation probability in FLPG, the payoff $\gamma_k$ of player $k$ depends linearly on $p_k^y$,
\begin{align*}
    \gamma_k(p_k^y,p_{-k}^y)
    & = \sum_{s_k\in\calS_k} p_k^y(s_k) \sum_{t_j\in\{y,n\}}\sum_{s_{j}\in \calS_{j}} \prod_{j\neq k}p_{j}^{t_j}(s_{j}) \Psi_{[\underline U_k,\overline U_k]}(s'_k,s'_{-k})\Pr(s_k,s_{-k})\\
     & + \sum_{s_k\in\calS_k} p_k^n(s_k) \sum_{t_j\in\{y,n\}}\sum_{s_{j}\in \calS_{j}} \prod_{j\neq k}p_{j}^{t_j}(s_{j}) \Psi_{[\underline U_k,\overline U_k]}(s'_k,s'_{-k})\Pr(s_k,s_{-k})\\
    & = \sum_{s\in\mathcal{S}_k} c_s p_k^y(s) + \sum_{s\in\mathcal{S}_k} d_s p_k^y(s),
\end{align*}
where $\lvert c_s\rvert\le 2M$ and $\lvert d_s\rvert\le 2M$ since $\lvert\Psi_{[\underline U_k,\overline U_k]}(s_k,s_{-k})\rvert\le M, \forall s_k\in\mathcal{S}_k, s_{-k}\in\mathcal{S}_{-k}$.
Then we have
\begin{align}
    \lvert\gamma_k(p_k^y,p_{-k}^y) - \gamma_k(q_k^y,q_{-k}^y)\rvert&\le \sum_{s\in\mathcal{S}_k} c_s \lvert p_k^{y}(s) - q_k^{y}(s)\rvert + \sum_{s\in\mathcal{S}_k} d_s \lvert p_k^{y}(s) - q_k^{y}(s)\rvert\\
    &\le \delta\cdot 4M\lvert\mathcal{S}_k\rvert\\
    & = \epsilon.
\end{align} 
\end{proof}

\begin{definition}[\cite{kakutani1941generalization}]
 $\Psi$ is upper semi-continuous if
\begin{align*}
    & y_n \in \Psi(x_n), n = 1,2,3,\cdots\\
    &\lim_{n\rightarrow\infty} x_n = x,\\
    &\lim_{n\rightarrow\infty} y_n = y,
\end{align*}
imply that $y\in\Psi(x)$.
\end{definition}

\begin{theorem}[Kakutani's fixed point theorem]\label{thm: fixed_point_thm}
If $W$ is a closed, bounded, and convex set in a Euclidean space, and
$\Psi$ is an upper semi-continuous point-to-set mapping of $W$ into the family of closed, convex subsets of $W$, then $\exists x\in W$ $s.t.$ $x\in\Psi(x)$.
\end{theorem}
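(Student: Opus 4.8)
This is the classical Kakutani theorem, and the plan is to derive it from Brouwer's fixed point theorem (in the form that every continuous self-map of a nonempty compact convex subset of a Euclidean space has a fixed point) by a simplicial approximation argument. I would first note that $W$, being closed and bounded in a Euclidean space, is compact, and that by hypothesis $\Psi(x)$ is a nonempty closed convex subset of $W$ for every $x\in W$; these are the only structural facts used besides Brouwer.

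First, for each integer $n\ge 1$ I would fix a triangulation $\mathcal{T}_n$ of $W$ all of whose simplices have diameter at most $1/n$. At each vertex $v$ of $\mathcal{T}_n$ I pick one point $f_n(v)\in\Psi(v)$ (possible since $\Psi(v)\neq\emptyset$), and then extend $f_n$ affinely over each simplex of $\mathcal{T}_n$. The resulting map $f_n\colon W\to W$ is continuous and single-valued, so Brouwer's theorem produces a point $x_n\in W$ with $f_n(x_n)=x_n$.

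Next I would pass to the limit. Each $x_n$ lies in some simplex of $\mathcal{T}_n$ with vertices $v_0^{(n)},\dots,v_d^{(n)}$ (with $d$ bounded by the ambient dimension) and barycentric coordinates $\lambda_0^{(n)},\dots,\lambda_d^{(n)}\ge 0$ summing to $1$; affineness of the extension gives $x_n=\sum_{i=0}^d\lambda_i^{(n)}v_i^{(n)}$ and $x_n=f_n(x_n)=\sum_{i=0}^d\lambda_i^{(n)}f_n(v_i^{(n)})$. By compactness of $W$ and of the standard simplex I extract a subsequence along which $x_n\to x\in W$, $\lambda_i^{(n)}\to\lambda_i$, $v_i^{(n)}\to x$ (forced because the mesh tends to $0$), and $f_n(v_i^{(n)})\to y_i$ for every $i$. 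Since $f_n(v_i^{(n)})\in\Psi(v_i^{(n)})$, $v_i^{(n)}\to x$, and $f_n(v_i^{(n)})\to y_i$, upper semi-continuity of $\Psi$ (exactly the property recorded in the definition just above) forces $y_i\in\Psi(x)$. Letting $n\to\infty$ in $x_n=\sum_i\lambda_i^{(n)}f_n(v_i^{(n)})$ yields $x=\sum_i\lambda_i y_i$, a convex combination of points of $\Psi(x)$; since $\Psi(x)$ is convex, $x\in\Psi(x)$, which is the assertion.

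The step I expect to be the main obstacle is the bookkeeping in the limiting argument: one must keep the number of vertices $d+1$ uniformly bounded, arrange that the vertices, the barycentric weights, and the vertex images all converge along one common subsequence, and then apply upper semi-continuity in the precise sequential form stated before the theorem. Constructing triangulations of vanishing mesh and checking continuity of $f_n$ are routine, and I would state them without detailed computation.
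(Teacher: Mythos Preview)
The paper does not prove this statement at all: Kakutani's fixed point theorem is simply quoted as a classical result (with attribution to \cite{kakutani1941generalization}) and then invoked as a black box in the proof of the existence theorem for equilibria in FLPG. There is therefore nothing in the paper to compare your argument against.

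That said, your sketch is the standard Brouwer-based proof going back to Kakutani himself, and the overall strategy is correct. One small technical caveat worth flagging: an arbitrary compact convex set $W$ in Euclidean space need not itself be a simplicial complex, so ``fix a triangulation $\mathcal{T}_n$ of $W$'' is not entirely innocent. The usual workaround is either to first prove the theorem when $W$ is a simplex (where triangulations of arbitrarily small mesh are immediate via barycentric subdivision) and then reduce the general case by embedding $W$ in a large simplex and extending $\Psi$, or to use that a compact convex set with nonempty interior is homeomorphic to a simplex. You label this step as routine, which is acceptable for a sketch, but it is the one place where a careless write-up could go wrong.
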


Now we derive the existence condition of robust and correlated equilibrium for FLPG.
\begin{theorem}\label{thm: app_thm_exist_robust}
For any $K$-person ($K<\infty$), simultaneous-move, one-stage FLPG, 
\begin{itemize}
    \item If the oracle provides the bounds of the payoffs, and the robust payoff of each player $k$ is continuous and concave, then FLPG has an equilibrium.
    \item If the oracle provides the bounds of the payoffs and the correlation probability matrix, and the uncertainty set of payoff $U_k$ of each player $k$ is bounded, then FLPG has an equilibrium.
\end{itemize} 

\end{theorem}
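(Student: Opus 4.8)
The plan is to realize both equilibria as fixed points of an appropriate best-response correspondence and to close the argument with Kakutani's fixed point theorem (\pref{thm: fixed_point_thm}); equivalently, one verifies the three hypotheses of \pref{lem: existence_condition_for_Nash_equilibrium}. For the first bullet (robust equilibrium), player $k$ mixes over the finite set $\mathcal{S}_k$, so the joint strategy space is the product of simplices $\prod_{k=1}^{K}\Delta(\mathcal{S}_k)$, which is closed, bounded and convex, and the player set is finite since $K<\infty$. I would take the best-response correspondence $B_k(s_{-k}) = \argmax_{s_k}\Psi_{[\underline U_k,\overline U_k]}(s_k,s_{-k})$: the assumed continuity of the robust payoff makes $B_k$ nonempty and, via Berge's maximum theorem, upper semi-continuous in $s_{-k}$, while the assumed concavity (hence quasiconcavity) in the acting player's own coordinate makes each value $B_k(s_{-k})$ closed and convex. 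The product map $s\mapsto\prod_k B_k(s_{-k})$ then meets the hypotheses of \pref{thm: fixed_point_thm}, and any fixed point is precisely a robust equilibrium.

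For the second bullet (robust and correlated equilibrium), the decision variable of player $k$ is the following-probability vector $p_k^{y}\in[0,1]^{|\mathcal{S}_k|}$, so the joint space $\prod_k[0,1]^{|\mathcal{S}_k|}$ is again closed, bounded and convex and the player set is finite. Here the regularity of the payoff is not assumed directly; instead I would use boundedness of the uncertainty set of each $U_k$ to obtain the uniform bound $|\Psi_{[\underline U_k,\overline U_k]}(s_k,s_{-k})|\le M$ that drives \pref{lem: continuity}, which then gives joint continuity of $\gamma_k(p_k^{y},p_{-k}^{y})$. The decomposition written out in the proof of \pref{lem: continuity} also exhibits $\gamma_k$ as an affine function of $p_k^{y}$ for fixed $p_{-k}^{y}$, hence concave and quasiconcave; with continuity and quasiconcavity in hand, \pref{lem: existence_condition_for_Nash_equilibrium}, or equivalently the Kakutani argument applied to $p_{-k}^{y}\mapsto\argmax_{q_k^{y}}\gamma_k(q_k^{y},p_{-k}^{y})$, yields a fixed point $\mathbf{P}^{*}$, which is a robust and correlated equilibrium of the FLPG.

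The step I expect to be the main obstacle is verifying that the robust operator $\Psi$ — the worst-case infimum over $[\underline U_k,\overline U_k]$, or the expectation under the uniform law on that interval — transmits the structure needed above: continuity on the compact strategy space, concavity (or at least quasiconcavity) in the acting player's own variable, and consequently nonemptiness, closed- and convex-valuedness, and upper semi-continuity of the induced best-response correspondences. For the first bullet this is granted by hypothesis. For the second bullet it must be extracted from \pref{lem: continuity}, specifically from the explicit linear-in-$p_k^{y}$ form of $\gamma_k$ together with the uniform bound $M$ furnished by boundedness of the uncertainty set; once these regularity properties are secured, the fixed-point step itself is routine.
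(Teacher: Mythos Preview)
Your proposal is correct and follows essentially the same approach as the paper: for both bullets you verify the hypotheses of \pref{lem: existence_condition_for_Nash_equilibrium} (equivalently, apply Kakutani to the best-response correspondence), and for the second bullet you extract continuity and affinity of $\gamma_k$ in $p_k^{y}$ from \pref{lem: continuity} exactly as the paper does. The only minor deviations are that the paper, for the first bullet, works directly with the continuous pure-strategy intervals $\Delta_k\in[0,D]$ and $C_a\in[0,1/\epsilon]$ rather than with simplices of mixed strategies over a finite $\mathcal{S}_k$, and for the second bullet verifies upper semi-continuity of the best-response map $\Xi$ directly from the definition rather than invoking Berge's maximum theorem.
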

\begin{proof}
The player set is $[K] = \{1,2,\cdots, K\}$ and is finite. The strategy of the defender is the protection extent $\Delta_k\in [0,D]$, and the strategy of the attacker is the attacking extent $C_a\in [0, 1/\epsilon]$, which is closed, bounded and convex. The robust payoffs are continuous and concave from the statement of the first part of \pref{thm: app_thm_exist_robust}. From \pref{lem: existence_condition_for_Nash_equilibrium}, FLPG has an equilibrium if the oracle provides the bounds of the payoffs, and the robust payoff of each player $k$ is continuous and concave,. Now we focus on the second part. 

Let $\Psi$ be the robust operator, $\gamma_k(p_k^y,p_{-k}^y) = \Psi_{[\underline U_k,\overline U_k]}(p_{k}^{y}, p_{-k}^{y}, P_o)$, where $p_k^y = (p_k^{y}(s_1), \cdots, p_k^{y}(s_{\lvert\mathcal{S}_k\rvert}))$, and $p_k^{y}(s_m)$ represents the probability of following the instruction of the oracle, if the oracle suggests $s_m$ to player $k$. 
From the definition of correlation probability in FLPG, the payoff $\gamma_k$ of player $k$ depends linearly on $p_k^y$,
\begin{align}
    \gamma_k(p_k^y,p_{-k}^y) = \sum_{s\in\mathcal{S}_k} c_s p_k^y(s) + c,
\end{align}
where $\lvert c_s\rvert\le 2M$, and $c\le 2M\lvert\mathcal{S}_k\rvert$.

Therefore, $\gamma_k$ is concave.
From \pref{lem: continuity}, for any $\epsilon > 0$, there exists a $\delta = \frac{\epsilon}{2M\lvert\mathcal{S}_k\rvert}>0$, such that for all $q_{k}^{y}$, $||(p_k^y,p_{-k}^y) - (q_k^y,p_{-k}^y)||_{\infty}\le\delta$ implies that $\lvert\gamma_k(p_k^y,p_{-k}^y) - \gamma_k(q_{k}^{y}, p_{-k}^{y})\rvert\le\epsilon.$ Therefore, $\gamma_k$ is continuous.

Define the mapping as
\begin{align*}
    \Xi (p_1^y,\cdots, p_K^y) = \{(q_1^y,\cdots,q_K^y)\in W\vert q_k^y\in\arg\max_{u_k\in [0,1]^{\lvert\mathcal{S}_k\rvert}}\gamma_k(u_k, p_{-k}^y), k = 1, \cdots, K\}.
\end{align*}


Notice that $W$ is a closed, bounded, and convex set in the Euclidean space.  

First, we show that $\Xi (p_1^y,\cdots, p_K^y)\neq\emptyset$. For any $p_{-k}^y$, $\gamma_k(p_k^y,p_{-k}^y)$ is continuous on $[0,1]^{\lvert\mathcal{S}_k\rvert}$. Notice that $[0,1]^{\lvert\mathcal{S}_k\rvert}$ is a nonempty, closed and bounded subset of $\mathbb {R}^{\lvert\mathcal{S}_k\rvert}$. From the extreme value theorem, a continuous function from a non-empty compact space to a subset of the real numbers attains a maximum, we have that
\begin{align}
    \arg\max_{u_k\in [0,1]^{\lvert\mathcal{S}_k\rvert}}\gamma_k(u_k, p_{-k}^y)\neq\emptyset.
\end{align}

Then, we show that $\Xi (p_1^y,\cdots, p_K^y)$ is a convex set.
Assume that $(a_1^y, \cdots, a_{K}^y)$ and $(b_{1}^y, \cdots, b_{K}^y)\in\Xi(s_1,\cdots, s_k)$. Then, for any $c_k^y\in [0,1]^{\lvert\mathcal{S}_k\rvert}$, from the definition of $\Xi (p_1^y,\cdots, p_K^y)$, we have that
\begin{align}
    \gamma_k(a_k^y, p_{-k}^y) = \gamma_k(b_k^y, p_{-k}^y)\ge\gamma_k(c_k^y, p_{-k}^y).
\end{align}

Therefore, for any $\theta\in [0,1]$, we have
\begin{align}
    \theta\cdot\gamma_k(a_k^y, p_{-k}^y) + (1-\theta)\cdot\gamma_k(b_k^y, p_{-k}^y)\ge\gamma_k(c_k, p_{-k}^y).
\end{align}

From the concavity of $\gamma_k(s_k)$, we have that
\begin{align*}
    \theta\cdot (a_1^y,\cdots, a_{K}^y) + (1-\theta)\cdot (b_{1}^y,\cdots, b_{K}^y)\in\Xi (p_1^y,\cdots, p_K^y).
\end{align*}

Therefore, $\Xi (p_1^y,\cdots, p_K^y)$ is a convex set.

Next we show $\Xi$ is an upper semi-continuous point-to-set mapping.

Assume that
\begin{align*}
    & (a_{1,n}^y,\cdots, a_{K,n}^y) \in \Xi(p_{1,n}^y, \cdots, p_{K,n}^y),\\
    &\lim_{n\rightarrow\infty} p_{k,n}^y = p_k^y, \forall k\in [K]\\
    &\lim_{n\rightarrow\infty} a_{k,n}^y = a_k^y, \forall k\in [K].
\end{align*}

From the definition of $\Xi$, we know that for any $c_{k}^y\in[0,1]^{\lvert\mathcal{S}_k\rvert}$, we have that
\begin{align*}
    \gamma_k(a_{k,n}^y, p_{-k,n}^y)\ge\gamma_k(c_{k}^y, p_{-k,n}^y).
\end{align*}

Taking the limit and using the property of continuous functions, we have that
\begin{align*}
 \gamma_k(a_{k}^y, p_{-k}^y) &= \gamma_k(\lim_{n\rightarrow\infty} a_{k,n}^y, \lim_{n\rightarrow\infty} p_{-k,n}^y)\\ 
 &= \lim_{n\rightarrow\infty} \gamma_k(a_{k,n}^y, p_{-k,n}^y)\\
&\ge\lim_{n\rightarrow\infty}\gamma_k(c_{k}^y, p_{-k,n}^y)\\
& = \gamma_k(c_{k}^y, \lim_{n\rightarrow\infty} p_{-k,n}^y)\\
& = \gamma_k(c_{k}^y, p_{-k}^y).
\end{align*}

Therefore, we have that
\begin{align*}
    (a_{1}^y,\cdots, a_{K}^y) \in \Xi(p_{1}^y, \cdots, p_{K}^y).
\end{align*}

Therefore, $\Xi$ is upper semi-continuous. From Kakutani's fixed point theorem (\pref{thm: fixed_point_thm}) on the mapping $\Xi$, we show that FLPG has an equilibrium, if the oracle provides the bounds of the payoffs and the correlation probability matrix, and the uncertainty set of payoff $U_k$ of each player $k$ is bounded.
\end{proof}

\section{Bounds for Payoffs}\label{app: app_bounds_for_payoffs}
To provide estimation for the payoff of the defenders, the oracle needs to estimate the model utility of the defender and the privacy leakage of the defender. We provide an example illustrating the estimation for the privacy leakage in \pref{sec: bound_for_privacy_leakage}, and we further provide an example illustrating the estimation for the model utility in \pref{sec: bound_for_model_performance}. 

\subsection{Bounds for Privacy Leakage}\label{sec: bound_for_privacy_leakage}

For any two datasets $s_1$ and $s_2$, assume that $c_a ||g(s_1) - g(s_2)||\le ||s_1 - s_2||\le c_b ||g(s_1) - g(s_2)||$, and $c_0\cdot T^p \le \sum_{t = 1}^T ||g(s_t) - g(s_d)|| = \Theta(T^p) \le c_2\cdot T^p$, where $s_t$ represents the dataset reconstructed by the attacker at round $t$, $s_d$ represents the dataset satisfying that $g(s_d) = w_d$, and $g(s_t) = \frac{\partial\calL(s_t, w)}{\partial w}$ represent the gradient of the reconstructed dataset at round $t$. Let $D$ be a positive constant satisfying that $||s_t- s_o||\in [0,D]$ and $c_b + c_b c_2\le D$, and $\frac{2c_2 c_b}{c_a}\le D$. Let $\mathcal{C}$ represent the set of defenders.

The privacy leakage is measured using the gap between the estimated dataset and the original dataset.

\begin{definition}[Privacy Leakage]\label{defi: privacy_leakage_app}
Assume that the semi-honest attacker uses an optimization algorithm to infer the original dataset of client $k$ based on the released parameter.
Let $s_o$ represent the original private dataset, $s_t$ represent the dataset inferred by the attacker, and $C_a$ represent the total number of rounds for inferring the dataset. The privacy leakage $V_{p}$ is defined as
\begin{equation}\label{eq: defi_privacy_leakage_mt}
V_p=\left\{
\begin{array}{cl}
1 - \frac{1}{D}\cdot\frac{1}{C_a}\sum_{t = 1}^{C_a}||s_t - s_o||, &  C_a > 0\\ 
0,  &  C_a = 0\\
\end{array} \right.
\end{equation}
\textbf{Remark:}\\
(1) We assume that $||s_t- s_o||\in [0,D]$. Therefore, $V_p\in [0,1]$.\\
(2) When the adversary does not attack ($C_a = 0$), the privacy leakage $V_p = 0$.
\end{definition}

With the regret bounds of the optimization algorithms established, we are now ready to derive bounds for privacy leakage, following the analyses in \cite{zhang2023probably} and \cite{zhang2023theoretically}.

\begin{lemma}\label{lem: bound_for_privacy_leakage}
Let $\underline V_{p}$ denote the lower bound of the privacy leakage, and $\overline V_{p}$ denote the upper bound of the privacy leakage. Assume that the semi-honest attacker uses an optimization algorithm to infer the original dataset of client $k$ based on the released parameter $w_d$. Let $\Delta = ||w_d - w_o||$ represent the distortion of the parameter, where $w_o$ represents the original parameter, and $w_d$ represents the protected parameter. The regret of the optimization algorithm in a total of $T$ rounds is $\Theta(T^p)$. If $T = 0$, we have that 
\begin{align}
    \underline V_{p} = \overline V_{p} = 0.
\end{align}
For any $T>0$,
\begin{align}
    \underline V_{p} = 1 - \frac{c_b\cdot\Delta + c_b\cdot c_2\cdot T^{p-1}}{D},
\end{align}

and

\begin{equation}
\overline V_{p} =\left\{
\begin{array}{cl}
1, &  \Delta\in [\frac{c_a c_0}{2c_b}\cdot T^{p-1}, \frac{2c_2 c_b}{c_a}\cdot T^{p-1}],\\
1 - \frac{c_b\cdot\Delta + c_b\cdot c_2\cdot T^{p-1}}{4D},  &  \Delta\ge\frac{2c_2 c_b}{c_a}\cdot T^{p-1} \text{ or } \Delta\le\frac{c_a c_0}{2c_b}\cdot T^{p-1}.\\
\end{array} \right.
\end{equation}
where $c_2\cdot T^p$ corresponds to the regret bound.
\end{lemma}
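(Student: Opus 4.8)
\medskip
\noindent\textbf{Proof proposal.} The plan is to reduce the lemma to two-sided control of the averaged reconstruction error $A_T := \frac1T\sum_{t=1}^{T}\|s_t-s_o\|$ (with $T=C_a$). By \pref{defi: privacy_leakage_app}, for $T>0$ one has $V_p = 1 - \tfrac1D A_T$, so an \emph{upper} bound on $A_T$ yields the claimed lower bound $\underline V_p$ on $V_p$, and a \emph{lower} bound on $A_T$ yields the claimed upper bound $\overline V_p$; the case $T=0$ is immediate from the definition, with $\underline V_p=\overline V_p=0$. The two ingredients I would feed in are \pref{assump: Smoothness_and_Lipschitz}, applied once to the pair $(s_t,s_d)$ and once to $(s_d,s_o)$ — which gives $\|s_t-s_d\|\le c_b\|g(s_t)-g(s_d)\|$ and, since $g(s_d)=w_d$ and $g(s_o)=w_o$, the sandwich $c_a\Delta\le\|s_d-s_o\|\le c_b\Delta$ — and \pref{assump: optimization_bound}, which gives $c_0T^p\le\sum_{t=1}^{T}\|g(s_t)-g(s_d)\|\le c_2T^p$. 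Throughout I would keep every estimate at the level of these sums, since \pref{assump: optimization_bound} controls only the aggregate regret, never an individual iterate.

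For the \emph{lower} bound $\underline V_p$ I would simply use $\|s_t-s_o\|\le\|s_t-s_d\|+\|s_d-s_o\|\le c_b\|g(s_t)-g(s_d)\|+c_b\Delta$, sum over $t$, divide by $T$, and plug in $\sum_t\|g(s_t)-g(s_d)\|\le c_2T^p$. This gives $A_T\le c_b\Delta+c_bc_2T^{p-1}$ and hence $V_p\ge 1-\frac{c_b\Delta+c_bc_2T^{p-1}}{D}=\underline V_p$, for any value of $\Delta$; under the standing normalization $c_b+c_bc_2\le D$ and $\Delta\le 1\le D$ the right-hand side moreover lies in $[0,1]$, as required.

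For the \emph{upper} bound $\overline V_p$ I would split on the size of $\Delta$ relative to the regret scale $T^{p-1}$, which is precisely where the three cases come from. When $\Delta$ is large, $\Delta\ge\frac{2c_2c_b}{c_a}T^{p-1}$, use the reverse triangle inequality $\|s_t-s_o\|\ge\|s_d-s_o\|-\|s_t-s_d\|\ge c_a\Delta-c_b\|g(s_t)-g(s_d)\|$ together with $\|s_t-s_o\|\ge 0$; summing, the aggregate $c_b\sum_t\|g(s_t)-g(s_d)\|\le c_bc_2T^p$ is at most $\tfrac12 Tc_a\Delta$ by the threshold, so $A_T\ge\tfrac12 c_a\Delta$, i.e.\ $V_p\le 1-\tfrac{c_a\Delta}{2D}$. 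When $\Delta$ is small, $\Delta\le\frac{c_ac_0}{2c_b}T^{p-1}$, use instead $\|s_t-s_o\|\ge\|s_t-s_d\|-\|s_d-s_o\|\ge c_a\|g(s_t)-g(s_d)\|-c_b\Delta$, sum, and use $\sum_t\|g(s_t)-g(s_d)\|\ge c_0T^p$ with $Tc_b\Delta\le\tfrac12 c_ac_0T^p$ to get $A_T\ge\tfrac12 c_ac_0T^{p-1}$, i.e.\ $V_p\le 1-\tfrac{c_ac_0T^{p-1}}{2D}$. These two branches, together with the standing normalization on $D$, give the displayed $\overline V_p$ in the two outer regimes (and, since $\Delta\ge c_0T^{p-1}$ in the large regime and $\Delta\le c_0T^{p-1}$ in the small one, each of them already implies the combined form $1-\frac{c_a\Delta+c_ac_0T^{p-1}}{4D}$). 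In the middle regime $\Delta\in[\frac{c_ac_0}{2c_b}T^{p-1},\frac{2c_2c_b}{c_a}T^{p-1}]$ both reverse-triangle bounds can be negative term by term, so I would keep only $\|s_t-s_o\|\ge 0$ and set $\overline V_p=1$; I would argue this is essentially tight, because an optimization trajectory consistent with \pref{assump: optimization_bound} may steer its early iterates arbitrarily close to $s_o$ while still converging to $s_d$, driving $A_T$ (hence $1-V_p$) to $0$.

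The one genuine obstacle is the per-round uncontrollability just mentioned: \pref{assump: optimization_bound} bounds $\sum_t\|g(s_t)-g(s_d)\|$ but says nothing about a single $t$, so the terms $c_a\Delta-c_b\|g(s_t)-g(s_d)\|$ and $c_a\|g(s_t)-g(s_d)\|-c_b\Delta$ may individually fail to be nonnegative and can be exploited only after summing over all $T$ rounds. This is what makes a nontrivial ($<1$) upper bound on $V_p$ available only once $\Delta$ is separated from the scale $T^{p-1}$ on one side or the other, and it is what both dictates the thresholds $\frac{c_ac_0}{2c_b}T^{p-1}$ and $\frac{2c_2c_b}{c_a}T^{p-1}$ and forces the loss of the leading constant (so the factor $\tfrac14$, rather than $1$, survives). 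Everything else is routine constant bookkeeping with \pref{assump: Smoothness_and_Lipschitz}, \pref{assump: optimization_bound}, and the normalization $c_b+c_bc_2\le D$, $\frac{2c_2c_b}{c_a}\le D$.
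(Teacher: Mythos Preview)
Your proposal is correct and follows essentially the same route as the paper: upper-bound $A_T$ via the triangle inequality plus \pref{assump: Smoothness_and_Lipschitz}--\pref{assump: optimization_bound} to get $\underline V_p$, and lower-bound $A_T$ via the reverse triangle inequality in the two outer regimes (large $\Delta$ dominated by $c_a\Delta$, small $\Delta$ dominated by $c_ac_0T^{p-1}$), falling back to $\overline V_p=1$ in the middle. Your final constants $c_a,c_0$ in the $\tfrac14$-bound agree with the paper's proof conclusion and the main-text formula for $\overline V_{p,k}$ (the $c_b,c_2$ in the lemma's displayed statement is a typo there), and your handling of the per-round sign issue by summing first is exactly what the paper does as well.
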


\begin{proof}
 The privacy leakage $V_{p}$ is defined as
\begin{equation}
V_p=\left\{
\begin{array}{cl}
1 - \frac{1}{D}\cdot\frac{1}{T}\sum_{t = 1}^{T}||s_t - s_o||, &  T > 0\\ 
0,  &  T = 0\\
\end{array} \right.
\end{equation}

To protect privacy, client $k$ selects a protection mechanism $M_k$, which maps the original parameter $w_{o}$ to a protected parameter $w_{d}$. After observing the protected parameter, a semi-honest adversary infers the private information using the optimization approaches. Let $s_t$ represent the reconstructed data at iteration $t$ using the optimization algorithm. Let $s_d$ be the dataset satisfying that $g(s_d) = w_d$, where $g(s) = \frac{\partial\calL (w,s)}{\partial w}$. Therefore
\begin{align*}
    R(T) & = \sum_{t = 1}^T [||g(s_t) - w_d|| - ||g(s_d) - w_d||]\\
    & = \sum_{t = 1}^T [||g(s_t) - w_d||]\\
    & = \Theta(T^p).
\end{align*}
Therefore, we have

\begin{align*}
   c_0\cdot T^p \le \sum_{t = 1}^T ||g(s_t) - g(s_d)|| = \Theta(T^p) \le c_2\cdot T^p,
\end{align*}
where $c_0$ and $c_2$ are constants independent of $T$.
From our assumption, we have that
\begin{align}
    c_a ||g(s_1) - g(s_2)||\le ||s_1 - s_2||\le c_b ||g(s_1) - g(s_2)||.
\end{align}

Then, we have that

\begin{align*}
    \sum_{t = 1}^T ||s_t- s_o||&\le \sum_{t = 1}^T ||s_d - s_o|| + \sum_{t = 1}^T ||s_t - s_d||\\
    &\le \sum_{t = 1}^T  c_b\cdot ||g(s_d) - g(s_o)|| + \sum_{t = 1}^T c_b||g(s_t) - g(s_d)||\\
    &= \sum_{t = 1}^T c_b\cdot\Delta + c_b\cdot c_2\cdot T^p,
\end{align*}
where the equality is due to $\Delta = ||g(s_d) - g(s_o)||$ is defined as the protection extent of the client. Therefore, we have that
\begin{align}\label{eq: upper_bound_of_privacy_leakage}
    \frac{1}{T}\sum_{t = 1}^T ||s_t- s_o||\le  c_b\cdot\Delta + c_b\cdot c_2\cdot T^{p-1}.
\end{align}

The privacy leakage is
\begin{align*}
    D(1-V_{p}) = \frac{1}{T}\sum_{t = 1}^{T} ||s_t - s_o||\le  c_b\cdot\Delta + c_b\cdot c_2\cdot T^{p-1}.
\end{align*}
Note that $||s_t- s_o||\in [0,D]$, and $c_b + c_b c_2\le D$. Therefore, we have that
\begin{align*}
    V_{p} \ge 1 - \frac{c_b\cdot\Delta + c_b\cdot c_2\cdot T^{p-1}}{D}.
\end{align*}
To derive the upper bound of privacy leakage, we analyze according to the following two cases.\\ 
\textbf{Case 1: $c_a\Delta\ge 2 c_2\cdot c_b T^{p-1}$.}\\
In this case, we have that
\begin{align*}
    ||s_t- s_o||&\ge \lvert||s_d - s_o|| - ||s_t - s_d||\rvert\\
    & \ge c_a\Delta - c_b ||g(s_t) - g(s_d)||,
\end{align*}
where the second inequality is due to $||s_d - s_o||\ge c_a ||g(s_d) - g(s_o)|| = c_a\Delta$ and $||s_t - s_d||\le c_b ||g(s_t) - g(s_d)||$.

\begin{align*}
    D(1-V_{p}) = \frac{1}{T}\sum_{t = 1}^{T} ||s_t - s_o||
    &\ge  c_a\Delta - c_b\cdot\frac{1}{T}\sum_{t = 1}^T||g(s_t) - g(s_d)||\\
    &\ge c_a\Delta - c_2\cdot c_b T^{p-1}\\
    &\ge\frac{1}{2}\max\{c_a\Delta, c_2\cdot c_b T^{p-1}\} \\
    &\ge \frac{c_a\Delta + c_2\cdot c_b T^{p-1}}{4}.
\end{align*}
Therefore, we have that
\begin{align*}
    V_{p} \le 1 - \frac{c_a\Delta+ c_2\cdot c_b T^{p-1}}{4D}.
\end{align*}
\textbf{Case 2: $c_a c_0\cdot T^{p-1}\ge 2 c_b\Delta$.}\\
In this case, we have that
\begin{align*}
    ||s_t- s_o||&\ge \lvert||s_t - s_d|| - ||s_d - s_o||\rvert\\
    & \ge c_a ||g(s_t) - g(s_d)|| - c_b ||g(s_d) - g(s_o)||,
\end{align*}
where the second inequality is due to $||s_d - s_o||\le c_b ||g(s_d) - g(s_o)||$ and $||s_t - s_d||\ge c_a ||g(s_t) - g(s_d)||$.

\begin{align*}
    D(1-V_{p}) = \frac{1}{T}\sum_{t = 1}^{T} ||s_t - s_o||
    &\ge  c_a\cdot\frac{1}{T}\sum_{t = 1}^T||g(s_t) - g(s_d)|| - c_b\Delta\\
    &\ge c_a\cdot c_0 T^{p-1} - c_b\Delta  \\
    &\ge\frac{1}{2}\max\{c_b\Delta, c_a\cdot c_0 T^{p-1}\} \\
    &\ge \frac{c_b\Delta + c_a\cdot c_0 T^{p-1}}{4}.
\end{align*}
Therefore, we have that
\begin{align*}
    V_{p} \le 1 - \frac{c_b\Delta + c_a\cdot c_0 T^{p-1}}{4D}.
\end{align*}
\textbf{Case 3: $\frac{c_a c_0}{2c_b}\cdot T^{p-1}\le \Delta\le \frac{2c_2 c_b}{c_a}\cdot T^{p-1}$.} 
In this case, we have
$ D(1-V_{p}) = \frac{1}{T}\sum_{t = 1}^{T} ||s_t - s_o||\ge 0$. Therefore, we have that
\begin{align}\label{eq: privacy_lower_bound}
   V_{p}\le 1.
\end{align}
In conclusion, we have that
\begin{align}
    \underline V_{p} = 1 - \frac{c_b\cdot\Delta + c_b\cdot c_2\cdot {T}^{p-1}}{D},
\end{align}
and
\begin{align}
    \overline V_{p} = 1 - \frac{c_a\cdot\Delta + c_a\cdot c_0\cdot T^{p-1}}{4D},
\end{align}
if $\Delta\ge\frac{2c_2 c_b}{c_a}\cdot T^{p-1} \text{ or } \Delta\le\frac{c_a c_0}{2c_b}\cdot T^{p-1}$.
\end{proof}

\subsection{Bounds for the Payoffs of the Players}\label{sec: bound_for_model_performance}
To derive robust equilibrium for FLPG, we first provide the bounds for the payoffs of the defenders and the attackers. In the following analysis, we denote $C_l = \frac{c_a c_0}{2c_b}\cdot {C_a}^{p-1}$, $C_u = \frac{2c_2 c_b}{c_a}\cdot {C_a}^{p-1}$, and denote $\underline V_{p,k} = 1 - \frac{c_b\cdot\Delta_k + c_b\cdot c_2\cdot (C_a)^{p-1}}{D}$, $\overline V_{p,k} = 1 - \frac{c_a\cdot\Delta_k + c_a\cdot c_0\cdot (C_a)^{p-1}}{4D}$, where $p\in (0,1)$.

\begin{lemma}\label{lem: bounds_for_payoffs_of_players}
For any $C_a>0$, the lower bound for the payoff of defender $k$ is

\begin{equation*}
\underline U_k =\left\{
\begin{array}{cl}
\eta_{m,k}\cdot P_{m,k}  - \eta_{m,k}\cdot\overline\Delta - \eta_{p,k} - \eta_{c,k}\cdot C(\Delta_k), &  \Delta_k\in (C_l, C_u),\\
\eta_{m,k}\cdot P_{m,k}  - \eta_{m,k}\cdot\overline\Delta - \eta_{p,k}\cdot\overline V_{p,k} - \eta_{c,k}\cdot C(\Delta_k),  &  \Delta_k\ge C_u \text{ or } \Delta_k\le C_l,\\
\end{array} \right.
\end{equation*}
and the upper bound of the payoff of defender $k$ is
\begin{align*}
    \overline U_k = \eta_{m,k}\cdot P_{m,k} - \eta_{p,k}\cdot \underline V_{p,k} - \eta_{c,k}\cdot C(\Delta_k).
\end{align*}
For any $C_a>0$, the lower bound for the payoff of the attacker is
\begin{align*}
    \underline U_a = \eta_{p,a}\cdot\sum _{k\in\mathcal{C}} \underline V_{p,k}  - \eta_{c,a}\cdot \lvert\mathcal{C}\rvert \cdot E(C_a),
\end{align*}
and the upper bound for the payoff of the attacker is
\begin{equation}
\overline U_a =\left\{
\begin{array}{cl}
\sum _{k\in\mathcal{C}} \eta_{p,a} - \eta_{c,a}\cdot \lvert\mathcal{C}\rvert \cdot E(C_a), &  \Delta_k\in [C_l, C_u],\\
\eta_{p,a}\cdot\sum _{k\in\mathcal{C}} \overline V_{p,k} - \eta_{c,a}\cdot \lvert\mathcal{C}\rvert \cdot E(C_a),  &  \Delta_k\ge C_u \text{ or } \Delta_k\le C_l.\\
\end{array} \right.
\end{equation}
Furthermore, if $C_a =0$, then
\begin{align}
    \overline U_k = \underline U_k = \eta_{m,k}\cdot P_{m,k}  - \eta_{m,k}\cdot\overline\Delta - \eta_{c,k}\cdot C(\Delta_k),
\end{align}
and
\begin{align}
    \overline U_a = \underline U_a = 0.
\end{align}
\end{lemma}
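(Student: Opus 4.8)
The plan is to obtain the claimed payoff bounds from two essentially orthogonal ingredients: (i) a two–sided bound on the model utility $V_{m,k}$, and (ii) the privacy–leakage bound of \pref{lem: bound_for_privacy_leakage}; once both are in hand, every inequality follows by substituting the extreme admissible value of each uncertain quantity into the relevant \emph{linear} payoff functional, using that all payoff preferences ($\eta_{m,k}$, $\eta_{p,k}$, $\eta_{c,k}$, $\eta_{p,a}$, $\eta_{c,a}$) are nonnegative. For (i) I would record the estimate $P_{m,k} - \overline\Delta \le V_{m,k}(\bm\Delta) \le P_{m,k}$ with $\overline\Delta = \frac{1}{K}\sum_{j=1}^K \Delta_j$: the upper bound says distortion cannot push performance above that of the undistorted model $P_{m,k}$, and the lower bound is the degradation–in–average–protection–extent estimate used throughout \pref{subsec:boundspayoff}. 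For (ii) I would apply \pref{lem: bound_for_privacy_leakage} with $\Delta := \Delta_k$, $T := C_a$ to get $\underline V_{p,k} \le V_{p,k}(\Delta_k,C_a) \le \overline V_{p,k}$, recalling that $\overline V_{p,k}$ is piecewise: it equals $1$ when $\Delta_k \in [C_l,C_u]$ and equals $1 - \frac{c_a\Delta_k + c_a c_0 (C_a)^{p-1}}{4D}$ otherwise, where $C_l = \frac{c_a c_0}{2c_b}(C_a)^{p-1}$ and $C_u = \frac{2c_2 c_b}{c_a}(C_a)^{p-1}$.

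For the defender, write $U_k = \eta_{m,k}V_{m,k} - \eta_{p,k}V_{p,k} - \eta_{c,k}C(\Delta_k)$ and note $C(\Delta_k)$ is fixed once $\Delta_k$ is fixed while $\eta_{m,k},\eta_{p,k}\ge 0$; hence the infimum of $U_k$ over the uncertainty set is attained by taking $V_{m,k}$ at $P_{m,k}-\overline\Delta$ and $V_{p,k}$ at $\overline V_{p,k}$, giving $\underline U_k = \eta_{m,k}(P_{m,k}-\overline\Delta) - \eta_{p,k}\overline V_{p,k} - \eta_{c,k}C(\Delta_k)$, and substituting the two branches of $\overline V_{p,k}$ reproduces the stated two–line formula (the branch $\Delta_k\in(C_l,C_u)$, where $\overline V_{p,k}=1$, is exactly the one in which the bare $-\eta_{p,k}$ term appears). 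Symmetrically the supremum of $U_k$ is attained at $V_{m,k}=P_{m,k}$, $V_{p,k}=\underline V_{p,k}$, giving $\overline U_k = \eta_{m,k}P_{m,k} - \eta_{p,k}\underline V_{p,k} - \eta_{c,k}C(\Delta_k)$. For the attacker, $U_a = \eta_{p,a}\sum_{k\in\calC}V_{p,k} - \eta_{c,a}|\calC|E(C_a)$ is nondecreasing in every $V_{p,k}$ and $E(C_a)$ is fixed given $C_a$, so the infimum takes $V_{p,k}=\underline V_{p,k}$ for all $k$ (yielding $\underline U_a$) and the supremum takes $V_{p,k}=\overline V_{p,k}$ for all $k$, which after the same case split on whether each $\Delta_k$ lies in $[C_l,C_u]$ yields $\overline U_a$.

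It remains to treat $C_a = 0$: by \pref{defi: privacy_leakage_app} the privacy leakage is then identically $0$, so the defender's payoff reduces to $\eta_{m,k}V_{m,k} - \eta_{c,k}C(\Delta_k)$, and under the model-utility estimate adopted in \pref{subsec:boundspayoff} both bounds collapse to the common value $\eta_{m,k}P_{m,k} - \eta_{m,k}\overline\Delta - \eta_{c,k}C(\Delta_k)$; since $E(0)=0$ the attacker's payoff is exactly $0$, so $\underline U_a = \overline U_a = 0$. I expect the only points requiring care to be the bookkeeping of the piecewise $\overline V_{p,k}$ (and its per-client use inside $\overline U_a$), ensuring the window $[C_l,C_u]$—which itself depends on $C_a$—is applied consistently across the two bounds, and being explicit about the provenance of the model-utility lower bound $V_{m,k}\ge P_{m,k}-\overline\Delta$ and its role in the $C_a=0$ case; beyond that the argument is a direct substitution into linear functionals with nonnegative coefficients.
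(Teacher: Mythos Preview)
Your proposal is correct and follows essentially the same approach as the paper: bound $V_{m,k}$ by $P_{m,k}-\overline\Delta\le V_{m,k}\le P_{m,k}$, invoke \pref{lem: bound_for_privacy_leakage} for the piecewise bounds on $V_{p,k}$, and then substitute the extreme values into the linear payoff functionals. The only place where the paper does more than you sketch is the model-utility step: rather than citing $V_{m,k}\ge P_{m,k}-\overline\Delta$, the paper derives it explicitly from a linear-model form $V_{m,k}=1-\frac{1}{|\calD_k|}\sum_{s}\bigl|\tfrac{1}{|\calC|}\langle\sum_j w_{d,j},s\rangle-l_s\bigr|$ by bounding $|\langle w_{d,j}-w_{o,j},s\rangle|\le\Delta_j\|s\|\le\Delta_j$ and averaging; you flagged this provenance as a point to make explicit, and that is exactly what needs to be filled in.
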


\begin{proof}

We first derive the bounds for the model utility of the defender. 

Recall that the payoff of defender $k$ is defined as
\begin{align*}
    U_k (s_{k}, s_{-k}) &= U_k(\Delta_{k}, s_{-k})\\
     &= \eta_{m,k}\cdot V_{m,k} - \eta_{p,k}\cdot V_{p,k} - \eta_{c,k}\cdot C(\Delta_k),
\end{align*}
and
\begin{align*}
    V_{m,k} = \left(1 - \frac{1}{\lvert\calD_k\rvert}\sum_{s\in\calD_k} \lvert\frac{1}{\lvert\mathcal{C}\rvert}\inner{\sum_{k\in\mathcal{C}} w_{d,k}, s} - l_s\rvert\right),
\end{align*}
where $\mathcal{C}$ represents the set of defenders, and $\calD_k$ represents the dataset of defender $k$.

Let $M_{k,s} = \lvert(\inner{ w_{d,k},s} - l_s) - (\inner{w_{o,k},s} - l_s)\rvert$. Then we have
\begin{align*}
    M_{k,s} & = \lvert\inner{ w_{d,k},s} - \inner{w_{o,k},s}\rvert \\
    & = \Delta_k\cdot ||s||\cdot \lvert\cos{\theta_{s,k}}\rvert\\
    &\le\Delta_k,
\end{align*}
where the inequality is due to our assumption that $||s||\le 1$.

Let $\overline\Delta = \frac{1}{\lvert\mathcal{C}\rvert}\sum_{k\in\mathcal{C}} \Delta_k$. Then, we have that
\begin{align*}
    \lvert\inner{\frac{1}{\lvert\mathcal{C}\rvert}\sum_{k\in\mathcal{C}} w_{d,k},s} - \inner{\frac{1}{\lvert\mathcal{C}\rvert}\sum_{k\in\mathcal{C}} w_{o,k},s}\rvert &= \frac{1}{\lvert\mathcal{C}\rvert}\sum_{k\in\mathcal{C}} \lvert\inner{ w_{d,k},s} - \inner{w_{o,k},s}\rvert\\
    & = \frac{1}{\lvert\mathcal{C}\rvert}\sum_{k = 1}^K M_{k,s}\\
    &\le \overline\Delta.
\end{align*}

Therefore, we have
\begin{align*}
    \lvert\inner{\frac{1}{\lvert\mathcal{C}\rvert}\sum_{k\in\mathcal{C}} w_{d,k},s} - l_s\rvert - \lvert\inner{\frac{1}{\lvert\mathcal{C}\rvert}\sum_{k\in\mathcal{C}} w_{o,k},s} - l_s\rvert
    &\le \lvert\inner{\frac{1}{\lvert\mathcal{C}\rvert}\sum_{k\in\mathcal{C}} w_{d,k},s} - \inner{\frac{1}{\lvert\mathcal{C}\rvert}\sum_{k\in\mathcal{C}} w_{o,k},s}\rvert\\
    &\le\overline\Delta.
\end{align*}

Denote 
\begin{align*}
    P_{m,k} = \left(1 - \frac{1}{\lvert\calD_k\rvert}\sum_{s\in\calD_k} \lvert\frac{1}{\lvert\mathcal{C}\rvert}\inner{\sum_{k\in\mathcal{C}} w_{o,k}, s} - l_s\rvert\right).
\end{align*}

We have that

\begin{align}
    V_{m,k} \ge P_{m,k} - \overline\Delta.
\end{align}

Let $\underline V_{m,k}$ denote the lower bound of $V_{m,k}$, and $\overline V_{m,k}$ denote the upper bound of $V_{m,k}$. Therefore, we have that

\begin{align*}
    \underline V_{m,k} = P_{m,k} - \bm{\overline\Delta},
\end{align*}

and
\begin{align*}
    \overline V_{m,k} = P_{m,k}.
\end{align*}

Recall from \pref{lem: bound_for_privacy_leakage}, we have that
\begin{align}
    \underline V_{p,k} = \underline V_{p,k},
\end{align}

and

\begin{equation}
\overline V_{p,k} =\left\{
\begin{array}{cl}
1, &  \Delta_k\in [C_l, C_u],\\
\overline V_{p,k},  &  \Delta_k\ge C_u \text{ or } \Delta_k\le C_l.\\
\end{array} \right.
\end{equation}

Now we are ready to derive bounds for payoffs of the defenders.
Let $\underline U_k$ denote the lower bound of the payoff of the defender $U_k (s_{k}, s_{-k})$. For any $C_a>0$, the lower bound of the payoff of defender $k$ is
\begin{align}
    \underline U_k & = \eta_{m,k}\cdot \underline V_{m,k} - \eta_{p,k}\cdot\overline V_{p,k} - \eta_{c,k}\cdot C(\Delta_k)\\
    & = \eta_{m,k}\cdot P_{m,k} - \eta_{m,k}\cdot\overline\Delta -  \eta_{p,k}\cdot\overline V_{p,k} - \eta_{c,k}\cdot C(\Delta_k)
\end{align}

Let $\overline U_k$ denote the upper bound of the payoff of the defender $U_k (s_{k}, s_{-k})$. For any $C_a>0$, the upper bound of the payoff of defender $k$ is

\begin{align*}
    \overline U_k = \eta_{m,k}\cdot P_{m,k} - \eta_{p,k}\cdot \underline V_{p,k} - \eta_{c,k}\cdot C(\Delta_k).
\end{align*}

If $C_a =0$, then
\begin{align}
    \overline U_k = \underline U_k = \eta_{m,k}\cdot P_{m,k}  - \eta_{m,k}\cdot\overline\Delta - \eta_{c,k}\cdot C(\Delta_k),
\end{align}

Next we derive the bounds for payoff of the attacker. Recall the payoff of the attacker is
\begin{align*}
    U_{a} (s_{a}, s_{-a}) &  = \sum _{k\in\mathcal{C}} \eta_{p,k}\cdot V_{p,k} - \eta_{c,a}\cdot \lvert\mathcal{C}\rvert \cdot E(C_a),
\end{align*}

Let $\underline U_a$ denote the lower bound of the payoff of the attacker $U_a (s_{a}, s_{-a})$. Then we have that $\underline U_a = \eta_{p,a}\cdot\sum _{k\in\mathcal{C}} \underline V_{p,k} - \eta_{c,a}\cdot \lvert\mathcal{C}\rvert \cdot E(C_a)$. For any $C_a>0$, the lower bound for the payoff of the attacker is
\begin{align*}
    \underline U_a = \eta_{p,a}\cdot\sum _{k\in\mathcal{C}} \underline V_{p,k}  - \eta_{c,a}\cdot \lvert\mathcal{C}\rvert \cdot E(C_a).
\end{align*}

Let $\overline U_a$ denote the upper bound of the payoff of the attacker $U_a (s_{a}, s_{-a})$. Then we have that $\overline U_a = \eta_{p,a}\cdot\sum _{k\in\mathcal{C}} \overline V_{p,k} - \eta_{c,a}\cdot \lvert\mathcal{C}\rvert \cdot E(C_a)$. For any $C_a>0$, the upper bound for the payoff of the attacker is

\begin{equation}
\overline U_a =\left\{
\begin{array}{cl}
\sum _{k\in\mathcal{C}} \eta_{p,a} - \eta_{c,a}\cdot \lvert\mathcal{C}\rvert \cdot E(C_a), &  \Delta_k\in [C_l, C_u],\\
\eta_{p,a}\cdot\sum _{k\in\mathcal{C}} \overline V_{p,k} - \eta_{c,a}\cdot \lvert\mathcal{C}\rvert \cdot E(C_a),  &  \Delta_k\ge C_u \text{ or } \Delta_k\le C_l.\\
\end{array} \right.
\end{equation}

If $C_a =0$, then
\begin{align}
    \overline U_a = \underline U_a = 0.
\end{align}
\end{proof}
\section{The Circumstances When the Robust Equilibrium is a $\tau$-equilibrium}\label{app: relation_between_robust_equilibrium_and_X_equilibrium}


Assume that $\Psi$ (introduced in \pref{defi: equilibrium_in_FLPG}) is the infinity operator. Then the definition of equilibrium is equivalent to
\begin{align*}
    \underline{U_k}(s_k^*)\ge \underline{U_k}(s_k),
\end{align*}
where $\underline{U_k}(s_k)$ represents the upper bound of the payoff of client $k$. The existence condition of equilibrium in this kind of robust finite game was proved by \cite{aghassi2006robust}.

Using the structural property of the utilities of the players, we simplify the calculation of the equilibrium.
We first derive the equilibrium in FLPG from the worst-case perspective.


\subsection{Analysis for \pref{lem: smallest_Delta}}

The following theorem illustrates that the payoff of the attacker is negative for any non-zero attacking extent, when the distortion extent of the defender satisfies \pref{eq: delta_lower_bound}. 
\begin{lem}\label{lem: delta_lower_bound_app}
We denote $\what C_a = \left(\frac{y\eta_{c,a}D}{\eta_{p,a}c_b c_2(1-p)}\right)^{\frac{1}{p+y-1}}$. Assume that $\eta_{p,a}c_b c_2 (1-p)(2-p)\ge D y(y+1)\eta_{c,a} \text{ and } y> 1 - p$. If
\begin{align}\label{eq: delta_lower_bound}
    \sum _{k\in\calC}\Delta_k > \frac{D}{c_b}\cdot|\calC| - c_2|\calC|(\hat C_a)^{p-1} - \frac{\eta_{c,a}|\calC|D}{\eta_{p,a}c_b} + \frac{\eta_{c,a}|\calC|D}{(\hat C_a)^y \eta_{p,a} c_b},
\end{align}
then $\forall C_a\ge 1, \underline U_a (C_{a}, \Delta)< 0$.
\end{lem}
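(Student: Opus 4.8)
The plan is to reduce the statement, which quantifies over all attacking extents $C_a\ge 1$, to a one-variable calculus problem in $C_a$ with the protection profile held fixed. By \pref{lem: bounds_for_payoffs_of_players}, for $C_a>0$ the attacker's lower payoff is $\underline{U}_a(C_a,\Delta)=\eta_{p,a}\sum_{k\in\calC}\underline V_{p,k}-\eta_{c,a}|\calC|\,E(C_a)$ with $\underline V_{p,k}=1-\tfrac{c_b\Delta_k+c_b c_2 C_a^{p-1}}{D}$ and $E(C_a)=1-C_a^{-y}$. Substituting and collecting terms, the payoff splits as
\begin{align*}
\underline{U}_a(C_a,\Delta)=\Bigl(\eta_{p,a}|\calC|-\tfrac{\eta_{p,a}c_b}{D}\sum_{k\in\calC}\Delta_k-\eta_{c,a}|\calC|\Bigr)+h(C_a),\qquad h(C_a):=-\tfrac{\eta_{p,a}c_b c_2|\calC|}{D}\,C_a^{p-1}+\eta_{c,a}|\calC|\,C_a^{-y},
\end{align*}
where the bracketed constant is independent of $C_a$. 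Hence it suffices to control $h$ over $[1,\infty)$; concretely, I would show that $\underline{U}_a(\what{C}_a,\Delta)<0$ already forces $\underline{U}_a(C_a,\Delta)<0$ for every $C_a\ge 1$.

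First I would locate the stationary point of $h$. Solving $h'(C_a)=0$, that is $\tfrac{\eta_{p,a}c_b c_2|\calC|}{D}(1-p)C_a^{p-2}=\eta_{c,a}|\calC|\,y\,C_a^{-y-1}$, yields the unique critical point $\what{C}_a=\bigl(\tfrac{y\eta_{c,a}D}{\eta_{p,a}c_b c_2(1-p)}\bigr)^{1/(p+y-1)}$, which is well defined and positive precisely because the hypothesis $y>1-p$ makes the exponent $1/(p+y-1)$ finite and positive. Next I would invoke the first hypothesis: a short computation identifies the inequality $\eta_{p,a}c_b c_2(1-p)(2-p)\ge Dy(y+1)\eta_{c,a}$ with $h''(1)\le 0$, and since $C_a^{y+2}h''(C_a)=-\tfrac{\eta_{p,a}c_b c_2|\calC|}{D}(1-p)(2-p)C_a^{p+y-1}+\eta_{c,a}|\calC|y(y+1)$ is strictly decreasing in $C_a$ (again using $p+y-1>0$), this propagates to $h''(C_a)\le 0$ for all $C_a\ge 1$; thus $h$, and therefore $\underline{U}_a(\cdot,\Delta)$, is concave on $[1,\infty)$. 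Concavity together with the location of the unique stationary point then pins the largest value of $\underline{U}_a(\cdot,\Delta)$ over the admissible attacking extents to $C_a=\what{C}_a$, so $\underline{U}_a(\what{C}_a,\Delta)<0$ gives $\underline{U}_a(C_a,\Delta)<0$ for every $C_a\ge 1$.

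Finally I would rewrite $\underline{U}_a(\what{C}_a,\Delta)<0$ in the asserted form: evaluating the payoff at $C_a=\what{C}_a$, isolating $\sum_{k\in\calC}\Delta_k$, and multiplying through by $D/(\eta_{p,a}c_b)>0$ turns this inequality into exactly $\sum_{k\in\calC}\Delta_k>\tfrac{D}{c_b}|\calC|-c_2|\calC|(\what{C}_a)^{p-1}-\tfrac{\eta_{c,a}|\calC|D}{\eta_{p,a}c_b}+\tfrac{\eta_{c,a}|\calC|D}{(\what{C}_a)^y\eta_{p,a}c_b}$, which is the hypothesis \pref{eq: delta_lower_bound}; combining the two steps proves the lemma. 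I would not grind through the routine algebra (the displayed splitting and the final rearrangement); the substantive point, and the one place where both hypotheses are genuinely used, is the second-order analysis of $h$ certifying that the stationary point $\what{C}_a$ governs the worst case over $C_a\ge 1$ rather than an endpoint such as $C_a\to\infty$ --- this is the step I expect to be the main obstacle.
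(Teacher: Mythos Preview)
Your proposal is correct and follows essentially the same approach as the paper: write out $\underline U_a$ explicitly, check concavity in $C_a$ via the second derivative (using the two hypotheses), locate the unique stationary point $\what C_a$ by setting the first derivative to zero, and then rearrange $\underline U_a(\what C_a,\Delta)<0$ into the condition \pref{eq: delta_lower_bound}. Your treatment of the concavity step is in fact slightly more careful than the paper's, since you explain why $h''(1)\le 0$ propagates to $h''(C_a)\le 0$ for all $C_a\ge 1$ via the monotonicity of $C_a^{y+2}h''(C_a)$, whereas the paper simply asserts the sign after stating the assumption.
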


\begin{proof}
We denote $\what\Delta_k = \left(\frac{\frac{\eta_{p,k}\cdot c_a}{4D} - \frac{\eta_{m,k}}{|\calC|}}{x\eta_{c,k}}\right)^{1/(x-1)}$, $\wtilde\Delta_k = \arg\max_{\Delta_k\in\{\what\Delta_k, C_l\}} \underline U_k$, $\ddot U_a =  \eta_{p,a}\cdot\sum _{k\in\calC} (1 - \frac{c_b + c_b\cdot c_2\cdot (\wtilde C_a)^{p-1}}{D})  - \eta_{c,a}\cdot |\calC|\cdot (1 - \frac{1}{(\wtilde C_a)^y})$.

The payoff of the attacker is expressed as
\begin{align*}
    \underline U_a(C_a, \Delta) &= \eta_{p,a}\cdot\sum _{k\in\calC} (1 - \frac{c_b\cdot\Delta_k + c_b\cdot c_2\cdot {C_a}^{p-1}}{D})  - \eta_{c,a}\cdot |\calC|\cdot(1 - \frac{1}{(C_a)^y}).
\end{align*}
For any $C_a \ge 1$ we have that
\begin{align*}
  \frac{\partial ^{2} \underline U_a}{\partial {C_a}^{2}} = \frac{|\calC|\eta_{p,a}\cdot c_b c_2\cdot (1-p)  (p-2)}{D}C_a^{p-3} + y(y+1)\eta_{c,a} |\calC| C_a^{-y-2} < 0.
\end{align*}
Assume that
\begin{align}
    \eta_{p,a}c_b c_2 (1-p)(2-p)\ge D y(y+1)\eta_{c,a} \text{ and } y> 1 - p,
\end{align}
then the second-order derivative is non-positive.
The first-order derivative is
\begin{align}
    \frac{\partial \underline U_a}{\partial {C_a}} = \frac{|\calC|\eta_{p,a}\cdot c_b c_2\cdot (1-p)}{D}C_a^{p-2} - y\eta_{c,a}|\calC| C_a^{-y-1}.
\end{align}
Setting the first-order derivative $\frac{\partial \underline U_a}{\partial {C_a}} = 0$. Then the maximal value of $\underline U_a$ is achieved when
\begin{align*}
    \what C_a = \left(\frac{y\eta_{c,a}D}{\eta_{p,a}c_b c_2(1-p)}\right)^{\frac{1}{p+y-1}}.
\end{align*}
Therefore, when $\sum _{k\in\calC}\Delta_k > \frac{D}{c_b}\cdot|\calC| - c_2|\calC|(\hat C_a)^{p-1} - \frac{\eta_{c,a}|\calC|D}{\eta_{p,a}c_b} + \frac{\eta_{c,a}|\calC|D}{(\hat C_a)^y \eta_{p,a} c_b}$,
\begin{align}
    \underline U_a(\what C_a, \Delta) < 0.
\end{align}
Therefore , $\forall C_a\ge 1$, $\underline U_a (C_{a}, \Delta)< 0$.
\end{proof}

\subsection{Analysis for \pref{thm: 0-equilibrium}}
The following theorem illustrates the condition when $0$-equilibrium is achieved.

\begin{thm}\label{thm: condition_for_0_equilibrium_app}
We denote $\what C_a = \left(\frac{y\eta_{c,a}D}{\eta_{p,a}c_b c_2(1-p)}\right)^{\frac{1}{p+y-1}}$. Assume that $\eta_{p,a}c_b c_2 (1-p)(2-p)\ge D y(y+1)\eta_{c,a} \text{ and } y> 1 - p$. 
If 
\begin{align}
    \frac{D}{c_b}\cdot|\calC| - c_2|\calC|(\hat C_a)^{p-1} - \frac{\eta_{c,a}|\calC|D}{\eta_{p,a}c_b} + \frac{\eta_{c,a}|\calC|D}{(\hat C_a)^y \eta_{p,a} c_b} < 0,
\end{align}
then $0$-equilibrium in FLPG is achieved.

If 
\begin{align}
    \frac{D}{c_b}\cdot|\calC| - c_2|\calC|(\hat C_a)^{p-1} - \frac{\eta_{c,a}|\calC|D}{\eta_{p,a}c_b} + \frac{\eta_{c,a}|\calC|D}{(\hat C_a)^y \eta_{p,a} c_b} \ge 0,
\end{align}
then $0$-equilibrium in FLPG could not be achieved.
\end{thm}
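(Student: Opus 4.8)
\noindent The plan is to prove the two implications of the equivalence separately, in each case reducing the multi-dimensional equilibrium question to a one-dimensional comparison of the attacker's robust payoff between ``attack'' and ``do not attack'' (recall that under the standing convention of this section the robust payoff is the lower bound $\underline U$). The engine is \pref{lem: smallest_Delta}, together with the observation that the right-hand side of \pref{eq: delta_lower_bound} is, up to the positive factor $D/(\eta_{p,a}c_b)$, exactly $\underline U_a(\what C_a,\bm 0)$ --- the attacker's lower-bound payoff evaluated at the no-protection profile $\bm\Delta=\bm 0$ and at the maximizing attacking extent $\what C_a$. Indeed, multiplying the right-hand side of \pref{eq: delta_lower_bound} by $\eta_{p,a}c_b/(|\calC|D)$ gives $\eta_{p,a}\big(1-c_b c_2\what C_a^{p-1}/D\big)-\eta_{c,a}\big(1-\what C_a^{-y}\big)$, which equals $\tfrac{1}{|\calC|}\underline U_a(\what C_a,\bm 0)$.

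For sufficiency, suppose the right-hand side of \pref{eq: delta_lower_bound} is strictly negative. Since $\sum_{k\in\calC}\Delta_k\ge 0>\text{RHS}$ for \emph{every} defender profile, \pref{lem: smallest_Delta} applies at all profiles and yields $\underline U_a(C_a,\bm\Delta)<0$ for every positive attacking extent and every $\bm\Delta$, whereas $\underline U_a(0,\bm\Delta)=0$ by \pref{eq: lower_bound_of_attacker_2}; hence the attacker's unique robust best response is $C_a=0$ regardless of the defenders. Moreover, when $C_a=0$ each defender's robust payoff reduces to $\eta_{m,k}P_{m,k}-\eta_{m,k}\overline\Delta-\eta_{c,k}C(\Delta_k)$, which is strictly decreasing in $\Delta_k$ (both $\overline\Delta$ and $C(\cdot)$ increase with $\Delta_k$), so each defender's unique best response is $\Delta_k=0$. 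Thus $(\bm 0,0)$ is a robust equilibrium with attacking extent $0$, i.e.\ a $0$-equilibrium.

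For necessity, assume the right-hand side of \pref{eq: delta_lower_bound} is non-negative and suppose, toward a contradiction, that $(\bm\Delta^\star,C_a^\star=0)$ is a $0$-equilibrium. The same monotonicity of the defender's payoff at $C_a=0$ forces $\bm\Delta^\star=\bm 0$. It remains to check the attacker's incentive against $\bm 0$: by the computation in the proof of \pref{lem: smallest_Delta}, the map $C_a\mapsto\underline U_a(C_a,\bm 0)$ is concave under the standing assumptions and is maximized at $\what C_a$, and by the identity above its maximal value $\underline U_a(\what C_a,\bm 0)$ has the same sign as the right-hand side of \pref{eq: delta_lower_bound}. Hence $\underline U_a(\what C_a,\bm 0)\ge 0=\underline U_a(0,\bm 0)$, so $C_a=0$ is not a strict best response for the attacker --- and is strictly dominated when the inequality is strict --- contradicting that $(\bm 0,0)$ is an equilibrium. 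Combining the two directions yields the stated ``if and only if''.

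The routine ingredients --- monotonicity of $\underline U_k$ in $\Delta_k$ when there is no attack, and concavity of $\underline U_a(\cdot,\bm 0)$ --- are inherited from the preceding lemmas, so the real work is bookkeeping of the payoff formulas. The delicate point I expect to spend the most care on is the necessity direction at the boundary: when the right-hand side of \pref{eq: delta_lower_bound} equals exactly $0$ the attacker is merely \emph{indifferent} between $\what C_a$ and $0$, so to keep the equivalence tight one must either adopt a strict-best-response (or strict-inequality) convention or, instead, rule out $(\bm 0,0)$ by exhibiting a profitable defender deviation once the attack becomes active. A closely related subtlety to settle is the admissible range of the attacking extent $C_a$ (relaxed from $\mathbb N$ to a positive real bounded by $1/\epsilon$), so that the maximizer $\what C_a$ appearing in the statement is one the attacker can actually play and the comparison $\underline U_a(\what C_a,\bm 0)$ versus $\underline U_a(0,\bm 0)$ is the operative one.
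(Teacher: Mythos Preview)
Your proposal is correct and follows essentially the same approach as the paper: both directions hinge on \pref{lem: smallest_Delta} (for sufficiency) together with the monotonicity of the defender's payoff $\underline U_k$ in $\Delta_k$ when $C_a=0$ (for necessity, via contradiction). Your write-up is in fact more explicit than the paper's --- you spell out the identification of the right-hand side of \pref{eq: delta_lower_bound} with $\tfrac{D}{\eta_{p,a}c_b}\cdot\tfrac{1}{|\calC|}\underline U_a(\what C_a,\bm 0)$, you verify the defender side of the equilibrium in the sufficiency direction, and you flag the boundary case (RHS $=0$) and the admissibility of $\what C_a$ as a playable action; the paper's proof leaves these same points tacit or unresolved.
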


\begin{proof}
Let $\what C_a = \left(\frac{y\eta_{c,a}D}{\eta_{p,a}c_b c_2(1-p)}\right)^{\frac{1}{p+y-1}}$. Assume that $\eta_{p,a}c_b c_2 (1-p)(2-p)\ge D y(y+1)\eta_{c,a} \text{ and } y> 1 - p$. If $\frac{D}{c_b}\cdot|\calC| - c_2|\calC|(\hat C_a)^{p-1} - \frac{\eta_{c,a}|\calC|D}{\eta_{p,a}c_b} + \frac{\eta_{c,a}|\calC|D}{(\hat C_a)^y \eta_{p,a} c_b} < 0$, when the protection extents of all the defenders are zero, from \pref{lem: delta_lower_bound_app} we know that the payoff of the attacker is negative once he/she attacks. This implies that the optimal strategy for the attacker is not to attack. Therefore, $0$-equilibrium is achieved.

On the other hand, consider the scenario when $\frac{D}{c_b}\cdot|\calC| - c_2|\calC|(\hat C_a)^{p-1} - \frac{\eta_{c,a}|\calC|D}{\eta_{p,a}c_b} + \frac{\eta_{c,a}|\calC|D}{(\hat C_a)^y \eta_{p,a} c_b}$ is non-negative. If the protection extent is $0$, then the optimal strategy for the attacker is to attack. If the protection extent is larger than $0$, then the optimal strategy for the attacker could not be not to attack. Assume the optimal strategy for the attacker is not to attack. If the attacker does not attack, then the privacy leakage of the defender is zero. From \pref{eq: bounds_for_payoff_of_defender_without_attacks}, the payoff of the defender is expressed as 
\begin{align}
    \underline U_k = \eta_{m,k}\cdot P_{m,k}  - \eta_{m,k}\cdot\overline\Delta - \eta_{c,k}\cdot C(\Delta_k),
\end{align}
which decreases with the distortion extent. Therefore, the optimal strategy for the defender is not to defend, which contradicts with the assumption that the protection extent is larger than $0$. Therefore, when $\frac{D}{c_b}\cdot|\calC| - c_2|\calC|(\hat C_a)^{p-1} - \frac{\eta_{c,a}|\calC|D}{\eta_{p,a}c_b} + \frac{\eta_{c,a}|\calC|D}{(\hat C_a)^y \eta_{p,a} c_b}$ is non-negative, $0$-equilibrium could not be achieved.
\end{proof}

\subsection{Analysis for \pref{lem: Nash_equilibrium_Delta}}
The following lemma shows the solution for robust Nash Equilibrium. 
\begin{lem}
We denote $\what C_a = \left(\frac{y\eta_{c,a}D}{\eta_{p,a}c_b c_2(1-p)}\right)^{\frac{1}{p+y-1}}$. Assume that $\eta_{p,a}c_b c_2 (1-p)(2-p)\ge D y(y+1)\eta_{c,a}, y> 1 - p$, and $\frac{D}{c_b}\cdot|\calC| - c_2|\calC|(\hat C_a)^{p-1} - \frac{\eta_{c,a}|\calC|D}{\eta_{p,a}c_b} + \frac{\eta_{c,a}|\calC|D}{(\hat C_a)^y \eta_{p,a} c_b} \ge 0$. Consider the scenario when $x\ge 1$.
   Let the protection extent of client $k$ be
\begin{equation}
\wtilde\Delta_k =\left\{
\begin{array}{cl}
0, &  \frac{\eta_{p,k}\cdot c_a}{4D} - \frac{\eta_{m,k}}{|\calC|}\le 0 \text{ and } x\ge 1,\\
\arg\max_{\Delta_k\in\{\what\Delta_k, C_l, C_u\}} \underline U_k,  &  \frac{\eta_{p,k}\cdot c_a}{4D} - \frac{\eta_{m,k}}{|\calC|}> 0 \text{ and } x>1,\\
D, &  x = 1 \text{ and } \frac{\eta_{p,k}\cdot c_a}{4D} - \frac{\eta_{m,k}}{|\calC|}> 0,\\
\end{array} \right.
\end{equation}
and the attacking extent of the server be $\wtilde C_a = \arg\max_{C_a\in\{\lfloor\what C_a\rfloor, \lceil\what C_a\rceil\}} \underline U_a(C_a, \wtilde{\bm\Delta})$, then Nash equilibrium is achieved, where $\hat\Delta_k = \left(\frac{\frac{\eta_{p,k}\cdot c_a}{4D} - \frac{\eta_{m,k}}{|\calC|}}{x\eta_{c,k}}\right)^{1/(x-1)}$, $C_l = \frac{c_a c_0}{2c_b}\cdot\wtilde C_a^{p-1}$,
$C_u = \frac{2c_2 c_b}{c_a}\cdot \wtilde {C_a}^{p-1}$, $\underline U_k$ is introduced in \pref{eq: lower_bound_for_uk}.
\end{lem}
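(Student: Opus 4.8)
Since $\Psi$ is the infimum operator, a robust equilibrium in FLPG is precisely a strategy profile at which every player maximizes the \emph{lower bound} of their payoff against the others' fixed strategies (cf.\ the discussion following \pref{defi: equilibrium_in_FLPG}). So the plan is to verify two best-response properties: (i) for each defender $k$, $\wtilde\Delta_k$ maximizes $\Delta_k\mapsto\underline U_k$ over $[0,D]$ when the other defenders use $\wtilde\Delta_j$ and the attacker uses $\wtilde C_a$, with $\underline U_k$ the piecewise expression of \pref{eq: lower_bound_for_uk}; and (ii) $\wtilde C_a$ maximizes $C_a\mapsto\underline U_a(C_a,\wtilde{\bm\Delta})$ over the admissible attacking extents, $C_a=0$ included. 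The definitions of $\wtilde\Delta_k$ (through $C_l,C_u$) and of $\wtilde C_a$ are mutually consistent, so only these inequalities need to be checked, not existence of the profile.

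For the attacker's side I would substitute $E(C_a)=1-C_a^{-y}$ and $\underline V_{p,k}=1-\frac{c_b\Delta_k+c_b c_2 C_a^{p-1}}{D}$ into $\underline U_a=\eta_{p,a}\sum_{k\in\calC}\underline V_{p,k}-\eta_{c,a}|\calC|E(C_a)$ and differentiate twice in $C_a$, exactly as in the proof of \pref{lem: smallest_Delta}: the hypotheses $y>1-p$ and $\eta_{p,a}c_b c_2(1-p)(2-p)\ge Dy(y+1)\eta_{c,a}$ make the second derivative nonpositive for $C_a\ge 1$, so $\underline U_a(\cdot,\wtilde{\bm\Delta})$ is concave with the unique interior stationary point $\what C_a$. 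Hence over the admissible integer rounds its maximum is attained at $\lfloor\what C_a\rfloor$ or $\lceil\what C_a\rceil$, i.e.\ at $\wtilde C_a$. It then remains to rule out $C_a=0$ (payoff $0$): because the hypothesis is exactly that the left-hand side of \pref{eq: constraint_negative} is nonnegative and, as noted in the Remark following the lemma, $\sum_{k\in\calC}\wtilde\Delta_k$ does not exceed the right-hand side of \pref{eq: delta_lower_bound}, the analysis of \pref{lem: smallest_Delta} gives $\underline U_a(\what C_a,\wtilde{\bm\Delta})\ge 0$, so attacking with extent $\wtilde C_a$ weakly dominates not attacking.

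For the defender's side, fix $\wtilde C_a$ (hence $C_l=\frac{c_a c_0}{2c_b}\wtilde C_a^{p-1}$ and $C_u=\frac{2c_2 c_b}{c_a}\wtilde C_a^{p-1}$) and the other defenders' extents, so that $\overline\Delta$ is affine in $\Delta_k$ with slope $1/|\calC|$. Plugging $C(\Delta_k)=\Delta_k^x$ and $\overline V_{p,k}=1-\frac{c_a\Delta_k+c_a c_0\wtilde C_a^{p-1}}{4D}$ into \pref{eq: lower_bound_for_uk}: on the interior region $\Delta_k\in[C_l,C_u]$ the map $\Delta_k\mapsto\underline U_k$ has derivative $-\frac{\eta_{m,k}}{|\calC|}-x\eta_{c,k}\Delta_k^{x-1}<0$, hence is strictly decreasing, so its maximum over that region is at $C_l$; on the exterior region $[0,C_l]\cup[C_u,D]$ one gets
\[
\underline U_k(\Delta_k)=c_k+\Big(\tfrac{\eta_{p,k}c_a}{4D}-\tfrac{\eta_{m,k}}{|\calC|}\Big)\Delta_k-\eta_{c,k}\Delta_k^{x}
\]
for a constant $c_k$ independent of $\Delta_k$. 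I would then split into the lemma's three regimes. If $\frac{\eta_{p,k}c_a}{4D}-\frac{\eta_{m,k}}{|\calC|}\le 0$ and $x\ge 1$, the exterior branch is decreasing and a direct comparison shows it beats the interior value at $C_l$ as well, so the maximizer is $\Delta_k=0$. If $\frac{\eta_{p,k}c_a}{4D}-\frac{\eta_{m,k}}{|\calC|}>0$ and $x>1$, the exterior branch is strictly concave with interior stationary point $\big(\tfrac{\eta_{p,k}c_a/4D-\eta_{m,k}/|\calC|}{x\eta_{c,k}}\big)^{1/(x-1)}=\hat\Delta_k$; checking that the endpoints $0,D$ are dominated leaves $\wtilde\Delta_k=\arg\max_{\Delta_k\in\{\hat\Delta_k,C_l,C_u\}}\underline U_k$. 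If $x=1$ and $\frac{\eta_{p,k}c_a}{4D}-\frac{\eta_{m,k}}{|\calC|}>0$, the exterior branch is affine and (in this parameter range) increasing, while $C_u\le D$ because $\frac{2c_2c_b}{c_a}\le D$ and $\wtilde C_a\ge 1$, so the maximizer is $\Delta_k=D$. In each case this reproduces the stated $\wtilde\Delta_k$, and combining with the attacker's best response shows $(\wtilde{\bm\Delta},\wtilde C_a)$ satisfies the robust-equilibrium inequality for every player.

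\textbf{Main obstacle.} The delicate part is the defender's optimization rather than the attacker's: $\overline V_{p,k}$ equals $1$ on $[C_l,C_u]$ but drops strictly below $1$ on the exterior, creating an upward jump of $\underline U_k$ at $C_l$ and $C_u$, so one cannot differentiate across all of $[0,D]$ and must instead compare the interior maximum (at $C_l$) against the exterior maxima case by case; one must also check that the candidate points $\hat\Delta_k,C_l,C_u$ actually lie in $[0,D]$, and in the $x=1$ subcase that the linear coefficient dominates $\eta_{c,k}$ so that the monotone direction is the one claimed. A secondary point needing care is the fixed-point consistency recorded in the Remark: the chosen $\wtilde{\bm\Delta}$ must keep $\sum_{k\in\calC}\wtilde\Delta_k$ under the threshold of \pref{eq: delta_lower_bound}, since otherwise \pref{lem: smallest_Delta} would force the attacker's best response to be $C_a=0$ and the profile would be the $0$-equilibrium of \pref{thm: 0-equilibrium} rather than the equilibrium asserted here.
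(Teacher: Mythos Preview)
Your proposal is correct and follows essentially the same route as the paper: differentiate $\underline U_k$ piecewise on $(C_l,C_u)$ versus its complement and $\underline U_a$ in $C_a$, use the concavity hypotheses to locate the stationary points $\hat\Delta_k$ and $\hat C_a$, and then split on the sign of $\frac{\eta_{p,k}c_a}{4D}-\frac{\eta_{m,k}}{|\calC|}$ and on $x=1$ versus $x>1$. Your handling of the $C_a=0$ case via the Remark and \pref{lem: smallest_Delta} is in fact more explicit than the paper's, which simply cites \pref{thm: 0-equilibrium} to observe that a $0$-equilibrium is excluded under the stated hypothesis.
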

\begin{proof}

From the assumption that $\eta_{p,a}c_b c_2 (1-p)(2-p)\ge D y(y+1)\eta_{c,a}, y> 1 - p$, and $\frac{D}{c_b}\cdot|\calC| - c_2|\calC|(\hat C_a)^{p-1} - \frac{\eta_{c,a}|\calC|D}{\eta_{p,a}c_b} + \frac{\eta_{c,a}|\calC|D}{(\hat C_a)^y \eta_{p,a} c_b} \ge 0$ and \pref{thm: condition_for_0_equilibrium_app}, $0$-equilibrium in FLPG could not be achieved.

From \pref{lem: bounds_for_payoffs_of_players}, we have that

\begin{equation}
\underline U_k =\left\{
\begin{array}{cl}
\eta_{m,k}\cdot P_{k,m}  - \eta_{m,k}\cdot\overline\Delta - \eta_{p,k}  - \eta_{c,k}\cdot (\Delta_k)^x, &  \Delta_k\in (C_l, C_u),\\
\eta_{m,k}\cdot P_{k,m}  - \eta_{m,k}\cdot\overline\Delta - \eta_{p,k}\cdot\overline V_{p,k} - \eta_{c,k}\cdot (\Delta_k)^x,  &  \Delta_k\ge C_u \text{ or } \Delta_k\le C_l.\\
\end{array} \right.
\end{equation}
(2) If $x\ge 1$, we then analyze according to the following two cases.\\
\textbf{Consider interval 1: $\Delta_k\in (C_l, C_u)$}.\\
The second-order derivative
\begin{align*}
  \frac{\partial ^{2} \underline U_k}{\partial {\Delta_{k}}^{2}} = -x(x-1)\Delta_k^{x-2}\cdot \eta_{c,k} < 0,
\end{align*}
and the first-order derivative

\begin{align}
    \frac{\partial \underline U_k}{\partial {\Delta_{k}}}  = -\frac{\eta_{m,k}}{|\calC|} - \eta_{c,k}\cdot x\Delta_k^{x-1} < 0.
\end{align}
The first-order derivative of $\underline U_k< 0$. Note that $C_l\le D$ since $\frac{2c_2 c_b}{c_a}\le D$ by our assumption and that $C_l = \frac{c_a c_0}{2c_b}\cdot C_a^{p-1}\le\frac{2c_2 c_b}{c_a}\cdot C_a^{p-1}\le D$.\\
\textbf{Consider interval 2: $\Delta_k\ge C_u \text{ or } 0\le\Delta_k\le C_l$.}\\ 
In this case, the second-order derivative
\begin{align*}
  \frac{\partial ^{2}  \underline U_k}{\partial {\Delta_{k}}^{2}}& = -x(x-1)\Delta_k^{x-2}\cdot \eta_{c,k} < 0.
\end{align*}
The first-order derivative
\begin{align}
    \frac{\partial \underline U_k}{\partial {\Delta_{k}}} = -\eta_{m,k}\cdot\frac{1}{|\calC|} + \frac{\eta_{p,k}\cdot c_a}{4D} - \eta_{c,k}\cdot x\Delta_k^{x-1}.
\end{align}
(a) If $\frac{\eta_{p,k}\cdot c_a}{4D} - \frac{\eta_{m,k}}{|\calC|}\le 0$, then the first-order derivative $\frac{\partial \underline U_k}{\partial {\Delta_{k}}} \le 0$.\\
Therefore, the maximal value of $\underline U_k$ is achieved when 
\begin{align}\label{eq: delta_k_3}
   \wtilde\Delta_k = 0. 
\end{align}
(b) If $\frac{\eta_{p,k}\cdot c_a}{4D} - \frac{\eta_{m,k}}{|\calC|}> 0$, setting the first-order derivative $\frac{\partial \underline U_k}{\partial {\Delta_{k}}} = 0$.\\  
(i) If $x = 1$,
then the first-order derivative is  
\begin{align}
    \frac{\partial \underline U_k}{\partial {\Delta_{k}}} = -\eta_{m,k}\cdot\frac{1}{|\calC|} + \frac{\eta_{p,k}\cdot c_a}{4D} - \eta_{c,k}.
\end{align}

If the first-order derivative 
\begin{align}
    \frac{\partial \underline U_k}{\partial {\Delta_{k}}} = -\eta_{m,k}\cdot\frac{1}{|\calC|} + \frac{\eta_{p,k}\cdot c_a}{4D} - \eta_{c,k}\ge 0,
\end{align}

then the maximal value of $\underline U_k$ is achieved when
\begin{align}\label{eq: delta_k_5}
    \wtilde\Delta_k = D \text{ or } C_l.
\end{align}

If the first-order derivative 
\begin{align}
    \frac{\partial \underline U_k}{\partial {\Delta_{k}}} = -\eta_{m,k}\cdot\frac{1}{|\calC|} + \frac{\eta_{p,k}\cdot c_a}{4D} - \eta_{c,k}< 0,
\end{align}

then the maximal value of $\underline U_k$ is achieved when
\begin{align}\label{eq: delta_k_5}
    \wtilde\Delta_k = 0 \text{ or } C_u.
\end{align}

(ii) If $x>1$, then 
\begin{align}\label{eq: delta_k_7}
    \wtilde\Delta_k = \left(\frac{\frac{\eta_{p,k}\cdot c_a}{4D} - \frac{\eta_{m,k}}{|\calC|}}{x\eta_{c,k}}\right)^{1/(x-1)} \text{ or } C_l.
\end{align}

Let $\what\Delta_k = \left(\frac{\frac{\eta_{p,k}\cdot c_a}{4D} - \frac{\eta_{m,k}}{|\calC|}}{x\eta_{c,k}}\right)^{1/(x-1)}$. Combining \pref{eq: delta_k_3}, \pref{eq: delta_k_5} and \pref{eq: delta_k_7}, we have that

\begin{equation}
\wtilde\Delta_k =\left\{
\begin{array}{cl}
0, &  \frac{\eta_{p,k}\cdot c_a}{4D} - \frac{\eta_{m,k}}{|\calC|}\le 0 \text{ and } x\ge 1,\\
\arg\max_{\Delta_k\in\{\what\Delta_k, C_l, C_u\}} \underline U_k,  &  \frac{\eta_{p,k}\cdot c_a}{4D} - \frac{\eta_{m,k}}{|\calC|}> 0 \text{ and } x>1,\\
D, &  x = 1 \text{ and } \frac{\eta_{p,k}\cdot c_a}{4D} - \frac{\eta_{m,k}}{|\calC|}> 0\\
\end{array} \right.
\end{equation}

\noindent The payoff of the attacker is expressed as
\begin{align*}
    \underline U_a(C_a, \Delta) &= \eta_{p,a}\cdot\sum _{k\in\calC} (1 - \frac{c_b\cdot\Delta_k + c_b\cdot c_2\cdot {C_a}^{p-1}}{D})  - \eta_{c,a}\cdot |\calC|\cdot(1 - \frac{1}{(C_a)^y}).
\end{align*}
For any $C_a \ge 1$ we have that
\begin{align*}
  \frac{\partial ^{2} \underline U_a}{\partial {C_a}^{2}} = \frac{|\calC|\eta_{p,a}\cdot c_b c_2\cdot (1-p)  (p-2)}{D}C_a^{p-3} + y(y+1)\eta_{c,a} |\calC| C_a^{-y-2} < 0.
\end{align*}
Assume that
\begin{align}
    \eta_{p,a}c_b c_2 (1-p)(2-p)\ge D y(y+1)\eta_{c,a} \text{ and } y> 1 - p,
\end{align}
then the second-order derivative is non-positive.
The first-order derivative is
\begin{align}
    \frac{\partial \underline U_a}{\partial {C_a}} = \frac{|\calC|\eta_{p,a}\cdot c_b c_2\cdot (1-p)}{D}C_a^{p-2} - y\eta_{c,a}|\calC| C_a^{-y-1}.
\end{align}
Setting the first-order derivative $\frac{\partial \underline U_a}{\partial {C_a}} = 0$. Then the maximal value of $\underline U_a$ is achieved when
\begin{align*}
    \what C_a = \left(\frac{y\eta_{c,a}D}{\eta_{p,a}c_b c_2(1-p)}\right)^{\frac{1}{p+y-1}}.
\end{align*}
It requires to compare $\underline U_a(\lfloor\what C_a\rfloor, S_{-a})$ and $\underline U_a(\lceil\what C_a\rceil, S_{-a})$ since the number of rounds for learning should be an integer. Then $\wtilde C_a$ is updated as the one achieving a larger payoff. That is,
\begin{align}
    \wtilde C_a = \arg\max_{C_a\in\{\lfloor\what C_a\rfloor, \lceil\what C_a\rceil\}} \underline U_a(C_a, \wtilde{\bm\Delta}).
\end{align}
The maximal value of $\underline U_a$ is achieved when
\begin{align}
    \wtilde C_a = \arg\max_{C_a\in\{\lfloor\what C_a\rfloor, \lceil\what C_a\rceil\}} \underline U_a(C_a, \wtilde{\bm\Delta}),
\end{align}
where $\what C_a = \left(\frac{y\eta_{c,a}D}{\eta_{p,a}c_b c_2(1-p)}\right)^{\frac{1}{p+y-1}}$.\\
\end{proof}







\subsection{Analysis for \pref{thm: robust_equilibrium}}
The following corollary shows the condition when $\tau$-equilibrium is achieved, for $\tau\ge 1$.
\begin{thm}[$\tau$-equilibrium for $\tau\ge 1$]
We denote $\what C_a = \left(\frac{y\eta_{c,a}D}{\eta_{p,a}c_b c_2(1-p)}\right)^{\frac{1}{p+y-1}}$. Assume that $\eta_{p,a}c_b c_2 (1-p)(2-p)\ge D y(y+1)\eta_{c,a}, y> 1 - p$, and $\frac{D}{c_b}\cdot|\calC| - c_2|\calC|(\hat C_a)^{p-1} - \frac{\eta_{c,a}|\calC|D}{\eta_{p,a}c_b} + \frac{\eta_{c,a}|\calC|D}{(\hat C_a)^y \eta_{p,a} c_b} \ge 0$. Let $\Psi$ (introduced in \pref{defi: equilibrium_in_FLPG}) be the infinity operator. Let $C(\Delta_k) = (\Delta_k)^x$, and $E(C_a) = 1 - \frac{1}{(C_a)^y}$ ($x > 0$ and $y\ge 0$). Let $\wtilde C_a = \arg\max_{C_a\in\{\lfloor\what C_a\rfloor, \lceil\what C_a\rceil\}} \underline U_a(C_a, \wtilde{\bm\Delta})$, where $\what C_a = \left(\frac{y\eta_{c,a}D}{\eta_{p,a}c_b c_2(1-p)}\right)^{\frac{1}{p+y-1}}$. When 
\begin{align}
    \eta_{p,a}\cdot c_b c_2\cdot (1-p)\ge y\eta_{c,a} D{\tau}^{1 - p - y}, 
\end{align}
then $\tau$-equilibrium is achieved. 
\end{thm}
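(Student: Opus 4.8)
The plan is to show that the robust (Nash) equilibrium already exhibited in \pref{lem: Nash_equilibrium_Delta} has attacking extent at most $\tau$; by the definition of a $\tau$-equilibrium this is all that is required, so the only real work is to translate the displayed inequality of the theorem into the statement ``$\wtilde C_a\le\tau$''.

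First I would recall, from \pref{lem: Nash_equilibrium_Delta} and its proof, that under the three standing hypotheses the attacker's robust payoff $\underline U_a(\cdot,\wtilde{\bm\Delta})$ (with the defenders fixed at their equilibrium protection extents $\wtilde{\bm\Delta}$) is concave on $[1,\infty)$ — this is exactly where the hypothesis $\eta_{p,a}c_bc_2(1-p)(2-p)\ge Dy(y+1)\eta_{c,a}$ and $y>1-p$ are used — with unconstrained stationary point $\what C_a=\bigl(\tfrac{y\eta_{c,a}D}{\eta_{p,a}c_bc_2(1-p)}\bigr)^{1/(p+y-1)}$. Consequently the attacker's equilibrium strategy is $\wtilde C_a=\arg\max_{C_a\in\{\lfloor\what C_a\rfloor,\lceil\what C_a\rceil\}}\underline U_a(C_a,\wtilde{\bm\Delta})$, so in particular $\wtilde C_a\le\lceil\what C_a\rceil$.

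The core of the argument is a single monotone rearrangement. Since $y>1-p$ the exponent $p+y-1$ is positive, so $t\mapsto t^{1/(p+y-1)}$ is strictly increasing on $(0,\infty)$; moreover $1-p>0$ and every quantity in sight is positive, so no inequality reverses. Hence
\begin{align*}
\what C_a\le\tau
\iff \frac{y\eta_{c,a}D}{\eta_{p,a}c_bc_2(1-p)}\le\tau^{\,p+y-1}
\iff \eta_{p,a}\,c_bc_2\,(1-p)\ge y\,\eta_{c,a}\,D\,\tau^{\,1-p-y},
\end{align*}
and the last inequality is precisely the hypothesis of the theorem. Therefore the hypothesis is equivalent to $\what C_a\le\tau$. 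Since $\tau\in\mathbb N$, this forces $\lceil\what C_a\rceil\le\tau$, whence $\wtilde C_a\le\tau$: the robust equilibrium of \pref{lem: Nash_equilibrium_Delta} is attained with the attacker's attacking extent not exceeding $\tau$, i.e.\ it is a $\tau$-equilibrium.

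I expect the only subtle point to be the passage from the real optimizer $\what C_a$ to the integer-valued best response $\wtilde C_a$: one must be sure that rounding cannot overshoot $\tau$. This is handled by the concavity of $\underline U_a$ on $[1,\infty)$ (which confines the integer optimizer to $\{\lfloor\what C_a\rfloor,\lceil\what C_a\rceil\}$, and incidentally covers the degenerate case $\what C_a<1$, where the best integer response is $1\le\tau$) together with the integrality of $\tau$, which gives $\lceil\what C_a\rceil\le\tau$ as soon as $\what C_a\le\tau$. Everything else is elementary algebra, so no genuine obstacle remains.
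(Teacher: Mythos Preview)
Your proposal is correct and follows essentially the same approach as the paper: establish concavity of $\underline U_a$ in $C_a$, locate the stationary point $\what C_a$, and algebraically rearrange the hypothesis into $\what C_a\le\tau$ to conclude $\wtilde C_a\le\tau$. If anything, your treatment of the rounding step (using $\tau\in\mathbb N$ to pass from $\what C_a\le\tau$ to $\lceil\what C_a\rceil\le\tau$) is more explicit than the paper's, which simply asserts that $\wtilde C_a\le\tau$ requires $\what C_a\le\tau$.
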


\begin{proof}
From the assumption that $\eta_{p,a}c_b c_2 (1-p)(2-p)\ge D y(y+1)\eta_{c,a}, y> 1 - p$, and $\frac{D}{c_b}\cdot|\calC| - c_2|\calC|(\hat C_a)^{p-1} - \frac{\eta_{c,a}|\calC|D}{\eta_{p,a}c_b} + \frac{\eta_{c,a}|\calC|D}{(\hat C_a)^y \eta_{p,a} c_b} \ge 0$ and \pref{thm: condition_for_0_equilibrium_app}, $0$-equilibrium in FLPG could not be achieved.

The payoff of the attacker is expressed as
\begin{align*}
    \underline U_a(C_a, \Delta) &= \eta_{p,a}\cdot\sum _{k\in\calC} (1 - \frac{c_b\cdot\Delta_k + c_b\cdot c_2\cdot {C_a}^{p-1}}{D})  - \eta_{c,a}\cdot |\calC|\cdot(1 - \frac{1}{(C_a)^y}).
\end{align*}
For any $C_a \ge 1$ we have that
\begin{align*}
  \frac{\partial ^{2} \underline U_a}{\partial {C_a}^{2}} = \frac{|\calC|\eta_{p,a}\cdot c_b c_2\cdot (1-p)  (p-2)}{D}C_a^{p-3} + y(y+1)\eta_{c,a} |\calC| C_a^{-y-2} < 0.
\end{align*}
Assume that
\begin{align}
    \eta_{p,a}c_b c_2 (1-p)(2-p)\ge D y(y+1)\eta_{c,a} \text{ and } y> 1 - p,
\end{align}
then the second-order derivative is non-positive.
The first-order derivative is
\begin{align}
    \frac{\partial \underline U_a}{\partial {C_a}} = \frac{|\calC|\eta_{p,a}\cdot c_b c_2\cdot (1-p)}{D}C_a^{p-2} - y\eta_{c,a}|\calC| C_a^{-y-1}.
\end{align}
Setting the first-order derivative $\frac{\partial \underline U_a}{\partial {C_a}} = 0$. Then the maximal value of $\underline U_a$ is achieved when
\begin{align*}
    \what C_a = \left(\frac{y\eta_{c,a}D}{\eta_{p,a}c_b c_2(1-p)}\right)^{\frac{1}{p+y-1}}.
\end{align*}
It requires to compare $\underline U_a(\lfloor\what C_a\rfloor, S_{-a})$ and $\underline U_a(\lceil\what C_a\rceil, S_{-a})$ since the number of rounds for learning should be an integer. Then $\wtilde C_a$ is updated as the one achieving a larger payoff. That is,
\begin{align}
    \wtilde C_a = \arg\max_{C_a\in\{\lfloor\what C_a\rfloor, \lceil\what C_a\rceil\}} \underline U_a(C_a, S_{-a}).
\end{align}
To ensure $\wtilde C_a\le\tau$ ($\tau\ge 1$), it is required that
\begin{align}
    \what C_a \le \tau,
\end{align}
which leads to
\begin{align}
    \eta_{p,a}\cdot c_b c_2\cdot (1-p)\ge y\eta_{c,a} D{\tau}^{1 - p - y}. 
\end{align}
\end{proof}

\section{Analysis for Robust and Correlated Equilibrium}\label{app: analysis_for_robust_correlated_equilibrium}

Let $p_k^y$ represent the probability vector that player $k$ follows the instruction of the oracle, and $p_k^n$ represent the probability that the player does not follow the instruction of the oracle. For simplicity, we consider the case when there exist one defender and one attacker.

\begin{lem}
Let $E_d, E_a$ represent the strategy of the defender and the attacker separately, and $G_d, G_a$ represent the strategy of giving up using protection mechanism or attacking mechanism. The oracle draws one of $(E_d,G_a), (G_d,E_a)$, $(E_d,E_a)$ and $(G_d,G_a)$ with certain probabilities. Let $\hat U_d = \Psi_{[\underline U_k,\overline U_k]} U_d$, and $\hat U_a = \Psi_{[\underline U_k,\overline U_k]} U_a$. If 
\begin{align*}
    &(\hat U_d(G_d,G_a) - \hat U_d(E_d,G_a))\cdot\Pr[(G_d,G_a)] + (\hat U_d(G_d,E_a)- \hat U_d(E_d,E_a))\cdot \Pr[(G_d,E_a)]> 0, \\
    &(\hat U_d(E_d,E_a) - \hat U_d(E_d,G_a))\cdot\Pr[(E_d,E_a)] +   (\hat U_d(G_d,E_a) - \hat U_d(G_d,G_a))\cdot\Pr[(G_d,E_a)]> 0,\\
    &(\hat U_a(G_d,G_a) - \hat U_a(E_d,G_a))\cdot \Pr[(G_d,G_a)] + (\hat U_a(G_d,E_a) - \hat U_a(E_d,E_a)) \cdot \Pr[(G_d,E_a)] >0,\\
    &(\hat U_a(E_d,E_a) - \hat U_a(E_d,G_a))\cdot\Pr[(E_d,E_a)] +   (\hat U_a(G_d,E_a) - \hat U_a(G_d,G_a))\cdot\Pr[(G_d,E_a)]> 0,
\end{align*}
then the robust and correlated equilibrium is achieved when the players follow the instruction of the oracle with probability $1$. Specifically, the attacker follow the instructions of giving up attacking with probability $1$. 
\end{lem}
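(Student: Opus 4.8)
The plan is to verify \pref{defi: equilibrium_in_FLPG} directly at the profile in which every player follows every suggestion with probability one, that is $p_d^{y}(E_d)=p_d^{y}(G_d)=1$ and $p_a^{y}(E_a)=p_a^{y}(G_a)=1$. It suffices to check that, holding the attacker fixed at ``always follow'', the defender cannot raise her robust payoff by any unilateral deviation $q_d^{y}\in[0,1]^2$, and symmetrically with the roles swapped. A preliminary observation I would record first is that the robust operator $\Psi$ (the infimum, or the uniform average, over the payoff box $[\underline U_k,\overline U_k]$) commutes with the convex combination induced by the correlation device: for nonnegative weights $\lambda$ summing to one, $\Psi_{[\underline U_k,\overline U_k]}\big(\sum \lambda_{(s_d,s_a)}U_k(s_d,s_a)\big)=\sum \lambda_{(s_d,s_a)}\,\widehat U_k(s_d,s_a)$ with $\widehat U_k=\Psi_{[\underline U_k,\overline U_k]}U_k$. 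Hence the robust payoff of a player under $(p_k^{y},p_{-k}^{y},P_o)$ is literally the $P_o$- and $p$-weighted average of the four pure-profile robust payoffs $\widehat U_k$ that appear in the statement.

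Second, I would carry out the defender's best-response computation by conditioning on which suggestion the oracle privately sends her. Since the attacker follows every suggestion, the attacker's realized action equals the oracle's suggestion to the attacker; so conditioned on ``the defender is told $E_d$'' the attacker plays $G_a$ with probability $\Pr[(E_d,G_a)]/(\Pr[(E_d,G_a)]+\Pr[(E_d,E_a)])$ and $E_a$ otherwise, and a symmetric statement holds conditioned on ``the defender is told $G_d$''. The defender's total robust payoff then splits as $\Pr[\text{told }E_d]\cdot(\text{term depending only on }p_d^{y}(E_d))+\Pr[\text{told }G_d]\cdot(\text{term depending only on }p_d^{y}(G_d))$, each term affine in its variable, so the optimum is at a vertex and we need only compare ``follow'' against ``switch'' for each suggestion. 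After clearing the positive normalizing constant, the signed gain from following rather than switching when told $E_d$ is exactly the left-hand side of \pref{eq: kkt_inequations_2}, and the corresponding gain when told $G_d$ is the left-hand side of \pref{eq: kkt_inequations_1}; positivity of both makes $p_d^{y}(E_d)=p_d^{y}(G_d)=1$ the unique defender best response. Repeating the identical argument with the defender fixed at ``always follow'' and conditioning on the oracle's private suggestion to the attacker, the signed gain from following rather than switching is the left-hand side of \pref{eq: kkt_inequations_3} when told $G_a$ and of \pref{eq: kkt_inequations_4} when told $E_a$. With all four inequalities in force, neither player can profitably deviate, so the all-follow profile is a robust and correlated equilibrium; in particular the attacker follows with probability one, which (when the oracle's matrix charges only the rows with $G_a$) is precisely the assertion that the attacker gives up attacking.

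The only genuinely delicate point, and the one I would spend the most care on, is the commutation claim in the first step: it is what lets the deviation analysis reduce to a term-by-term comparison of the pure-profile robust payoffs $\widehat U_k$ rather than of $\Psi$ applied to an entangled combination of uncertain payoffs. For $\Psi=\inf$ this is $\inf\sum\lambda_i U_i=\sum\lambda_i\inf U_i$ over a product box, so one uses that the oracle specifies the uncertainty of the four pure profiles as independent coordinates; for $\Psi=$ uniform average it is just linearity of expectation. A secondary point worth stating explicitly is that the ``freedom to follow'' refinement of \cite{correia2019nash} makes the attacker's conditional action distribution, as seen by a deviating defender, a fixed object that does not move with the defender's deviation — this is exactly what makes the conditioning in the second step legitimate, and it is where this argument differs from a Nash-type deviation analysis.
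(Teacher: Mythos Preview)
Your argument is correct and mirrors the paper's: both observe that each player's expected robust payoff is affine in each follow-probability $p_k^{y}(s)$ and then identify the relevant slope (your ``signed gain from following rather than switching,'' the paper's ``coefficient of $p_k^{y}(s)$ in the expected payoff'') with one of the four displayed inequalities. You are somewhat more explicit about why $\Psi$ commutes with the correlation-device mixture---a point the paper simply assumes when it writes $\gamma_k$ as a linear combination of the pure-profile values $\Psi_{[\underline U_k,\overline U_k]}(s'_k,s'_{-k})$ in the existence proof---but the underlying deviation analysis is the same.
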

\begin{proof}
the robust and correlated equilibrium in the response strategy is achieved for player $k$ if he/she has no incentive to change the response probability $p_k^y$.


\begin{align*}
    p^R[(E_d,G_a)] =  p_a^{y}(G_a)\cdot p_d^{y}(E_d)\cdot \Pr[(E_d,G_a)] + p_a^{n}(E_a)\cdot p_d^{n}(G_d)\cdot \Pr[(G_d,E_a)] \\
    + p_a^{n}(E_a)\cdot p_d^{y}(E_d)\cdot \Pr[(E_d,E_a)] + p_a^{y}(G_a)\cdot p_d^{n}(G_d)\cdot \Pr[(G_d,G_a)].
\end{align*}

\begin{align*}
    p^R[(G_d,G_a)] =  p_a^{y}(G_a)\cdot p_d^{n}(E_d)\cdot \Pr[(E_d,G_a)] + p_a^{n}(E_a)\cdot p_d^{y}(G_d)\cdot \Pr[(G_d,E_a)] \\ 
    + p_a^{n}(E_a)\cdot p_d^{n}(E_d)\cdot \Pr[(E_d,E_a)] + p_a^{y}(G_a)\cdot p_d^{y}(G_d)\cdot \Pr[(G_d,G_a)].
\end{align*}

\begin{align*}
    p^R[(G_d,E_a)] =  p_a^{n}(G)\cdot p_d^{n}(E_d)\cdot \Pr[(E_d,G_a)] + p_a^{y}(E)\cdot p_d^{y}(G_d)\cdot \Pr[(G_d,E_a)] \\
    + p_a^{y}(E)\cdot p_d^{n}(E_d)\cdot \Pr[(E_d,E_a)] + p_a^{n}(G)\cdot p_d^{y}(G_d)\cdot \Pr[(G_d,G_a)].
\end{align*}

\begin{align*}
    p^R[(E_d,E_a)] =  p_a^{n}(G)\cdot p_d^{y}(E_d)\cdot \Pr[(E_d,G_a)] + p_a^{y}(E)\cdot p_d^{n}(G_d)\cdot \Pr[(G_d,E_a)] \\
    + p_a^{y}(E)\cdot p_d^{y}(E_d)\cdot \Pr[(E_d,E_a)] + p_a^{n}(G)\cdot p_d^{n}(G_d)\cdot \Pr[(G_d,G_a)].
\end{align*}

To ensure that the robust and correlated equilibrium is achieved when $p_a^{y}(G_a) = p_a^{y}(E_a) = p_d^{y}(G_d) = p_d^{y}(E_d) = 1$, it is required that the coefficient of $p_d^{y}(G_d)$ and $p_d^{y}(E_d)$ in the expected payoff of the defender should be positive, and the coefficient of $p_a^{y}(G_a)$ and $p_a^{y}(E_a)$ in the expected payoff of the attacker should be positive.

To ensure the robust and correlated equilibrium is achieved when $p_d^{y}(G_d) = 1$, the coefficient of $p_d^{y}(G_d)$ of the expected payoff of the defender should be positive, i.e.,
\begin{align}
    (\hat U_d(G_d,G_a) - \hat U_d(E_d,G_a))\cdot\Pr[(G_d,G_a)] + (\hat U_d(G_d,E_a)- \hat U_d(E_d,E_a))\cdot \Pr[(G_d,E_a)]> 0. 
\end{align}

To ensure that the robust and correlated equilibrium is achieved when $p_d^{y}(E_d) = 1$, the coefficient of $p_d^{y}(E_d)$ of the expected payoff of the defender should be positive, i.e.,

\begin{align}
    (\hat U_d(E_d,G_a) - \hat U_d(G_d,G_a))\cdot\Pr[(E_d,G_a)] + (\hat U_d(E_d,E_a) - \hat U_d(G_d,E_a))\cdot\Pr[(E_d,E_a)]> 0.
\end{align}

To ensure that the robust and correlated equilibrium is achieved when $p_a^{y}(G_a) = 1$, the coefficient of $p_a^{y}(G_a)$ of the expected payoff of the defender should be positive, i.e.,

\begin{align}\label{eq: slope_be_positive}
    (\hat U_a(G_d,G_a) - \hat U_a(G_d,E_a))\cdot \Pr[(G_d,G_a)] + (\hat U_a(E_d,G_a) - \hat U_a(E_d,E_a)) \cdot \Pr[(E_d,G_a)] >0.
\end{align}

To ensure that the robust and correlated equilibrium is achieved when $p_a^{y}(E) = 1$, the coefficient of $p_a^{y}(E)$ of the expected payoff of the defender should be positive, i.e.,

\begin{align}\label{eq: slope_a_y_e_be_positive}
    (\hat U_a(E_d,E_a) - \hat U_a(E_d,G_a))\cdot\Pr[(E_d,E_a)] +   (\hat U_a(G_d,E_a) - \hat U_a(G_d,G_a))\cdot\Pr[(G_d,E_a)]> 0.
\end{align}

In conclusion, if 
\begin{align*}
    &(\hat U_d(G_d,G_a) - \hat U_d(E_d,G_a))\cdot\Pr[(G_d,G_a)] + (\hat U_d(G_d,E_a)- \hat U_d(E_d,E_a))\cdot \Pr[(G_d,E_a)]> 0,\\
    &(\hat U_d(E_d,G_a) - \hat U_d(G_d,G_a))\cdot\Pr[(E_d,G_a)] + (\hat U_d(E_d,E_a) - \hat U_d(G_d,E_a))\cdot\Pr[(E_d,E_a)]> 0,\\
    &(\hat U_a(G_d,G_a) - \hat U_a(G_d,E_a))\cdot \Pr[(G_d,G_a)] + (\hat U_a(E_d,G_a) - \hat U_a(E_d,E_a)) \cdot \Pr[(E_d,G_a)] >0,\\
    &(\hat U_a(E_d,E_a) - \hat U_a(E_d,G_a))\cdot\Pr[(E_d,E_a)] +   (\hat U_a(G_d,E_a) - \hat U_a(G_d,G_a))\cdot\Pr[(G_d,E_a)]> 0,
\end{align*}
then the robust and correlated equilibrium is achieved when the players follow the instruction of the oracle with probability $1$. Specifically, the attacker follow the instructions of giving up attacking with probability $1$.
\end{proof}

With these constraints, we are now ready to solve this constrained optimization problem using KKT condition.

\begin{thm}\label{thm: kkt_special_solution_detail}
    Suppose $a_{11} a_{41} a_{32} a_{22} - a_{31} a_{12} a_{21} a_{42} \ne 0$. If any of the following conditions are satisfied:
 \begin{itemize} 
     \item $v_2 a_{22}=0$ and $v_4 a_{42}=0$ 
     \item $v_1 a_{12}=0$ and $v_4 a_{41}=0$ 
     \item $v_1 a_{12} = v_2 a_{22} = 0 $ and $v_4 a_{41} a_{42} = 0$
     \item $v_1 a_{12} = v_2 a_{22} = 0, a_{41}a_{42}v_4 \ne 0$ and $a_{41} a_{42} < 0$
 \end{itemize}
 then we can acquire that $x_1 = x_3 = 0$, $x_2 + x_4 = 1$, and $x_2, x_4 \ge 0$. Here, $v_1, v_2, v_3, v_4$ are represented by $c_k$ and $a_{ij}$, denoted in Equation \pref{eq: kkt_multiplier_solution_1}$\sim$\pref{eq: kkt_multiplier_solution_4}.
\end{thm}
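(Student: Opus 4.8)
The plan is to treat the displayed optimization problem as a linear program --- hence convex --- and to invoke the Karush--Kuhn--Tucker (KKT) conditions, which for such a problem are both necessary and sufficient for global optimality. First I would form the Lagrangian with a multiplier $v_j\ge 0$ for each of the four structural inequalities $a_{11}x_1+a_{12}x_2>0$, $a_{21}x_3+a_{22}x_4>0$, $a_{31}x_1+a_{32}x_3>0$, $a_{41}x_2+a_{42}x_4>0$, a multiplier $\mu_i\ge 0$ for each bound $x_i\ge 0$, and a free multiplier $\lambda$ for $\sum_i x_i=1$, and then record the four stationarity equations $\partial_{x_i}L=0$, the complementary-slackness relations $v_j\cdot(\text{$j$-th structural slack})=0$ and $\mu_i x_i=0$, together with primal and dual feasibility. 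The hypothesis $a_{11}a_{41}a_{32}a_{22}-a_{31}a_{12}a_{21}a_{42}\ne 0$ is exactly the non-vanishing of the determinant that governs solvability of the resulting linear system for the multipliers (up to sign it is the determinant of the $4\times 4$ coefficient matrix of the four structural constraints), so the multipliers $v_1,\dots,v_4$ are pinned down as explicit rational functions of the $a_{ij}$ and $c_k$; carrying out the elimination yields the closed forms in \pref{eq: kkt_multiplier_solution_1a}--\pref{eq: kkt_multiplier_solution_3a} together with the analogous expression for $v_4$.

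Next I would posit the candidate primal point $x_1=x_3=0$, $x_2+x_4=1$, $x_2,x_4\ge 0$ and verify that it extends to a full KKT tuple. At such a point the third structural constraint and the bounds $x_1\ge 0$, $x_3\ge 0$ are active; when $x_2,x_4>0$ the bounds on $x_2,x_4$ are slack, forcing $\mu_2=\mu_4=0$, and the stationarity equations for $x_2$ and $x_4$ fix $\lambda$ and their mutual consistency, while those for $x_1,x_3$ determine $\mu_1,\mu_3$, which must be checked nonnegative. The only remaining requirements are complementary slackness for the first, second and fourth structural constraints evaluated there, namely $v_1 a_{12}x_2=0$, $v_2 a_{22}x_4=0$ and $v_4(a_{41}x_2+a_{42}x_4)=0$. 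The four bulleted alternatives in the statement are precisely the disjoint ways of fulfilling these: in the first two cases a further variable is driven to $0$ so the optimum is a vertex ($x_4=1$, resp.\ $x_2=1$) and the collapsed products vanish automatically; in the third case $v_4=0$ (or $a_{41}a_{42}=0$) makes the fourth relation trivial and leaves a whole edge admissible; and in the fourth case $a_{41}a_{42}<0$ with $v_4\ne 0$ pins the edge $\{x_2+x_4=1\}$ to the interior point where $a_{41}x_2+a_{42}x_4=0$. In every case the conclusion $x_1=x_3=0$, $x_2+x_4=1$, $x_2,x_4\ge 0$ holds.

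Having exhibited a KKT tuple, I would close by appealing to sufficiency of the KKT conditions for convex programs to conclude global optimality, which is the assertion. I expect the obstacle to be twofold. The first is the \emph{strict} inequalities: I would handle them by solving the problem over the closed region $\{a_{11}x_1+a_{12}x_2\ge 0,\dots\}$ and then verifying that, under the stated hypotheses, the structural slacks at the constructed optimum are strictly positive, so the optimum of the closed relaxation already lies in the open feasible set and hence solves the original problem. The second is the case analysis itself: in each of the four bullets one must simultaneously confirm $v_1,\dots,v_4\ge 0$, $\mu_1,\mu_3\ge 0$, and consistency of the stationarity equations for $x_2$ and $x_4$ --- this is where the remaining degree of freedom ($x_2+x_4=1$ rather than a single vertex) enters, and where careful bookkeeping of the signs of the $a_{ij}$ and of the sign hypothesis $a_{41}a_{42}<0$ is essential.
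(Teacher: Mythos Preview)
Your approach is essentially the same as the paper's: set up the Lagrangian, write the KKT stationarity and complementary-slackness conditions, solve the $4\times 4$ stationarity system for the structural multipliers $v_1,\dots,v_4$ (this is where the determinant hypothesis enters), and then read off the case analysis on the complementary-slackness equations $v_1 a_{12}x_2=0$, $v_2 a_{22}x_4=0$, $v_4(a_{41}x_2+a_{42}x_4)=0$ at the candidate point $x_1=x_3=0$. Your matching of the four bullets to the vertex $x_4=1$, the vertex $x_2=1$, the whole edge, and the interior point determined by $a_{41}x_2+a_{42}x_4=0$ is exactly what the paper does (modulo the $1\leftrightarrow 2$, $3\leftrightarrow 4$ index swap between the appendix statement and the case the paper actually writes out in its proof).

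One small inconsistency in your write-up: you first say the $v_j$ are pinned down by the $4\times 4$ stationarity system --- which implicitly sets all bound multipliers to zero and absorbs $\lambda$ into the $c_k$, as the paper does --- and then you propose to use the stationarity equations for $x_1,x_3$ \emph{again} to determine $\mu_1,\mu_3$. These are the same four equations, so you cannot extract both the $v_j$ and additional $\mu_1,\mu_3,\lambda$ from them; the paper's route is simply to fix all bound multipliers at zero from the outset and live with that choice. Your extra care about strict-versus-nonstrict inequalities and about checking dual feasibility ($v_j\ge 0$, $\mu_i\ge 0$) is well placed --- the paper dismisses these with ``the above conditions can easily be satisfied'' --- but note that the paper's own proof does not actually carry out those verifications, so your closing paragraph already goes somewhat beyond what the published argument establishes.
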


\begin{proof}
The goal of the oracle is to ensure that the robust and correlated equilibrium for each player is to follow the instruction of the oracle with probability $1$, with a minimal amount of cost. 

Let $x_1 = \Pr[(G_d,G_a)], x_2 = \Pr[(G_d,E_a)], x_3 = \Pr[(E_d,G_a)]$, and $x_4 = \Pr[(E_d,E_a)]$. The cost is evaluated using $C = \sum_{i = 1}^4 c_i x_i$. We want to solve the following optimization problem. 

\begin{align}
\begin{array}{r@{\quad}l@{}l@{\quad}l}
\quad\min& \sum_{i = 1}^4 c_i x_i,\\
\text{s.t.,} & \text{any correlated equilibrim is a feasible solution of this LP}\\
& x_1 + x_2 + x_3 + x_4 = 1.
\end{array}
\end{align}

This optimization problem is further expressed as
\begin{align}
\begin{array}{r@{\quad}l@{}l@{\quad}l}
\quad\min& \sum_{i = 1}^4 c_i x_i,\\
\text{s.t.,} & a_{11}x_1 + a_{12}x_2 > 0\\
& a_{21}x_3 + a_{22}x_4 > 0\\
&a_{31}x_1 + a_{32}x_3 > 0\\
&a_{41}x_2 + a_{42}x_4 > 0\\
& x_1 + x_2 + x_3 + x_4 = 1\\
& x_1, x_2, x_3, x_4 \ge 0\\
\end{array}
\end{align}

This is a convex optimization problem with both inequality constraints and equality constraints. The optimal solution must meet the KKT condition. By rewriting the problem as:

\begin{align}
\begin{array}{r@{\quad}l@{}l@{\quad}l}
\quad\min& f(\mathbf{x})=\sum_{i = 1}^4 c_i x_i,\\
\text{s.t.,} & g_1(\mathbf{x}) = -(a_{11}x_1 + a_{12}x_2) \le 0\\
& g_2(\mathbf{x}) = -(a_{21}x_3 + a_{22}x_4) \le 0\\
& g_3(\mathbf{x}) = -(a_{31}x_1 + a_{32}x_3) \le 0\\
& g_4(\mathbf{x}) = -(a_{41}x_2 + a_{42}x_4) \le 0\\
& g_5(\mathbf{x}) = -x_1 \le 0, g_6(\mathbf{x}) = -x_2 \le 0\\
& g_7(\mathbf{x}) = -x_3 \le 0, g_8(\mathbf{x}) = -x_4 \le 0\\
& h(\mathbf{x}) = x_1 + x_2 + x_3 + x_4 - 1 = 0
\end{array}
\end{align}

We can form its Lagrangian function $L(\cdot)$ as:

\begin{equation}
    L(\mathbf{x}, \mu_1,\dots,\mu_8, \lambda) = f(\mathbf{x}) + \sum^{8}_{k=1} \mu_k g_k(\mathbf{x}) + \lambda h(\mathbf{x}), \\
    \mu_k \ge 0.
\end{equation}
where $\mu_k,\lambda$ are are Lagrange multipliers. We can further give its KKT condition for any optimal solution $\mathbf{x}^*$ and its corresponding multipliers. First, By applying $\frac{\partial L(\mathbf{x}, \mu_1,\dots,\mu_8,\lambda)}{\partial x_i} = \frac{\partial f}{\partial x_i} + \sum^{8}_{k=1}{\mu_k \frac{\partial g_k}{\partial x_i}} + \lambda\frac{\partial h_k}{\partial x_i} = 0$ and $\mu_k g_k(\mathbf{x})=0$, we get following new conditions:

\begin{align}
\begin{array}{r@{\quad}l@{}l@{\quad}l}
& c_1 - \mu_1 a_{11} - \mu_3 a_{31} -\mu_5 + \lambda = 0 \\
& c_2 - \mu_1 a_{12} - \mu_4 a_{41} -\mu_6 + \lambda = 0 \\
& c_3 - \mu_2 a_{21} - \mu_3 a_{32} -\mu_7 + \lambda = 0 \\
& c_4 - \mu_2 a_{22} - \mu_4 a_{42} -\mu_8 + \lambda = 0 \\
& \mu_1(a_{11}x_1 + a_{12}x_2) = 0\\
& \mu_2(a_{21}x_3 + a_{22}x_4) = 0\\
& \mu_3(a_{31}x_1 + a_{32}x_3) = 0\\
& \mu_4(a_{41}x_2 + a_{42}x_4) = 0\\
& \mu_5 x_1 = 0, \mu_6 x_2 = 0 \\
& \mu_7 x_3 = 0, \mu_8 x_4 = 0 \\
& \sum^4_{i=1}x_i = 1, (x_i \ge 0) \\
& \mu_j \ge 0 (j = 1,\dots,8)
\end{array}
\end{align}

Let's first consider a special case of setting $\mu_5=\mu_6=\mu_7=\mu_8=0$, $\lambda$ equals some fixed constant and was absorbed to $c_1,\dots,c_4$, and $\mu_1\mu_2\mu_3\mu_4 \ne 0$, then we can get $\mu_1,\mu_2,\mu_3,\mu_4$ by solving $\mathbf{A}\vec{\mu}=\mathbf{c}$:

\begin{equation}
    \begin{bmatrix}
a_{11} & 0 & a_{31} & 0 \\
a_{12} & 0 & 0 & a_{41} \\
0 & a_{21} & a_{32} & 0 \\
0 & a_{22} & 0 & a_{42}
\end{bmatrix}
\begin{bmatrix} \mu_1 \\ \mu_2 \\ \mu_3 \\ \mu_4 \end{bmatrix}
=
\begin{bmatrix} c_1 \\ c_2 \\ c_3 \\ c_4 \end{bmatrix}
\end{equation}
The equation has unique solution of:

\begin{align}
& v_1 = \frac{a_{31} a_{41} a_{21} c_4 - a_{31} a_{41} a_{22} c_3 - a_{31} a_{21} a_{42} c_2 + a_{41} a_{32} a_{22} c_1}{det(\mathbf{A})} \label{eq: kkt_multiplier_solution_1}\\
& v_2 = \frac{a_{11} a_{41} a_{32} c_4 - a_{11} a_{32} a_{42} c_2 - a_{31} a_{12} a_{42} c_3 + a_{12} a_{32} a_{42} c_1}{det(\mathbf{A})} \label{eq: kkt_multiplier_solution_2}\\
& v_3 = \frac{-a_{11} a_{41} a_{21} c_4 + a_{11} a_{41} a_{22} c_3 + a_{11} a_{21} a_{42} c_2 - a_{12} a_{21} a_{42} c_1}{det(\mathbf{A})} \label{eq: kkt_multiplier_solution_3}\\
& v_4 = \frac{a_{11} a_{32} a_{22} c_2 - a_{31} a_{12} a_{21} c_4 + a_{31} a_{12} a_{22} c_3 - a_{12} a_{32} a_{22} c_1}{det(\mathbf{A})} \label{eq: kkt_multiplier_solution_4}
\end{align}
when $det(\mathbf{A}) = a_{11} a_{41} a_{32} a_{22} - a_{31} a_{12} a_{21} a_{42} \ne 0$. Since the range of $a_{ij}$ is arbitrary, the above conditions can easily be satisfied to obtain feasible solutions for $\mu_i$ satisfying $v_i \ge 0$. After acquiring $v_i$, we can further obtain $\mathbf{x}$ by solving $\mathbf{A}^\top\mathbf{x}=\mathbf{0}$:

\begin{equation}
    \begin{bmatrix}
v_1 a_{11} & v_1 a_{12} & 0 & 0 & 0\\
0 & 0 & v_2 a_{21} & v_2 a_{22} & 0\\
v_3 a_{31} & 0 & v_3 a_{32} & 0 & 0\\
0 & v_4 a_{41} & 0 & v_4 a_{42} & 0\\
1 & 1 & 1 & 1 & 1
\end{bmatrix}
\begin{bmatrix} x_1 \\ x_2 \\ x_3 \\ x_4 \\ -1 \end{bmatrix}
=
\begin{bmatrix} 0 \\ 0 \\ 0 \\ 0 \\ 0\end{bmatrix}
\end{equation}\label{eq:linear_system_x}
The linear system admits solutions in numerous scenarios, contingent upon the ranges of $v_i a_{ij}$. Broadly, there exist two categories of solution sets:
\begin{itemize}[leftmargin=*]
    \item 0 zero: $x_i \ne 0$
    \item at least one zero: $x_1 x_2 x_3 x_4 = 0$
    \begin{itemize}
        \item 3 zeros: $x_i = 1, others=0$
        \item 2 zeros: $x_i+x_j=1, others=0$
        \item 1 zero: $x_i = 0, others=f(a_{ij})$
    \end{itemize}
\end{itemize}
Filtering feasible solutions satisfying $x_i \ge 0 $ is straightforward. To be more specific, let's consider the existence of a solution for the case $x_2=x_4=0, x_1+x_3=1$, where equation \ref{eq:linear_system_x} simplifies as:

\begin{align}
    & v_1 a_{11} \cdot x_1 = 0 \notag \\
    & v_2 a_{21} \cdot x_3 = 0 \notag \\
    & v_3 a_{31} \cdot x_1 + v_3 a_{32} \cdot x_3 = 0 \notag \\
    & x_1 + x_3 = 1 \notag
\end{align}

There are three cases for the solution of $x_1$ and $x_3$. For each case, the solution exists when the coefficients satisfy the corresponding constraints:

 \begin{itemize}[leftmargin=*]
     \item $\mathbf{x_1=0, x_3=1}$: $v_2 a_{21}=0, v_3 a_{32}=0$
     \item $\mathbf{x_1=1, x_3=0}$: $v_1 a_{11}=0, v_3 a_{31}=0$
     \item $\mathbf{x_1 x_3 \ne 0}$: in this case, the condition $v_1 a_{11} = v_2 a_{21} = 0$ must hold and if:
        \begin{itemize}
            \item $\mathbf{a_{31}a_{32}v_3 = 0}$: then $v_3 a_{31} = 0$ should hold when $a_{32} = 0$, $v_3 a_{32} = 0$ should hold when $a_{31} = 0$. The number of solution is infinite many.
            \item $\mathbf{a_{31}a_{32}v_3 \ne 0}$: then the solution is
            $$
            x_1 = \frac{a_{32}}{a_{32}-a_{31}}, x_3 = \frac{-a_{31}}{a_{32}-a_{31}}
            $$ when $a_{31} < 0 < a_{32}$ or $a_{32} < 0 < a_{31}$ holds.
        \end{itemize}
 \end{itemize}
 Since the range of $a_{ij}$ is arbitrary, the above conditions are easy to be satisfied.

\end{proof}

\section{Applications}\label{sec: applications_appendix}
Now we introduce some examples as illustrated in \pref{fig:attack_difficulty} concerning $0$-equilibrium, i.e., the attacker has no motivation to attack. Let the randomization mechanism be the protection mechanism, and DLG mechanism be the attacking mechanism. The parameters are set as: $x = 2, y = 1,  p = 1/2.$ The attacking cost for DLG is $C_a$, and is normalized as $E(C_a) = 1 - \frac{1}{(C_a)^y} = 1 - 1/C_a$. For randomization mechanism, the protection cost is $C(\Delta_k) = (\Delta_k)^x = (\Delta_k)^2$.










\begin{figure}
 \centering
 \subfigure[If the distortion extent of the defender is rather large, then the payoff of the attacker is always negative once he/she attacks]{
  \label{fig:a}
  \includegraphics[scale=0.67]{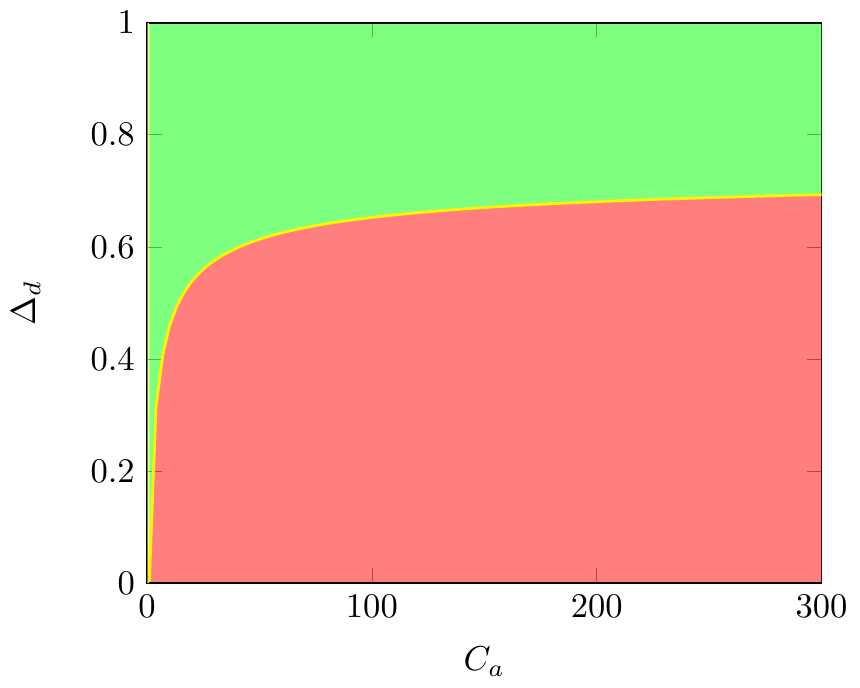}
 }
 \subfigure[For any distortion extent, the payoff of the attacker is always negative for any $C_a\ge 1$ and any $\Delta_d$.]{
  \label{fig:b}
  \includegraphics[scale=0.67]{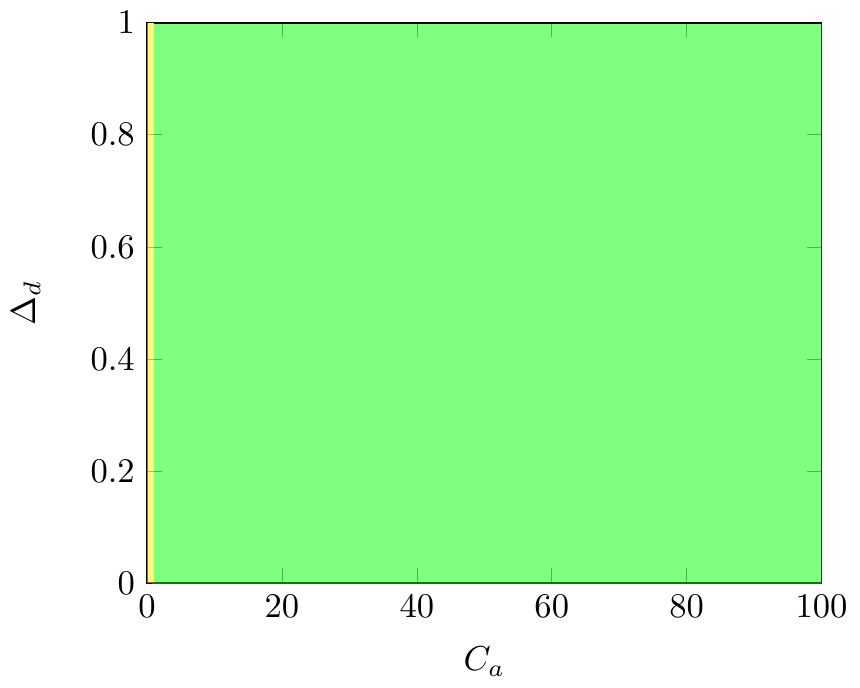}
 }
 \caption{
 Let $\epsilon = 0.0001$. $C_a(\in [0, 1/\epsilon] = [0,10,000])$ represents the number of rounds used by the attacker for inferring the private data of each defender using the optimization algorithm, $\Delta_d(\in [0,D] = [0,1])$ represents the distance between the exposed parameter information and the original information of the defender. The red area, yellow area and green area represent the region in which the payoff of the attacker is positive, zero and negative separately.
 \pref{fig:a}: Note that $x = 0.5\in (0,1)$ corresponds to protection mechanism with low efficiency such as HE. If $\Delta_d$ is rather large, then $\underline U_a(C_a, \Delta_d) <0$ for any $C_a\ge 1$. Therefore, the optimal attacking extent is $C_a^* = 0$, achieving the $0$-equilibrium in FLPG from our definition. \pref{fig:b}: $\underline U_a(C_a, \Delta_d)<0$ for any $C_a\ge 1$ and $\Delta_d\in [0,1]$. The robust equilibrium is a $0$-equilibrium in FLPG. It implies that the optimal strategy for the attacker is not to attack.
 }
 \label{fig:attack_difficulty}
 \vspace{-0.5mm}
\end{figure}

\pref{fig:a} provides a scenario when $0$-equilibrium in FLPG is achieved. Note that $x = 0.5\in (0,1)$ corresponds to protection mechanism with low efficiency such as the HE mechanism. When the protection extent $\Delta_d$ is rather large($\Delta_d\in [0,D] = [0,1]$), $\underline U_a(C_a, \Delta_d) <0$ for any $C_a\ge 1$. Therefore, the optimal attacking extent is $C_a^* = 0$, achieving the $0$-equilibrium in FLPG from our definition. The red area, yellow area and green area represent the region in which the payoff of the attacker is positive, zero and negative separately. \pref{fig:b} provides another scenario when $0$-equilibrium in FLPG is achieved. In this case, $\underline U_a(C_a, \Delta_d)<0$ for any $C_a\ge 1$ and $\Delta_d\in [0,1]$. The optimal strategy for the attacker is not to attack. The robust equilibrium is a $0$-equilibrium in FLPG.


\end{document}